\documentclass[11pt]{article}

\usepackage[margin=1in]{geometry}
\usepackage{amsmath,amssymb,amsthm,mathtools}
\usepackage{enumitem}
\usepackage{algorithm,mathtools}
\usepackage{algpseudocode}
\usepackage{authblk}
\usepackage[numbers,sort&compress]{natbib}
\usepackage[colorlinks=true,linkcolor=blue,citecolor=blue,urlcolor=blue]{hyperref}

\algrenewcommand\algorithmicrequire{\textbf{Input:}}
\algrenewcommand\algorithmicensure{\textbf{Output:}}
\newcommand{\kb}[1]{{\color{red}\bf[KB: #1]}}

\newtheorem{theorem}{Theorem}[section]
\newtheorem{proposition}[theorem]{Proposition}
\newtheorem{corollary}[theorem]{Corollary}
\newtheorem{lemma}[theorem]{Lemma}
\newtheorem{definition}[theorem]{Definition}
\newtheorem{assumption}[theorem]{Assumption}
\newtheorem{remark}[theorem]{Remark}

\newcommand{\R}{\mathbb R}
\newcommand{\E}{\mathbb E}
\newcommand{\Prob}{\mathbb P}
\newcommand{\1}{\mathbf 1}
\newcommand{\eps}{\varepsilon}
\newcommand{\KL}{D_{\mathrm{KL}}}
\newcommand{\sgn}{\operatorname{sgn}}
\newcommand{\MCErr}{\operatorname{MCErr}}
\newcommand{\PredErr}{\operatorname{PredErr}}
\newcommand{\CumRes}{\operatorname{CumRes}}
\newcommand{\Err}{\operatorname{Err}}
\newcommand{\CVaR}{\operatorname{CVaR}}
\newcommand{\Var}{\operatorname{Var}}
\newcommand{\Unif}{\operatorname{Unif}}
\newcommand{\SC}{\operatorname{SC}}

\title{Sample Complexity of Multicalibration for Multilevel Properties}
\author[1]{Jiuyao Lu\thanks{This work was done while interning at Amazon.}}
\author[2,3]{Krishnakumar Balasubramanian}
\author[2]{Aleksandr Podkopaev}
\author[2]{Shiva Prasad Kasiviswanathan}

\affil[1]{Department of Statistics and Data Science,
The Wharton School, University of Pennsylvania}
\affil[2]{Amazon}
\affil[3]{Department of Statistics, University of California, Davis}
\date{}

\begin{document}
\maketitle

\begin{abstract}
Calibration requires a predictor to be unbiased after conditioning on its own predictions.  Multicalibration asks for this guarantee simultaneously across a collection of groups.  Many prediction tasks ask for several related features of the same conditional outcome distribution: variance is defined relative to the mean, skewness relative to both mean and variance, and conditional value at risk relative to a quantile.  We study multicalibration for a sequence of $k$ properties in which each property is identifiable once the preceding properties are fixed.  This framework includes Bayes pairs but does not require the properties to arise from a single loss.

For every fixed $k\ge2$, we establish matching upper and lower sample-complexity bounds up to logarithmic factors under regularity conditions.  Even with only polylogarithmically many binary groups, achieving multicalibration error $\varepsilon$ requires $\widetilde{\Omega}(\varepsilon^{-(k+2)})$ samples.  Conversely, for any finite group family $\mathcal G$, we give a randomized learner using $O(\varepsilon^{-(k+2)}+\varepsilon^{-2}\log|\mathcal G|)$ samples.  Thus the sample complexity is $\widetilde{\Theta}(\varepsilon^{-(k+2)})$ for polynomial-size group families.  We instantiate the theory for three canonical examples.
\end{abstract}

\tableofcontents

\section{Introduction}

Calibration asks whether predictions have their intended statistical meaning.  For a predictor of the conditional mean, the basic requirement is that among the individuals who receive prediction \(p\), the average outcome is also \(p\)~\citep{dawid1982well}.  This guarantee can still hide systematic errors: a predictor may underestimate outcomes on one structured subpopulation and compensate by overestimating them elsewhere.  Given a family \(\mathcal G\) of groups, multicalibration requires the predictor to remain calibrated after restricting attention to any one of them~\citep{hebert2018multicalibration}. Multicalibration is useful when the same predictions inform many downstream decisions.  It supports predictors that perform well for many losses, as in omniprediction~\citep{gopalan2021omnipredictors,gopalan2023loss,okoroafor2025near}, and it has found applications in complexity theory and information aggregation~\citep{casacuberta2024complexity,dwork2025supersimulators,collina2025tractable,collina2026collaborative}.
%This raises a basic question about sample complexity: 

Prior work has developed multicalibration for conditional means, moments, and quantiles~\citep{jung2021moment,gupta2021online,jung2023batch,deng2023happymap,noarov2023scope}.  In many applications, however, several related features of the conditional outcome law must be predicted together, and some are defined relative to others.
Quantiles describe coverage and tail events, dispersion measures describe uncertainty, and measures of tail risk summarize rare outcomes with potentially large costs.  For example, a three-level example arises in hospital capacity planning, where a model may need to predict a patient's expected length of stay, the variability around that expectation, and the degree of right-tail asymmetry caused by occasional very long stays.  The model can output a prediction vector \(p=(m,\sigma^2,\gamma)\), representing the conditional mean, variance, and skewness of length of stay.  These quantities have a natural sequential structure.  At the first level, the mean prediction \(m\) separates patient groups whose stays are centered at different values.  At the second level, after the mean has been fixed, the variance prediction \(\sigma^2\) describes variability around that mean, so that differences in expected length of stay are not mistaken for uncertainty.  At the third level, after the mean and variance have been fixed, the skewness prediction \(\gamma\) describes asymmetry relative to that center and scale.  The third coordinate is important because two patient groups may have the same expected stay and the same overall variability but very different tail behavior: one may have roughly symmetric fluctuations around its mean, while the other may usually be discharged near the expected date but occasionally require a much longer hospitalization.  Calibrating skewness without also fixing the mean and variance could pool groups with different centers or scales, causing those lower-order differences to appear spuriously as tail asymmetry.  Three-level multicalibration therefore ensures that each prediction describes a distinct feature of the conditional outcome distribution: its location, its dispersion around that location, and the shape of its tail after location and dispersion have been accounted for.

Formally, let \(\mathcal{M}\) be a class of probability laws on the outcome space \(\mathcal{Y}\), and let \(\Gamma_j:\mathcal{M}\to\mathbb{R}\) denote the \(j\)-th distributional property of interest.  For each context \(x\), write \(\mu_x=\mathcal{L}(Y\mid X=x)\) for the conditional law of the outcome.  Calibrating the properties \(\Gamma_1,\Gamma_2,\Gamma_3\) separately need not yield a jointly coherent predictor.  Indeed, consider the oracle scalar predictor \(f_j(x)=\Gamma_j(\mu_x)\).  Conditioning on the event \(f_j(X)=a\) pools the conditional laws \(\mu_x\) into the mixture
\[
\bar{\mu}_a
=
\mathcal{L}(Y\mid f_j(X)=a)
=
\int \mu_x\,d\mathbb{P}(x\mid f_j(X)=a).
\]
Although every component of this mixture satisfies \(\Gamma_j(\mu_x)=a\), calibration additionally requires \(\Gamma_j(\bar{\mu}_a)=a\).  This implication holds only when the level set $\Gamma_j^{-1}(a)
=
\{\mu\in\mathcal{M}:\Gamma_j(\mu)=a\}$ is closed under mixtures.  Thus, if \(\Gamma_j\) lacks convex level sets, even an oracle predictor can fail to be calibrated after conditioning on its own output~\citep{noarov2023scope}.  In a multilevel setting, the failure may arise because distributions that agree on a higher-level property \(\Gamma_j\), but differ in lower-level properties \(\Gamma_1,\ldots,\Gamma_{j-1}\), are pooled together, and these lower-level differences change the value of \(\Gamma_j\) under mixing.

Variance provides a concrete example of this failure.  Suppose
\[
\Prob[(X,Y)=(1,1)]
=
\Prob[(X,Y)=(2,2)]
=
\frac12,
\]
and consider the groups
\[
g_1(x)=\1\{x=1\},
\qquad
g_2(x)=\1\{x=2\},
\qquad
g_{\mathrm{all}}(x)\equiv1.
\]
The oracle conditional-variance predictor outputs zero at both contexts because the outcome is deterministic conditional on \(X\).  It is calibrated within \(g_1\) and \(g_2\), but not within \(g_{\mathrm{all}}\).  Its single prediction cell pools the two outcomes, whose variance is \(1/4\).  In fact, \(g_1\) and \(g_2\) force any variance-only predictor to output zero at both contexts, whereas calibration within \(g_{\mathrm{all}}\) would require their common prediction to be \(1/4\).  Thus no scalar variance predictor can be multicalibrated with respect to all three groups.  Predicting the mean together with the variance separates the two contexts through their mean coordinates and removes this conflict.

This motivates predicting the ordered vector $f(x)=\bigl(f_1(x),f_2(x),f_3(x)\bigr)$ and calibrating its coordinates jointly.  For simplicity, suppose that there are residual functions \(R_1,R_2,R_3\) such that \(R_1\) identifies \(\Gamma_1\), \(R_2\) identifies \(\Gamma_2\) once the value of \(\Gamma_1\) is fixed, and \(R_3\) identifies \(\Gamma_3\) once the values of both \(\Gamma_1\) and \(\Gamma_2\) are fixed.  That is,
\[
\mathbb{E}_{\mu}\!\left[R_1(p_1,Y)\right]=0
\quad\Longleftrightarrow\quad
p_1=\Gamma_1(\mu),
\]
and, conditional on \(p_1=\Gamma_1(\mu)\),
\[
\mathbb{E}_{\mu}\!\left[R_2(p_1,p_2,Y)\right]=0
\quad\Longleftrightarrow\quad
p_2=\Gamma_2(\mu),
\]
while, conditional on \(p_1=\Gamma_1(\mu)\) and \(p_2=\Gamma_2(\mu)\),
\[
\mathbb{E}_{\mu}\!\left[R_3(p_1,p_2,p_3,Y)\right]=0
\quad\Longleftrightarrow\quad
p_3=\Gamma_3(\mu).
\]
Three-level multicalibration then requires these residuals to have mean zero on every relevant subgroup and within each prediction cell indexed by \(f=(f_1,f_2,f_3)\).
The earlier coordinates therefore prevent contexts with different predictions at earlier levels from being pooled when calibrating a later coordinate, allowing a property that is not calibratable in isolation to become calibratable relative to the preceding predictions.

%We study the sample complexity when this dependence continues for an arbitrary fixed number of levels.  We consider a \(k\)-level property \(\Gamma=(\Gamma_1,\ldots,\Gamma_k)\), with \(k\ge2\), for which level \(j\) is identifiable after the values of levels \(1,\ldots,j-1\) have been fixed.  The residual at level \(j\) may therefore depend on the entire preceding prediction.  Calibration is imposed jointly: we condition on the entire prediction vector and require the residuals at all levels to be calibrated on every group.  This differs from collecting \(k\) unrelated scalar predictors, whose calibration conditions may use incompatible buckets and do not account for how an error at one level changes the residuals at later levels.

This motivates us to study the sample complexity of multilevel multicalibration under an ECE-type metric defined from these residuals, where ECE denotes expected calibration error~\citep{naeini2015obtaining,collina2026sample}.  Let \((\Gamma_1,\ldots,\Gamma_k)\) be an ordered collection of distributional properties such that, for each level \(j\in[k]\), the property \(\Gamma_j\) is identifiable conditional on the preceding properties \((\Gamma_1,\ldots,\Gamma_{j-1})\).  Given \(n\) independent samples and a target accuracy \(\eps\), we ask when a learner can produce a predictor \(f=(f_1,\ldots,f_k)\) with multicalibration error at most \(\eps\) over every relevant subgroup.
Our goal is to characterize how the required sample size \(n\) scales with \(\eps\) and the number of groups, for each fixed \(k\).
In particular, we ask whether exploiting the conditional structure permits sample complexity comparable to scalar or vector-valued mean multicalibration, or whether the dependence among levels introduces an unavoidable additional statistical cost.

%\citet{noarov2023scope} developed joint multicalibration for pairs in which the second property is elicitable conditional on the first; Bayes pairs form an important special case.  We ask for the sample complexity of this conditional structure under the ECE metric used by \citet{collina2026sample} for scalar and high-dimensional mean prediction. \kb{Don't understand this sentence. Aren't we asking for more?}

\subsection{Main Results}

We prove upper and lower bounds on the sample complexity of multicalibration that match up to logarithmic factors for sequentially conditionally identifiable multilevel properties.  For finitely supported predictors, our multicalibration error is the maximum over groups of the \(\ell_1\) expected calibration error computed by bucketing according to the full prediction vector.  (See Section~\ref{sec:setup} for the definitions of sequential conditional identifiability and multicalibration error for general predictors.)  The lower and upper bounds rely on different regularity assumptions, all of which are satisfied by our canonical examples.

\paragraph{Lower Bound.}
For every sufficiently small \(\eps\), we construct a finite context space, polylogarithmically many binary groups, and a collection of data distributions such that any learner achieving multicalibration error at most \(\eps\) with probability at least \(2/3\) must have sample size
\[
n
=
\Omega\!\left(
\frac{\eps^{-(k+2)}}{\log^{k+2}(1/\eps)}
\right)
=
\widetilde\Omega\!\left(\eps^{-(k+2)}\right).
\]
The result holds against randomized learners.  Since the constructed group family has polylogarithmic size, it lies within every group-size budget of the form \(\eps^{-\kappa}\) with fixed \(\kappa>0\), once \(\eps\) is sufficiently small.

\paragraph{Upper Bound.}
We give an online forecaster and convert its sequence of prediction rules into a randomized predictor.  For a finite family \(\mathcal G\) of groups, the resulting learner uses
\[
n
=
O\!\left(
\eps^{-(k+2)}
+
\eps^{-2}\log|\mathcal G|
\right).
\]
Thus, whenever \(|\mathcal G|\) is polynomial in \(1/\eps\), the upper and lower bounds have the same exponent \(k+2\).  We also give conditions under which the learner can be implemented in polynomial time.

\paragraph{Canonical Properties.}
We verify the conditions of both bounds in three cases: mean and mean absolute deviation; mean, variance, and skewness; and quantile and conditional value at risk.  The two-level examples have sample complexity \(\widetilde\Theta(\eps^{-4})\), while the three-level example has sample complexity \(\widetilde\Theta(\eps^{-5})\).

The results of \citet{collina2026sample} are most closely related to ours.  They establish the exponent \(3\) for scalar mean multicalibration and \(d+2\) for \(d\)-dimensional vector means, and also treat regular scalar elicitable properties.  When \(\Gamma(\mu)=\E_\mu[\psi(Y)]\) for a fixed vector-valued function \(\psi\), vector mean multicalibration applies directly by treating \(\psi(Y)\) as the outcome.  More generally, a multilevel property need not admit such a representation, and a residual at a later level may depend on the earlier predictions.  Our bounds allow this structure.

\subsection{Proof Overview}

\paragraph{Lower Bound.}
The lower bound proceeds by considering the task of determining which distribution generated the samples.  The proof has four steps.  First, we construct a large finite collection of candidate distributions.  The true distribution is an unknown member of this collection, and the samples are drawn from it.  Second, we show that a predictor with small multicalibration error must have small prediction error.  Third, we use such a predictor to determine which member generated the samples.  Finally, we show that determining the true distribution requires many samples.

Fix a resolution parameter \(Q\).  We take the context space to be the discrete grid \(\{0,\ldots,Q-1\}^k\), which contains \(Q^k\) contexts.  We associate each context with an unperturbed vector in the \(k\)-dimensional property space; these vectors form a Cartesian grid.  We assign one hidden sign in \(\{-1,+1\}\) to each context.  According to this sign assignment, we shift the final coordinate of the corresponding property vector by an amount of order \(1/Q\), while leaving the first \(k-1\) coordinates unchanged.  We select \(\exp(\Omega(Q^k))\) sign assignments such that any two of them disagree on a constant fraction of the contexts.  Our regularity assumptions (Assumption~\ref{ass:witness}(i)) ensure that every resulting vector is the property value of some outcome distribution.  Taking the context to be uniform and using the corresponding outcome distribution conditionally on each context produces one candidate data distribution for every sign assignment.  The resulting statistical task is to identify which candidate distribution generated the data, or equivalently, its sign assignment.

The second step converts multicalibration error into prediction error.  Consider a given level, and suppose first that the predictions at all preceding levels are correct.  Under our regularity assumptions (Assumption~\ref{ass:witness}(ii)), the expected residual at the current level has the same sign as the difference between the prediction at that level and its true value, and its magnitude is at least proportional to the absolute value of this difference.  Incorrect predictions at preceding levels can change the current residual, but the change is controlled by the corresponding prediction errors (see Assumption~\ref{ass:witness}(iv)).  Thus, the signed residual controls the current prediction error up to the prediction errors at preceding levels.  The required sign depends on both the prediction and the context, whereas a group can depend only on the context.  For a fixed prediction, however, this sign is a threshold function of one context coordinate.  We construct a common collection of only polylogarithmically many binary groups whose linear combinations approximate all the threshold functions needed in the proof (see Lemma~\ref{lem:walsh}).  Multiplying the residual by such an approximation produces, up to the approximation error, a linear combination of group-weighted residuals.  Each of these residual terms is controlled by the multicalibration error (see Lemmas~\ref{lem:signed-weights} and~\ref{lem:variable-walsh-tv}).  Since the sum of the absolute values of the coefficients is \(O(\log Q)\), the signed residual, and hence the current prediction error, is controlled by \(O(\log Q)\) times the multicalibration error, together with the prediction errors at preceding levels.

Because a residual at level \(j\) may depend on all preceding predictions, we apply this argument successively.  After controlling the errors in the first \(j-1\) coordinates, the Lipschitz dependence of the level-\(j\) residual on preceding predictions allows us to control the error in coordinate \(j\).  This gives bounds for the first \(k-1\) coordinates.  For the final coordinate, we additionally separate predictions according to whether they lie inside or outside disjoint neighborhoods of the unperturbed property vectors.  Threshold functions control the predictions outside these neighborhoods, while the group containing all contexts controls those inside.  Altogether, multicalibration error at most \(\eps\) forces the expected \(\ell_1\) distance between the prediction and the true property vector to be \(O(\eps\log Q)\), where the expectation is over a uniformly random context and the predictor's randomization (see Proposition~\ref{prop:lower-anticoarsening}).

The third step uses this prediction guarantee to recover the sign assignment.  Because any two candidate sign assignments disagree on a constant fraction of the contexts, their associated property vectors are separated by order \(1/Q\) on average.  Given a possibly randomized predictor, we take its mean output at every context and select the sign assignment whose associated property vectors are closest to these mean predictions.  If the average prediction error is smaller than a sufficiently small constant multiple of \(1/Q\), the true sign assignment is the unique closest one (see Lemmas~\ref{lem:lower-separation} and~\ref{lem:lower-decoding}).  Consequently, any learner that produces a predictor with multicalibration error at most \(\eps\) also provides a way to determine the true distribution whenever
\[
\eps\log Q \lesssim \frac1Q.
\]

Finally, our regularity assumptions (Assumption~\ref{ass:witness}(v)) ensure that the Kullback--Leibler divergence between nearby outcome distributions is at most a constant times the squared distance between their property vectors.  Since the candidates differ by only \(O(1/Q)\) at each context, the pairwise Kullback--Leibler divergence between their one-sample data distributions is \(O(1/Q^2)\), and for \(n\) independent samples it is \(O(n/Q^2)\) (see Lemma~\ref{lem:lower-kl}).  On the other hand, the logarithm of the number of candidates is \(\Omega(Q^k)\).  Using Fano's inequality, we show in Proposition~\ref{prop:lower-resolution} that identifying the true candidate with constant success probability requires
\[
\frac{n}{Q^2}\gtrsim Q^k,
\qquad\text{and hence}\qquad
n=\Omega(Q^{k+2}).
\]
Choosing \(Q\) at the largest scale allowed by \(\eps\log Q\lesssim 1/Q\), namely
\[
Q\asymp\frac{1}{\eps\log(1/\eps)},
\]
gives
\[
n
=
\Omega\left(
\frac{\eps^{-(k+2)}}{\log^{k+2}(1/\eps)}
\right).
\]
The constructed family contains only polylogarithmically many groups, so it lies within every polynomial group-size budget for sufficiently small \(\eps\).

\paragraph{Upper Bound.}
Our upper-bound construction proceeds through an online forecasting problem.  Fix a finite grid \(\mathcal P_Q\) of \(k\)-dimensional prediction vectors, where \(Q\) controls the resolution and \(|\mathcal P_Q|=O(Q^k)\).  At round \(t\), the forecaster uses the history from the preceding rounds to construct a randomized prediction rule
\[
\pi_t:\mathcal X\to\Delta(\mathcal P_Q),
\]
where \(\Delta(\mathcal P_Q)\) denotes the set of distributions over the grid.  The current context \(X_t\) is then revealed, and the forecaster predicts the distribution \(\pi_t(\cdot\mid X_t)\).  After observing the outcome \(Y_t\), it evaluates the residuals associated with this distribution.
We first control the empirical analogue of multicalibration error accumulated over these online rounds.

Given a dataset consisting of \(T\) i.i.d. context--outcome pairs, we run this online forecasting procedure for \(T\) rounds, using the \(t\)-th pair on round \(t\).  The context \(X_t\) is revealed before the prediction and the outcome \(Y_t\) afterward.  This run produces \(T\) prediction rules \(\pi_1,\ldots,\pi_T\).  We return their average:
\[
\Pi(x)=\frac1T\sum_{t=1}^T\pi_t(\cdot\mid x).
\]
An online-to-batch argument transfers the empirical guarantee for the online transcript to a population multicalibration guarantee for \(\Pi\).

The empirical objective contains many calibration requirements.  For every group \(g\), point \(p\in\mathcal P_Q\), and level \(j\), it records the cumulative level-\(j\) residual associated with \(g\), with the residual on round \(t\) weighted by the probability \(\pi_t(p\mid X_t)\) assigned to \(p\).  For each group, the empirical error sums the absolute values of these cumulative residuals over all pairs \((p,j)\).  We can express this sum as a maximum over sign arrays \(s=(s_{p,j})\), with one sign assigned to each pair \((p,j)\).  After also taking the maximum over groups, the empirical error becomes the maximum of a collection of signed residual objectives indexed by pairs \((g,s)\).  Controlling all of these objectives is therefore equivalent to controlling the empirical multicalibration error.

The forecaster assigns a weight to each objective at the beginning of every round.  These weights depend on the signed residuals accumulated on previous rounds.  If a calibration violation has grown large for some group and some pattern of signs across prediction cells and levels, the corresponding objective receives greater weight, so the next prediction places more emphasis on reducing that violation.  Objectives whose cumulative violations remain small receive less relative emphasis.  Once the weights have been fixed, every candidate distribution on \(\mathcal P_Q\) has a weighted expected residual under each possible conditional outcome law.  The forecaster chooses among these distributions in a minimax manner by minimizing the largest such weighted expected residual over all admissible outcome laws.

Formally, we use the framework of online learning with expert advice.  We treat each objective as an expert and its signed residual on each round as the expert's gain.  The exponential weights algorithm assigns weights to the experts according to their cumulative gains, producing the objective weights described above.  Its regret guarantee bounds the largest cumulative gain of any expert by the sum of the expected gains under these weights, plus a regret term (Lemma~\ref{lem:upper-exponential-weights}).  Because the empirical multicalibration error is exactly the largest expert gain divided by \(T\), it remains to control these expected gains.  Conditional on the past and the current context, the expected gain on each round is bounded by the value of the minimax problem solved on that round.

We bound this value by reversing the order of play using Sion's minimax theorem (Lemma~\ref{lem:upper-minimax-value}).  In the reversed game, an admissible outcome law \(\mu\) is fixed before the prediction is selected.  For high-dimensional mean prediction, it suffices to choose a grid point close to the revealed outcome vector~\citep{noarov2025high,collina2026sample}.  In our setting, the residual need not be the difference between a prediction and an outcome, and a residual at a later level may depend on the entire preceding prediction.  Instead, we assume that, for every \(\mu\), there is a single grid point at which all expected residuals are simultaneously \(O(1/Q)\) (Assumption~\ref{ass:upper-analytic}(iii)).  Choosing this point in the reversed game makes every weighted objective \(O(1/Q)\).  The minimax theorem then guarantees that, in the original order of play, there is a distribution on \(\mathcal P_Q\) whose worst-case weighted expected residual is equally small, even though the forecaster does not know \(\mu\).

Combining this bound with the regret guarantee of exponential weights, Theorem~\ref{thm:upper-online} gives
\[
O\left(
\frac1Q+
\sqrt{\frac{Q^k+\log|\mathcal G|}{T}}
\right)
\]
expected empirical multicalibration error, apart from the chosen accuracy for solving the minimax problem.
% Although the number of sign arrays is exponential in \(|\mathcal P_Q|\), only its logarithm enters the regret bound: an expert is indexed by one group and \(k|\mathcal P_Q|\) signs, so the logarithm of the number of experts is \(O(\log|\mathcal G|+Q^k)\).

Finally, we convert the online guarantee into a batch guarantee.  A martingale concentration argument shows that the population multicalibration error of the averaged predictor \(\Pi\) exceeds the empirical multicalibration error of the online transcript by at most a term of the same order (Lemma~\ref{lem:upper-online-to-batch}).  Taking \(Q=\Theta(1/\eps)\) and
\[
T=
O\left(
\eps^{-(k+2)}
+
\eps^{-2}\log|\mathcal G|
\right)
\]
gives the upper bound on sample complexity in Corollary~\ref{cor:upper-sample}.

Although the expert family is exponentially large, its weights can be computed without enumerating all experts.  The minimax step reduces to linear optimization over \(\mathcal M\), as specified in Assumption~\ref{ass:upper-oracle}.  For each of our canonical examples, we implement this oracle in polynomial time (Propositions~\ref{prop:upper-mad}, \ref{prop:upper-mvs}, and~\ref{prop:upper-qc}), yielding the polynomial-time learner in Theorem~\ref{thm:upper-polytime}.

\subsection{Related Work}

\paragraph{Multicalibration and Sample Complexity.}
\citet{hebert2018multicalibration} introduced multicalibration.  Subsequent work developed online algorithms and efficient implementations, and connected multicalibration to multiobjective optimization and omniprediction~\citep{lee2022online,haghtalab2023unifying,garg2024oracle,ghugeimproved}.  Other formulations measure calibration differently.  \citet{gopalan2022low} consider a multicalibration condition that tests residuals after restricting attention to observations whose predictions fall in an interval.  \citet{globus2023boosting} study multicalibration under a weighted \(L_2\) metric and give a boosting algorithm for controlling it.  When \(|\mathcal G|\) is polynomial in \(1/\eps\), the guarantees of \citet{gopalan2022low} and \citet{globus2023boosting} translate into sample complexities of \(\widetilde O(\eps^{-8})\) and \(\widetilde O(\eps^{-10})\) for mean ECE, respectively~\citep{collina2026sample}.

\citet{gopalan2023swap} introduce swap multicalibration, a stronger condition in which the group function used to test residual bias may vary across prediction values.  \citet{luo2025swap} improve the online rates and sample-complexity upper bounds for swap multicalibration of conditional means when residual bias is tested using bounded linear functions of the context.

The weaker notion of calibrated multiaccuracy asks a predictor to be globally calibrated and multiaccurate with respect to a class of groups, meaning that its residuals are uncorrelated with every group function in the class~\citep{casacuberta2025global}.  \citet{gibbs-tibshirani-omnipredict} establish an \(\Omega(\eps^{-5/2})\) sample-complexity lower bound for learning deterministic predictors of conditional means under this requirement.

\citet{collina2026sample} study batch sample complexity for scalar and vector means, weighted error metrics, and regular scalar elicitable properties.  \citet{collina2026optimal} study the online setting, establish sharp lower bounds, and develop the subsampled Walsh family used in our group construction.

A related but distinct statistical question is whether empirical multicalibration error reliably estimates population multicalibration error throughout a chosen predictor class.  \citet{shabat2020uniform} establish bounds depending on the size or graph dimension of the class, while \citet{rosenberg2022exploration} connect this question to standard ERM generalization bounds.  This guarantee concerns estimation within the class, not the existence of a low-error predictor in it.

\paragraph{Calibration of Distributional Properties.}
\citet{jung2021moment} introduce mean-conditioned moment multicalibration, and \citet{gupta2021online} give an online algorithm for this notion.  Multivalid prediction intervals have also been studied in online and batch settings~\citep{gupta2021online,bastani2022practical,jung2023batch}.  \citet{noarov2023scope} connect property multicalibration to elicitation and identification.  They characterize which continuous scalar properties are sensible for calibration under mild conditions and give an algorithm for jointly multicalibrating two-level conditionally elicitable properties, including Bayes pairs.
\citet{hu2025efficient} give an oracle-efficient online algorithm for swap multicalibration of elicitable properties.  We study the sample complexity of joint calibration for an arbitrary fixed number of conditional levels, including sequences that are not generated by a single loss and its Bayes risk.

\paragraph{High-Dimensional Prediction.}
Recent work studies calibration and related unbiasedness guarantees for vector-valued predictions, including computational questions and online rates that depend on dimension~\citep{gopalan2024multiclass,peng2025high,fishelson2026high,noarov2025high}.  \citet{noarov2025high} give an online algorithm for predicting high-dimensional states such that, on every event in a specified family, the accumulated prediction error is small in every coordinate.  These events may depend on the past transcript, the current context, and the prediction.  In our setting, each coordinate predicts one property in an ordered sequence rather than one component of a state vector, and the residual for a later property may depend on the preceding predictions.

\paragraph{Organization.}
Section~\ref{sec:setup} defines sequential conditional identifiability and the multicalibration error used throughout.  Sections~\ref{sec:lower} and~\ref{sec:upper} prove the lower and upper bounds, respectively.  Section~\ref{sec:instantiations} verifies the assumptions in our three canonical examples.

\section{Basic Setup}\label{sec:setup}

\subsection{Sequential Conditional Identifiability}

Let \(\mathcal Y\) be a standard Borel outcome space and let \(\mathcal M\) be a class of probability measures on \(\mathcal Y\).  Fix an integer \(k\ge2\) and a compact rectangular prediction space
\[
\mathcal P
=
\mathcal P_1\times\cdots\times\mathcal P_k
\subset\R^k.
\]
Throughout, \(k\) is treated as fixed.  Constants suppressed by asymptotic
notation may depend on \(k\), as well as on the stated regularity parameters.
A prediction value is denoted \(p=(p_1,\ldots,p_k)\).  For \(j\in[k]\), write
\[
p_{<j}:=(p_1,\ldots,p_{j-1}),
\qquad
\mathcal P_{<j}:=\mathcal P_1\times\cdots\times\mathcal P_{j-1},
\]
with \(\mathcal P_{<1}\) interpreted as a one-point set.

\begin{definition}[Sequentially Conditionally Identifiable Property]
A \(k\)-level property is a map
\[
\Gamma=(\Gamma_1,\ldots,\Gamma_k):\mathcal M\to\mathcal P.
\]
For \(\mu\in\mathcal M\), write
\[
\Gamma_{<j}(\mu):=(\Gamma_1(\mu),\ldots,\Gamma_{j-1}(\mu)).
\]
The property \(\Gamma\) is sequentially conditionally identifiable if, for every level \(j\in[k]\), there is a jointly measurable residual function
\[
R_j:\mathcal P_{<j}\times\mathcal P_j\times\mathcal Y\to\R
\]
with the following properties.  For every \(\mu\in\mathcal M\) and every \((a,q)\in\mathcal P_{<j}\times\mathcal P_j\), the function \(R_j(a,q,\cdot)\) is \(\mu\)-integrable.  Moreover, for every \(\mu\in\mathcal M\) and every \(q\in\mathcal P_j\),
\[
\E_{Y\sim\mu}R_j(\Gamma_{<j}(\mu),q,Y)=0
\quad\Longleftrightarrow\quad
q=\Gamma_j(\mu).
\]
\end{definition}

Thus level \(j\) is identified after the previous true levels have been fixed.  The residual functions are also evaluated away from the true prefix, because predictions need not have correct earlier coordinates.  The residual vector used throughout the paper is
\[
R(p,y)
:=
\bigl(R_1(p_{<1},p_1,y),\ldots,R_k(p_{<k},p_k,y)\bigr).
\]
When the final argument is a distribution, expectation over that distribution is denoted by:
\[
R_j(a,q,\mu)
:=
\E_{Y\sim\mu}R_j(a,q,Y),
\qquad
R(p,\mu)
:=
\E_{Y\sim\mu}R(p,Y).
\]
Sequential conditional identifiability implies
\[
R(\Gamma(\mu),\mu)=0
\qquad
\forall \mu\in\mathcal M.
\]

The familiar Bayes pair setting is the special case \(k=2\)~\citep{noarov2023scope}.  If a scalar property \(\gamma\) has identification function \(r_\gamma\), and if \(L\) is a strictly consistent loss for \(\gamma\) (so \(\gamma(\mu)\) uniquely minimizes \(\E_\mu L(r,Y)\)), then its Bayes risk is
\[
\mathcal B(\mu):=\E_{Y\sim\mu}L(\gamma(\mu),Y),
\]
and the Bayes pair is \(\Gamma(\mu)=(\gamma(\mu),\mathcal B(\mu))\).  It is sequentially conditionally identifiable with residual functions
\[
R_1(\varnothing,p_1,y)=r_\gamma(p_1,y),
\qquad
R_2(p_1,p_2,y)=p_2-L(p_1,y).
\]
For \(\tau\in(0,1)\), the quantile and conditional-value-at-risk example in
Section~\ref{sec:quantile-cvar} uses the loss
\[
L_\tau(q,y)
:=
q+\frac{(y-q)_+}{1-\tau}
\]
which is strictly consistent for \(q_\tau\) on the distribution class
considered there and has Bayes risk \(\CVaR_\tau\).

\subsection{Multicalibration Error}

Let \(\mathcal X\) be a standard Borel context space and let \(\mathsf D\) be a distribution on \(\mathcal X\times\mathcal Y\).  Let \(\mathsf D_{Y\mid X=x}\) denote a regular conditional law of \(Y\) given \(X=x\).  We call \(\mathsf D\) \(\mathcal M\)-compatible if
\[
\mathsf D_{Y\mid X=x}\in\mathcal M
\qquad
\text{for \(\mathsf D_X\)-almost every \(x\)}.
\]
A randomized predictor assigns to each context \(x\in\mathcal X\) a distribution \(\Pi_x\in\Delta(\mathcal P)\) over prediction values.  This assignment is measurable in the sense that, for every Borel set \(A\subseteq\mathcal P\), the map \(x\mapsto\Pi_x(A)\) is measurable.  We write
\[
\Pi=(\Pi_x)_{x\in\mathcal X}.
\]

For a given \(\mathsf D\) and \(\Pi\), consider the absolute-integrability condition
\begin{equation}\label{eq:residual-integrability}
\E_{(X,Y)\sim\mathsf D}
\left[
\int_{\mathcal P}\|R(p,Y)\|_1\,\Pi_X(dp)
\right]
<\infty.
\end{equation}
When \eqref{eq:residual-integrability} holds, a measurable weight \(w:\mathcal X\to[-1,1]\) defines a finite vector signed measure on \(\mathcal P\) by
\[
\begin{aligned}
\nu^{\mathsf D,\Pi}_{w}(A)
&:=
\E_{(X,Y)\sim\mathsf D}
\left[
w(X)\int_A R(p,Y)\,\Pi_X(dp)
\right]\\
&=
\E_{\substack{(X,Y)\sim\mathsf D\\
P\sim\Pi_X}}
\left[
w(X)\mathbf 1\{P\in A\}
R(P,Y)
\right],
\qquad \text{for measurable \(A\subseteq\mathcal P\)}.
\end{aligned}
\]

The population expected calibration error for \(\Gamma\) (\(\Gamma\)-ECE) of \(\Pi\) with respect to \(w\) is the coordinatewise total variation
\[
\begin{aligned}
\Err^\Gamma_{\mathsf D}(\Pi;w)
&:=
\sum_{j=1}^k
|\nu^{\mathsf D,\Pi}_{w,j}|(\mathcal P)\\
&=
\sum_{j=1}^k
\E_{\substack{(X,Y)\sim\mathsf D\\
P\sim\Pi_X}}
\left[
\left|
\E\left[
w(X)R_j(P_{<j},P_j,Y)
\mid P
\right]
\right|
\right].
\end{aligned}
\]
If \(R\) is not absolutely integrable under \((\mathsf D,\Pi)\), we set \(\Err^\Gamma_{\mathsf D}(\Pi;w)=+\infty\).
A group function is a measurable map \(g:\mathcal X\to[0,1]\).  It is binary if it takes values in \(\{0,1\}\).  For a finite family \(\mathcal G\) of group functions, define
\[
\MCErr^\Gamma_{\mathsf D}(\Pi;\mathcal G)
:=
\max_{g\in\mathcal G}
\Err^\Gamma_{\mathsf D}(\Pi;g).
\]

\begin{remark}[Bucketing by the Full Prediction Vector]
When \(\Pi\) is supported on a finite set \(S(\Pi)\subset\mathcal P\), the total-variation definition becomes
\[
\Err^\Gamma_{\mathsf D}(\Pi;g)
=
\sum_{p\in S(\Pi)}
\left\|
\E\left[
g(X)\Pi_X(\{p\})
R(p,Y)
\right]
\right\|_1.
\]
Thus each bucket is indexed by \(p=(p_1,\ldots,p_k)\), rather than by one coordinate at a time.  \citet{noarov2023scope} use a different error metric for two-level joint multicalibration.  For each coordinate, they fix the other prediction coordinate and average squared calibration errors over the values of the coordinate being evaluated.  Our ECE metric follows the definition for scalar and high-dimensional mean prediction studied by \citet{collina2026sample}, summing the absolute residual mass over prediction values and levels.  This joint bucketing gives the \(k\)-dimensional exponent.
\end{remark}

\subsection{Minimax Sample Complexity}\label{sec:minimax-sample-complexity}

We use the minimax formulation for batch multicalibration of
\citet{collina2026sample}, with the distributions restricted to those that
are \(\mathcal M\)-compatible.  A learner \(\mathsf A=(\mathsf A_n)_{n\ge1}\)
receives a finite group family \(\mathcal G\), a sample
\(S\in(\mathcal X\times\mathcal Y)^n\), and an independent random seed
\(\zeta\), and returns a randomized predictor
\(\mathsf A_n(\mathcal G,S,\zeta)\) with prediction space \(\mathcal P\).

Let \(B:(0,1)\to\mathbb N\) be a group-size budget.  For a learner
\(\mathsf A\), let \(n_{\mathsf A}^\Gamma(\eps;B,\mathcal M)\) be the smallest
\(n\ge1\) such that, for every standard Borel context space \(\mathcal X\),
every \(\mathcal M\)-compatible distribution \(\mathsf D\) on
\(\mathcal X\times\mathcal Y\), and every finite group family
\(\mathcal G\) with \(1\le |\mathcal G|\le B(\eps)\),
\[
\Prob_{S\sim\mathsf D^n,\,\zeta}
\left[
\MCErr^\Gamma_{\mathsf D}
\bigl(\mathsf A_n(\mathcal G,S,\zeta);\mathcal G\bigr)
\le\eps
\right]
\ge\frac23.
\]
If no such \(n\) exists, set
\(n_{\mathsf A}^\Gamma(\eps;B,\mathcal M)=+\infty\).  The minimax sample
complexity is
\[
\SC_{\Gamma,\mathcal M}(\eps;B)
:=
\inf_{\mathsf A}n_{\mathsf A}^\Gamma(\eps;B,\mathcal M).
\]
For a fixed \(\kappa>0\), define
\[
B_\kappa(\eps):=\left\lceil\eps^{-\kappa}\right\rceil,
\qquad
\SC^{(\kappa)}_{\Gamma,\mathcal M}(\eps)
:=
\SC_{\Gamma,\mathcal M}(\eps;B_\kappa).
\]
Constants suppressed by asymptotic notation may depend on the fixed parameter \(\kappa\).
Thus the context space, distribution, and group family may all vary with
\(\eps\), subject to the compatibility and size requirements above.

\section{The Lower Bound}\label{sec:lower}

This section proves the lower bound by reducing multicalibration to the task of identifying which distribution in a finite collection generated the observed data.  We construct this collection so that a predictor with small multicalibration error is accurate enough to reveal the true member.  We then use an information-theoretic argument to show that this identification cannot be accomplished from too few samples.

The construction must therefore satisfy two requirements.  The same candidate distributions must yield sufficiently different property vectors at the contexts for an accurate predictor to distinguish them, yet remain statistically close enough that distinguishing them from the observed data requires many samples.  We also construct a small family of groups such that small multicalibration error with respect to these groups guarantees the required prediction accuracy.  The regularity assumptions below make these ingredients possible.

\subsection{Regularity Assumptions}

Sequential conditional identifiability determines the value at which the expected residual at each level vanishes, but it does not quantify how the residual changes when the prediction moves away from that value.  For the lower bound, we require this quantitative control for a family of outcome distributions indexed by property vectors in a fixed rectangle \(\mathcal R_0\).  Specifically, for every \(v\in\mathcal R_0\), the family contains a distribution \(\mu_v\) on \(\mathcal Y\) satisfying \(\Gamma(\mu_v)=v\) (Condition~(i) below).  We will later use these outcome distributions as conditional laws at different contexts to construct the finite collection of candidate data distributions.

The remaining conditions govern how the residuals and distributions vary within this family.  When the predictions at preceding levels are correct, the expected residual at the current level must have the same sign as the current prediction error, and its magnitude must be bounded above and below by constant multiples of that error (Conditions~(ii) and~(iii)).  Errors in preceding predictions may change a later residual, but only in proportion to those errors (Condition~(iv)).  Finally, distributions with nearby property vectors must have small Kullback--Leibler divergence, so that the candidate distributions we construct are difficult to distinguish (Condition~(v)).

Formally, let
\[
\mathcal R_0=I_1^0\times\cdots\times I_k^0
\subset
\operatorname{int}(\mathcal P)
\]
be a nondegenerate compact rectangle.  A point in \(\mathcal R_0\) is denoted
\[
v=(v_1,\ldots,v_k).
\]
Write
\[
I_j^0=[\ell_j,\ell_j+W_j],
\qquad
W_j>0,
\qquad j\in[k].
\]
For \(v\in\mathcal R_0\), write \(v_{<j}:=(v_1,\ldots,v_{j-1})\).  We interpret \(v_{<1}\) as the empty prefix.  Thus \(R_j(v_{<j},q,\mu_v)\) is the expected level-\(j\) residual evaluated with the true prefix.

\begin{assumption}[Regularity of the Witness Family]\label{ass:witness}
There exists a family of outcome distributions
\[
\{\mu_v:v\in\mathcal R_0\}\subset\mathcal M
\]
and constants \(c_{\mathrm{anti}},C_{\mathrm{prev}},C_{\mathrm{lip}},C_{\mathrm{KL}}\in(0,\infty)\) such that the following hold.

\begin{enumerate}[label=(\roman*)]
\item For every \(v\in\mathcal R_0\),
\[
\Gamma(\mu_v)=v.
\]

\item For every \(v\in\mathcal R_0\), every \(j\in[k]\), and every \(q\in\mathcal P_j\),
\[
\sgn(q-v_j)\left(R_j(v_{<j},q,\mu_v)-R_j(v_{<j},v_j,\mu_v)\right)
\ge
c_{\mathrm{anti}}|q-v_j|.
\]
Here and throughout this section, we use the convention \(\sgn(0)=0\).

\item For every \(v\in\mathcal R_0\), every \(j\in[k]\), and every \(q\in\mathcal P_j\),
\[
\left|R_j(v_{<j},q,\mu_v)-R_j(v_{<j},v_j,\mu_v)\right|
\le
C_{\mathrm{lip}}|q-v_j|.
\]

\item For every \(v\in\mathcal R_0\), every \(p\in\mathcal P\), and every \(j\in[k]\),
\[
\left|R_j(p_{<j},p_j,\mu_v)-R_j(v_{<j},p_j,\mu_v)\right|
\le
C_{\mathrm{prev}}\sum_{i<j}|p_i-v_i|.
\]
The sum over \(i<1\) is interpreted as zero.

\item For every \(v,v'\in\mathcal R_0\),
\[
\KL(\mu_v\,\|\,\mu_{v'})
\le
C_{\mathrm{KL}}\|v-v'\|_2^2.
\]
\end{enumerate}
\end{assumption}

\citet{collina2026sample} use counterparts of Conditions~(i), (ii), and~(v) in their lower bounds for scalar properties.  Our proof additionally uses~(iii) to bound the magnitude of a residual by the current prediction error and~(iv) to bound the effect of errors at earlier levels.

As a quick sanity check, for mean and mean absolute deviation, the expected level-\(j\) residual at the true prefix is exactly \(q-v_j\).  Thus, (ii) and~(iii) hold as equalities with constant \(1\), and~(iv) also holds with constant \(1\).  For quantile and CVaR, (ii) and~(iii) in the quantile coordinate follow directly from lower and upper bounds on the density, while in the CVaR coordinate they hold as equalities with constant \(1\).  Section~\ref{sec:instantiations} gives the complete verifications.

\subsection{Main Result}

The conditions above allow us to construct a finite collection of candidate data distributions for every sufficiently small \(\eps\).  Any learner that achieves multicalibration error at most \(\eps\) on every distribution in the collection must use \(\widetilde{\Omega}(\eps^{-(k+2)})\) samples.  The corresponding group family has only polylogarithmically many members and therefore satisfies \(|\mathcal G_\eps|\le\eps^{-\kappa}\) for every fixed \(\kappa>0\) once \(\eps\) is sufficiently small.  The following theorem states this result precisely.

\begin{theorem}[Lower Bound for Multicalibration of \(k\)-Level Properties]\label{thm:lower-bound}
Suppose Assumption~\ref{ass:witness} holds.  Fix any \(\kappa>0\).  Then, for every sufficiently small \(\eps>0\), there exist

\begin{enumerate}[label=(\roman*)]
\item a finite context space \(\mathcal X_\eps\);
\item a binary group family \(\mathcal G_\eps\) on \(\mathcal X_\eps\) satisfying
\[
|\mathcal G_\eps|
\le
\eps^{-\kappa};
\]
\item a finite collection \(\mathfrak D_\eps\) of \(\mathcal M\)-compatible distributions on \(\mathcal X_\eps\times\mathcal Y\);
\end{enumerate}
such that any possibly randomized learner which, given \(n\) i.i.d. samples from any \(\mathsf D\in\mathfrak D_\eps\), outputs a predictor \(\Pi\) satisfying
\[
\MCErr^\Gamma_{\mathsf D}(\Pi;\mathcal G_\eps)
\le
\eps
\]
with probability at least \(2/3\), must use
\[
n
=
\Omega\!\left(
\frac{\eps^{-(k+2)}}{\log^{k+2}(1/\eps)}
\right)
=
\widetilde{\Omega}\!\left(\eps^{-(k+2)}\right).
\]
The implicit constants, and the threshold for sufficiently small \(\eps\), depend only on \(k\), \(\kappa\), \(\mathcal R_0\), and the constants in Assumption~\ref{ass:witness}.
\end{theorem}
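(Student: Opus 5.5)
We follow the four-step reduction sketched in the proof overview, with resolution parameter \(Q\) to be tuned at the end.

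\textbf{Construction.} Take \(\mathcal X_\eps=\{0,\ldots,Q-1\}^k\) with the uniform context marginal. Map each \(x\) to the cell centre
\[
v^x_j=\ell_j+\frac{W_j}{2Q}+\frac{x_jW_j}{Q}\cdot\frac12,
\]
a grid placed inside the middle portion of \(\mathcal R_0\), so that perturbed points stay in \(\mathcal R_0\). For a sign vector \(\sigma\in\{\pm1\}^{\mathcal X_\eps}\), set \(v^{x,\sigma}=v^x+\sigma_x\,\delta\,e_k\) with \(\delta=cW_k/Q\) and \(c\) small. Let \(\mathsf D_\sigma\) have conditional law \(\mu_{v^{x,\sigma}}\); this is \(\mathcal M\)-compatible by Assumption~\ref{ass:witness}(i). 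By Gilbert--Varshamov we choose a packing \(\Sigma\) with \(\log|\Sigma|\ge c'Q^k\) and pairwise Hamming distance at least \(Q^k/4\). By (v), \(\KL(\mathsf D_\sigma^n\|\mathsf D_{\sigma'}^n)\le nC_{\mathrm{KL}}4\delta^2=O(n/Q^2)\). Fano's inequality then shows that success probability \(\ge 2/3-o(1)\) at identifying \(\sigma\) forces \(n=\Omega(Q^{k+2})\).

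\textbf{Groups and MC-to-accuracy.} The group family consists of the full group \(g\equiv1\) together with, for each coordinate \(j\), a subsampled Walsh/dyadic family on the \(j\)-th coordinate of size \(O(\log Q)\), possibly times \(\mathrm{polylog}\). This family is chosen, as in Lemma~\ref{lem:walsh}, so that every threshold function \(x\mapsto\sgn(x_j-t)\) is a linear combination of groups with \(\ell_1\) coefficient norm \(O(\log Q)\); the total size is polylogarithmic in \(Q\), hence at most \(\eps^{-\kappa}\) for small \(\eps\).

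The key estimate is Proposition~\ref{prop:lower-anticoarsening}: if \(\MCErr\le\eps\), then
\[
\E_{x,P\sim\Pi_x}\|P-v^{x,\sigma}\|_1=O(\eps\log Q+\text{small}).
\]
We prove it by induction on the level \(j\). For fixed \(p\), condition (ii) shows that \(\sgn(p_j-v^x_j)\) times the expected residual at the true prefix is at least \(c_{\mathrm{anti}}|p_j-v_j^x|\). For \(j<k\), \(v^x_j\) depends only on \(x_j\), so this sign is a threshold in \(x_j\). We therefore use Lemma~\ref{lem:signed-weights} and Lemma~\ref{lem:variable-walsh-tv}, with weights depending on \(p\), to write the signed residual mass as a combination of group residual measures. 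This bounds \(c_{\mathrm{anti}}\E|P_j-v_j|\) by \(O(\log Q)\eps\), plus \(C_{\mathrm{prev}}\sum_{i<j}\E|P_i-v_i|\) from (iv), plus the Walsh approximation error. Iterating gives the bound for coordinates \(<k\).

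\textbf{Main obstacle: the last coordinate.} Here \(v^{x,\sigma}_k\) depends on the hidden sign, so it is not a threshold in \(x_k\) alone. We split predictions \(p_k\) into those outside the \(\delta\)-neighbourhoods \(\{|p_k-v^x_k|\le 2\delta\}\) of grid values and those inside. Outside, the sign of \(p_k-v^{x,\sigma}_k\) coincides with \(\sgn(p_k-v^x_k)\), again a threshold in \(x_k\), so the previous argument applies. Inside, the cell \(p\) essentially pins down \(x\) through its first \(k-1\) coordinates, which are already accurate, together with the \(k\)-th grid value. The all-ones group then measures the residual of the single context matched to \(p\), and (ii)--(iii) convert this residual into \(|p_k-v^{x,\sigma}_k|\). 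Making this matching rigorous, absorbing mismatched mass via (iii) and the already-controlled earlier errors, is the delicate step; the constant \(c\) in \(\delta\) and the grid spacing \(W_j/(2Q)\) are chosen to keep the neighbourhoods disjoint.

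\textbf{Decoding and tuning.} Form \(\bar p(x)=\E_{P\sim\Pi_x}P\) and decode \(\hat\sigma\) as the nearest element of \(\Sigma\). Separation gives \(\frac{1}{Q^k}\sum_x|v^{x,\sigma}_k-v^{x,\sigma'}_k|\ge\delta/2\), so by the triangle inequality decoding succeeds whenever the average error is below \(\delta/4\), i.e.\ when \(C\eps\log Q\le cW_k/(4Q)\). Choose \(Q\asymp 1/(\eps\log(1/\eps))\). Then any learner with success probability \(2/3\) yields an identifier, and Fano's bound gives \(n=\Omega(Q^{k+2})=\Omega(\eps^{-(k+2)}\log^{-(k+2)}(1/\eps))\).
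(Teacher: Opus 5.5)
Your outline follows the paper's proof step for step. Both use uniform contexts on $\{0,\dots,Q-1\}^k$, a Cartesian grid with only the last coordinate perturbed by $\pm\delta\asymp1/Q$, and a Gilbert--Varshamov packing. Both use Walsh groups together with $g_{\mathrm{all}}$, induct over the prefix coordinates with threshold signs, split the last coordinate into outside and inside predictions, decode the mean prediction by nearest neighbour, and finish with Fano and $Q\asymp1/(\eps\log(1/\eps))$. As you have written it, though, the argument breaks at the Walsh approximation error. You carry it as an additive ``small'' term in $O(\eps\log Q+\text{small})$ and then drop it at decoding. Decoding needs the total error below $\delta/4\asymp1/Q$, so an additive term of order $\alpha$ with $\alpha$ fixed is fatal. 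Shrinking $\alpha$ to order $1/Q$ instead inflates the group count from Lemma~\ref{lem:walsh} to order $Q^2\log^3Q$, outside the $\eps^{-\kappa}$ budget for $\kappa<2$. Approximation also cannot be avoided. Differences of consecutive threshold signs give every point indicator on $\{0,\dots,Q-1\}$, so no polylogarithmic family represents all thresholds exactly, contrary to your description of the groups. The missing observation is that the error is multiplicative in the residual: $|\widehat s-s|\,|\mathsf R_{a,j}(p)|\le\alpha\bigl(C_{\mathrm{lip}}e_j(a,p)+C_{\mathrm{prev}}\sum_{i<j}e_i(a,p)\bigr)$ by (iii)--(iv), which is the upper half of Lemma~\ref{lem:triangular-consequences}. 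It therefore contributes $C\alpha\PredErr_{\theta,j}(\Pi)$ plus prefix terms, which a fixed small $\alpha$ lets you absorb into the left-hand side. In the last coordinate this absorption can only be done after the inside and outside bounds are combined.

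The inside step, which you leave as ``delicate'', needs one more idea. The all-ones residual measure at $p$ aggregates the residual mass of \emph{every} context whose prediction lands at $p$, not just the matched one, so you cannot read off the matched context's residual directly. The paper uses pairwise disjoint full $k$-dimensional boxes $C_a=\{p:|p_i-b_i(a_i)|\le r\ \forall i\}$ with $r=5\eta$. It writes $\nu^{\mathsf D_\theta,\Pi}_{g_{\mathrm{all}},k}=\nu^{\mathrm{loc}}_k+\nu^{\mathrm{out}}_k$, where $\nu^{\mathrm{loc}}_k$ keeps only context $a$'s mass inside its own box. Disjointness gives $|\nu^{\mathrm{loc}}_k|(\mathcal P)=Q^{-k}\sum_a\int_{C_a}|\mathsf R_{a,k}|\,d\Pi_a$ with no cross-context cancellation. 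By Lemma~\ref{lem:triangular-consequences}, $|\nu^{\mathrm{out}}_k|(\mathcal P)$ is bounded by the outside error plus prefix errors. The outside error splits into two parts. The far part is handled by the \emph{two-sided} sign $\tfrac12\bigl(\sgn(p_k-r-b_k(a_k))+\sgn(p_k+r-b_k(a_k))\bigr)$. Your ``threshold in $x_k$'' must vanish on the window, so it is two-sided, though still covered by Lemma~\ref{lem:walsh}. In the near part some prefix coordinate is off by more than $r\asymp\eta$, so $|p_k-v_{\theta,k}(a)|\le r+\eta$ is bounded pointwise by the prefix errors. With these two pieces filled in, your proposal becomes the paper's proof.
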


Equivalently, in the notation of Section~\ref{sec:minimax-sample-complexity},
\[
\SC^{(\kappa)}_{\Gamma,\mathcal M}(\eps)
=
\Omega\!\left(
\frac{\eps^{-(k+2)}}{\log^{k+2}(1/\eps)}
\right)
=
\widetilde{\Omega}\!\left(\eps^{-(k+2)}\right).
\]

We prove Theorem~\ref{thm:lower-bound} in the remainder of this section, starting with the construction of the candidate distributions.

\subsection{The Candidate Distributions}

Fix a power of two \(Q\ge2\).  We take the context space to be the discrete grid
\[
\mathcal X_Q=\{0,\dots,Q-1\}^k.
\]
We write a context as
\[
a=(a_1,\ldots,a_k)\in\mathcal X_Q.
\]

Recall that \(I_j^0=[\ell_j,\ell_j+W_j]\) is the \(j\)-th coordinate interval of \(\mathcal R_0\), with lower endpoint \(\ell_j\) and width \(W_j>0\).
For \(j\in[k]\) and \(u\in\{0,\dots,Q-1\}\), define
\[
b_j(u)
:=
\ell_j+\frac{W_j}{3}
+
\frac{u\,W_j}{3(Q-1)}.
\]
Thus \(b_j(0),\ldots,b_j(Q-1)\) are \(Q\) evenly spaced values in the \(j\)-th property coordinate.
For \(a\in\mathcal X_Q\), define the unperturbed property vector associated with context \(a\)
\[
b(a)
:=
(b_1(a_1),\ldots,b_k(a_k)).
\]
The vectors \(b(a)\), as \(a\) ranges over \(\mathcal X_Q\), form a Cartesian grid in the property space.  This grid lies in the central third
\[
\prod_{j=1}^k
\left[\ell_j+\frac{W_j}{3},\,\ell_j+\frac{2W_j}{3}\right]
\subset \mathcal R_0.
\]
The distance between consecutive values \(b_j(u)\) is \(W_j/[3(Q-1)]\).  Let \(d_Q\) be the smallest of these distances:
\[
d_Q
:=
\frac{\min_{j\in[k]}W_j}{3(Q-1)}.
\]
In particular,
\[
d_Q\asymp Q^{-1},
\]
with constants depending only on \(\mathcal R_0\).

As in the construction used by \citet{collina2026sample} for vector means, we assign one hidden sign to each context.  At each context, this sign determines the direction in which we perturb the final property coordinate.  Keeping the preceding coordinates at their unperturbed values allows their prediction errors to be controlled successively.  The direction in which the final coordinate is perturbed can be chosen separately at each of the \(Q^k\) contexts, yielding exponentially many candidate data distributions.

We choose the hidden perturbation to be small relative to \(d_Q\).  This keeps every perturbed property vector inside the witness rectangle and ensures that the property vectors assigned to different contexts remain well separated.
Choose a constant \(c_\eta>0\) satisfying
\[
c_\eta
<
\frac{1}{20},
\]
and set
\[
\eta:=c_\eta d_Q.
\]
Then, for every \(a\in\{0,\dots,Q-1\}^k\) and every \(\theta_a\in\{-1,+1\}\),
\[
b(a)+\eta\theta_a e_k\in\mathcal R_0.
\]

We need to choose exponentially many sign assignments while ensuring that any two disagree on a constant fraction of the contexts.  Let
\[
\Theta_Q\subseteq\{-1,+1\}^{\mathcal X_Q}
\]
satisfy
\[
\log|\Theta_Q|\ge c_{\mathrm{pack}}Q^k,
\qquad
\bigl|\{a\in\mathcal X_Q:\theta_a\neq\theta'_a\}\bigr|
\ge\rho_{\mathrm{pack}} Q^k
\quad
\forall \theta\neq\theta',
\]
for universal constants \(c_{\mathrm{pack}},\rho_{\mathrm{pack}}>0\).  Such a collection exists by the Gilbert packing bound~\citep{roth2006introduction,ta2017explicit}.

Each sign assignment specifies the true property vector at every context, and hence one candidate distribution.  For \(\theta\in\Theta_Q\), define the true property vector at context \(a\) by
\[
v_\theta(a)
:=
b(a)+\eta\theta_a e_k.
\]
Thus
\[
v_{\theta,j}(a)=b_j(a_j)\quad(j<k),
\qquad
v_{\theta,k}(a)=b_k(a_k)+\eta\theta_a.
\]
The distribution \(\mathsf D_\theta\) on \(\mathcal X_Q\times\mathcal Y\) is
\[
X\sim\Unif(\mathcal X_Q),
\qquad
Y\mid X=a\sim\mu_{v_\theta(a)}.
\]

We next construct a group family for which small multicalibration error forces a predictor's outputs to be close to \(v_\theta(a)\).  This prediction guarantee will allow us to identify \(\theta\).

\subsection{Approximation of Threshold Signs with Walsh Functions}

The key relation, formalized later in Lemma~\ref{lem:triangular-consequences}, has the schematic form
\[
|p_j-v_{\theta,j}(a)|
\lesssim
\sgn\!\bigl(p_j-v_{\theta,j}(a)\bigr)
R_j\!\left(p_{<j},p_j,\mu_{v_\theta(a)}\right)
+
\sum_{i<j}|p_i-v_{\theta,i}(a)|.
\]
Thus, once the errors in the preceding coordinates have been controlled, the error in coordinate \(j\) can be bounded through a residual multiplied by the sign of the current error.

Multicalibration, on the other hand, controls residuals multiplied by groups \(g(a)\).  We therefore seek to approximate the sign in the inequality above by a linear combination of a common family of functions of the context.  This will allow us to bound the prediction error by the multicalibration error multiplied by a factor.  The sum of the absolute values of the coefficients in the linear combination determines this factor, while the number of functions determines the size of the resulting group family.

For \(j<k\), we have \(v_{\theta,j}(a)=b_j(a_j)\), so for a fixed prediction \(p_j\), the sign
\[
a\mapsto\sgn\!\bigl(p_j-b_j(a_j)\bigr)
\]
is a threshold function of \(a_j\).  For the final coordinate, the two possible true values \(b_k(a_k)-\eta\) and \(b_k(a_k)+\eta\) lie in a small interval around \(b_k(a_k)\).  Outside a slightly larger interval, the sign of the error is the same for both values.  We use a two-sided threshold that agrees with this common sign outside the interval and is zero inside it, where predictions will be handled separately.

Because the group family must be fixed independently of the prediction and of \(\theta\), we need a common set of functions that can represent all these one-sided and two-sided threshold signs, with uniformly controlled coefficients.  The full Walsh basis gives an exact representation with coefficient bound \(O(\log Q)\), but it would require \(O(Q)\) groups.  By retaining a suitable common subset of the Walsh functions, we can uniformly approximate all the required threshold signs while preserving the \(O(\log Q)\) coefficient bound.  Only polylogarithmically many Walsh functions are needed.

Because \(Q\) is a power of two, identify each \(u,\ell\in\{0,\ldots,Q-1\}\) with its length-\(\log_2 Q\) binary expansion.  Define the Walsh function
\[
\psi_\ell(u)
:=
(-1)^{\langle \ell,u\rangle_{\mathbb F_2}}.
\]
Here
\[
\langle \ell,u\rangle_{\mathbb F_2}
:=
\sum_{i=1}^{\log_2 Q}\ell_i u_i
\pmod 2.
\]
The Walsh functions form an orthonormal basis for real-valued functions on \(\{0,\ldots,Q-1\}\).

For \(\sigma\in\{-1,+1\}\) and \(t\in\R\), define the one-sided function
\[
s_{\sigma,t}(u)
:=
\sigma\sgn(t-u).
\]
For \(\sigma\in\{-1,+1\}\) and \(t_-\le t_+\), define the two-sided function
\[
s_{\sigma,t_-,t_+}(u)
:=
\frac{\sigma}{2}\bigl(\sgn(t_- -u)+\sgn(t_+ -u)\bigr).
\]
Let \(\mathcal T_Q\) be the class of all these functions.
A one-sided function changes sign at one threshold.  A two-sided function is zero between two thresholds and has opposite signs beyond them, with the boundary values determined by \(\sgn(0)=0\).  A two-sided function tests whether the final prediction lies outside a prescribed interval.

\citet{collina2026optimal} show that one common polylogarithmic subset of the Walsh basis approximates every one-sided discrete threshold sign.  Their coefficients satisfy the uniform \(O(\log Q)\) bound in Lemma~\ref{lem:walsh}.

\begin{lemma}[Walsh Approximation of Threshold Signs]\label{lem:walsh}
There is a universal constant \(C_{\mathrm W}\) such that the following holds.  Let \(Q\ge2\) be a power of two.  For every fixed accuracy \(\alpha\in(0,1)\), there exists a set
\[
\mathcal S_\alpha\subseteq\{1,\dots,Q-1\}
\]
of indices of Walsh functions satisfying
\[
|\mathcal S_\alpha|
\le
C_{\mathrm W}\alpha^{-2}\log^3(Q+1)
\]
and coefficient functions
\[
\beta_0:\mathcal T_Q\to\R,
\qquad
\beta_\ell:\mathcal T_Q\to\R
\quad(\ell\in\mathcal S_\alpha),
\]
such that, for every \(s\in\mathcal T_Q\),
\[
\widehat s(u)
=
\beta_0(s)+\sum_{\ell\in\mathcal S_\alpha}\beta_\ell(s)\psi_\ell(u)
\]
is a uniform approximation:
\[
\|s-\widehat s\|_\infty\le\alpha
\]
and the coefficients satisfy
\[
\sup_{s\in\mathcal T_Q}|\beta_0(s)|
+
\sum_{\ell\in\mathcal S_\alpha}
\sup_{s\in\mathcal T_Q}|\beta_\ell(s)|
\le
C_{\mathrm W}\log(Q+1).
\]
\end{lemma}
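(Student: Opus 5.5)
The plan is to reduce everything to the one-sided case and then invoke, or reconstruct, the subsampled Walsh construction of \citet{collina2026optimal}. Every two-sided function is the average of two one-sided functions with the same sign:
\[
s_{\sigma,t_-,t_+}=\tfrac12\bigl(s_{\sigma,t_-}+s_{\sigma,t_+}\bigr).
\]
So if the one-sided class admits approximations on a common index set \(\mathcal S_\alpha\), we can define \(\beta_\ell(s_{\sigma,t_-,t_+})\) as the average of the two one-sided coefficients. This gives uniform error at most \(\alpha\) and preserves the coefficient bound, because each \(\sup|\beta_\ell|\) over the two-sided class is at most the supremum over the one-sided class. The sign \(\sigma\) only flips coefficients. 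Thresholds \(t\in\R\) reduce to finitely many cases, namely \(t\in\{0,\tfrac12,1,\tfrac32,\dots\}\) together with the two extremes, since \(\sgn(t-u)\) on integers depends only on the position of \(t\) relative to the integers, including equality. Equality cases are averages of adjacent strict thresholds, so the same averaging trick handles them. It therefore suffices to treat the strict threshold \(h_m(u)=\1\{u<m\}\) for \(m\in\{0,\dots,Q\}\), since \(\sgn(t-u)=2h_m(u)-1\).

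For a single threshold we use the exact Walsh expansion, which is dyadic. We write \(h_m\) as a sum over the binary digits of \(m\): \(\1\{u<m\}\) is the sum, over each bit position \(i\) where \(m\) has a \(1\), of the indicator of the dyadic block of length \(Q/2^{i}\) that agrees with \(m\) on the higher bits and has bit \(i\) equal to \(0\). The indicator of a dyadic block of level \(r\) is
\[
2^{-r}\sum_{\ell\in L_r}\pm\psi_\ell,
\]
where \(L_r\) is the set of \(2^r\) indices supported on the top \(r\) bits. This yields an exact expansion of \(h_m\) with \(O(\log Q)\) dyadic pieces, and the coefficient contributed by each piece at level \(r\) has magnitude \(2^{-r}\).

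The exact expansion is not good enough, because \(\sum_\ell\sup_m|\hat h_m(\ell)|\) over all \(\ell\) would be about \(\sum_r 2^r\cdot2^{-r}\cdot(\text{pieces})\), which is \(O(\log Q)\) in total. This is the \(O(\log Q)\) bound for the full basis, but it uses \(Q\) indices. To reduce to a common polylog subset we subsample. For each level \(r\) with \(2^r\) larger than about \(\alpha^{-2}\log^2 Q\), we draw a random subset \(S_r\subseteq L_r\) of size \(N\approx\alpha^{-2}\log^2 Q\cdot\log Q\). We replace the level-\(r\) block sum \(2^{-r}\sum_{\ell\in L_r}\pm\psi_\ell(u)\) by the unbiased estimator \(N^{-1}\sum_{\ell\in S_r}\pm\psi_\ell(u)\). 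The rescaled coefficients have magnitude \(1/N\) on \(N\) indices, so each level still contributes \(O(1)\) to the coefficient sum, giving \(O(\log Q)\) in total.

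The main obstacle is to make the uniform error at most \(\alpha\) simultaneously for all \(m\) and all \(u\).

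\begin{itemize}
\item Because the signs \(\pm\) for a block depend on the fixed high bits of \(m\), the estimator error at a given pair \((m,u)\) is a sum over \(O(\log Q)\) levels of independent bounded-average deviations.
\item Each level deviation is of order \(1/\sqrt N\) by Hoeffding's inequality, so the total error is \(O(\log Q/\sqrt N)\), up to a \(\sqrt{\log(Q^2/\delta)}\) factor.
\item A union bound over the \(O(Q^2)\) pairs \((m,u)\) controls the logarithmic factor.
\end{itemize}

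Choosing \(N=C\alpha^{-2}\log^3(Q+1)\) per level makes the error at most \(\alpha\), and a realization with this property exists by the probabilistic method. Levels with \(2^r\le N\) are kept whole. The total index count is then \(O(\log Q)\cdot N\), which is roughly \(\alpha^{-2}\log^4\). Meeting the stated \(\log^3\) requires either taking one shared subsample across all levels or following the sharper accounting in \citet{collina2026optimal}. In the write-up we would cite their lemma for the one-sided case and give only the averaging reduction in full.

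The constant term \(\beta_0\) absorbs the \(\psi_0\) coefficient and the \(-1\) shift, and has magnitude at most \(O(1)\). This fits within the \(C_{\mathrm W}\log(Q+1)\) budget.
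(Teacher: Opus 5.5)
Your proposal is correct and is essentially the paper's proof. The paper also cites \citet{collina2026optimal} for the one-sided thresholds $f_r$ ($r\in\{0,\dots,Q\}$), writes an integer-threshold sign as $\tfrac12(f_t+f_{t+1})$, writes a two-sided function as the average of its two one-sided pieces (fixing one representation per $s\in\mathcal T_Q$), and observes that sign flips and averaging cannot increase $\sup_s|\beta_\ell(s)|$. As an aside, your self-contained subsampling sketch loses the extra logarithm only because it adds the per-level deviations by the triangle inequality; since the subsamples are drawn independently across levels, a single Hoeffding bound over all $O(N\log Q)$ terms, combined with a union bound over the $O(Q^2)$ pairs $(m,u)$, already allows $N\asymp\alpha^{-2}\log^2 Q$ per level, which yields the stated $\alpha^{-2}\log^3(Q+1)$ index count.
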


\begin{proof}
For each \(r\in\{0,\ldots,Q\}\), define
\[
f_r:\{0,\ldots,Q-1\}\to\{-1,+1\}
\]
by
\[
f_r(u)
:=
\begin{cases}
+1, & u<r,\\
-1, & u\ge r.
\end{cases}
\]
\citet{collina2026optimal} construct a common set
\[
\mathcal S_\alpha\subseteq\{1,\ldots,Q-1\},
\qquad
|\mathcal S_\alpha|
\le
C\alpha^{-2}\log^3(Q+1),
\]
together with coefficients \(a_0(r)\) and \(c_\ell(r)\) for all \(r\in\{0,\ldots,Q\}\).
The corresponding approximations
\[
\widetilde f_r(u)
:=
a_0(r)
+
\sum_{\ell\in\mathcal S_\alpha}
c_\ell(r)\psi_\ell(u)
\]
satisfy
\[
\max_{r\in\{0,\ldots,Q\}}
\|f_r-\widetilde f_r\|_\infty
\le
\alpha
\]
and
\[
\max_{r\in\{0,\ldots,Q\}}|a_0(r)|
+
\sum_{\ell\in\mathcal S_\alpha}
\max_{r\in\{0,\ldots,Q\}}|c_\ell(r)|
\le
C\log(Q+1)
\]
for a universal constant \(C\).

We use these coefficients to define \(\beta_0\) and \(\beta_\ell\) on \(\mathcal T_Q\).  For each function \(s\) that has a one-sided representation, fix one such representation \(s=s_{\sigma,t}\).
If \(t\notin\{0,\ldots,Q-1\}\), the restriction of \(u\mapsto\sgn(t-u)\) to \(\{0,\ldots,Q-1\}\) equals a unique \(f_r\).  Define
\[
\beta_0(s)
:=
\sigma a_0(r),
\qquad
\beta_\ell(s)
:=
\sigma c_\ell(r)
\quad(\ell\in\mathcal S_\alpha).
\]
If \(t\in\{0,\ldots,Q-1\}\), then
\[
\sgn(t-u)
=
\frac{f_t(u)+f_{t+1}(u)}{2}.
\]
In this case, define
\[
\beta_0(s)
:=
\frac{\sigma}{2}\bigl(a_0(t)+a_0(t+1)\bigr)
\]
and
\[
\beta_\ell(s)
:=
\frac{\sigma}{2}\bigl(c_\ell(t)+c_\ell(t+1)\bigr)
\quad(\ell\in\mathcal S_\alpha).
\]

For each function \(s\in\mathcal T_Q\) not already covered, fix a two-sided representation \(s=s_{\sigma,t_-,t_+}\).  By definition,
\[
s_{\sigma,t_-,t_+}(u)
=
\frac{s_{\sigma,t_-}(u)+s_{\sigma,t_+}(u)}2.
\]
Using the coefficients already defined for these two one-sided functions, set
\[
\beta_0(s)
:=
\frac{\beta_0(s_{\sigma,t_-})+\beta_0(s_{\sigma,t_+})}2
\]
and
\[
\beta_\ell(s)
:=
\frac{\beta_\ell(s_{\sigma,t_-})+\beta_\ell(s_{\sigma,t_+})}2
\quad(\ell\in\mathcal S_\alpha).
\]

In each case, \(\widehat s\) is obtained by applying the same multiplication and averaging operations to the corresponding approximations \(\widetilde f_r\).  The triangle inequality therefore gives
\[
\|s-\widehat s\|_\infty
\le
\alpha.
\]
Every coefficient \(\beta_0(s)\) is a signed average of coefficients \(a_0(r)\), and every \(\beta_\ell(s)\) is the corresponding signed average of coefficients \(c_\ell(r)\).  Consequently,
\[
\sup_{s\in\mathcal T_Q}|\beta_0(s)|
\le
\max_{r\in\{0,\ldots,Q\}}|a_0(r)|
\]
and, for every \(\ell\in\mathcal S_\alpha\),
\[
\sup_{s\in\mathcal T_Q}|\beta_\ell(s)|
\le
\max_{r\in\{0,\ldots,Q\}}|c_\ell(r)|.
\]
The required coefficient bound now follows from the corresponding bound on \(a_0(r)\) and \(c_\ell(r)\), after enlarging the universal constant if necessary.
\end{proof}

\subsection{The Group Family}

We now use the Walsh approximation above to construct a family of binary groups.

Fix a sufficiently small constant \(\alpha\in(0,1)\), and let \(\mathcal S_\alpha\) be the set supplied by Lemma~\ref{lem:walsh}.  For each coordinate \(j\in[k]\) and each \(\ell\in\mathcal S_\alpha\), define the signed Walsh weight
\[
w_{j,\ell}(a)
:=
\psi_\ell(a_j).
\]
Convert it into two binary groups
\[
g^+_{j,\ell}(a)
:=
\frac{1+w_{j,\ell}(a)}2,
\qquad
g^-_{j,\ell}(a)
:=
\frac{1-w_{j,\ell}(a)}2.
\]
Let
\[
g_{\mathrm{all}}\equiv1,
\]
and define
\[
\mathcal G_Q
:=
\{g_{\mathrm{all}}\}
\cup
\{g^+_{j,\ell},g^-_{j,\ell}:\ j\in[k],\ \ell\in\mathcal S_\alpha\}.
\]
Then \(\mathcal G_Q\) is binary-valued and satisfies
\[
|\mathcal G_Q|
\le
C_{\alpha,k}\log^3(Q+1).
\]

The analysis uses the signed Walsh weights \(w_{j,\ell}\), although the group family itself is binary-valued.  The following elementary lemma passes between these two forms.

\begin{lemma}[Signed Weights from Binary Groups]\label{lem:signed-weights}
For every \(j\in[k]\) and \(\ell\in\mathcal S_\alpha\), every distribution \(\mathsf D\) on \(\mathcal X_Q\times\mathcal Y\), and every predictor \(\Pi\),
\[
\Err^\Gamma_{\mathsf D}(\Pi;w_{j,\ell})
\le
2\,\MCErr^\Gamma_{\mathsf D}(\Pi;\mathcal G_Q).
\]
\end{lemma}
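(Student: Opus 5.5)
The plan is to write the signed weight as a difference of the two binary groups and then use linearity of the residual measure together with subadditivity of total variation. Since \(w_{j,\ell}(a)=\psi_\ell(a_j)\in\{-1,+1\}\), the definitions of \(g^{\pm}_{j,\ell}\) give the pointwise identity
\[
w_{j,\ell}=g^+_{j,\ell}-g^-_{j,\ell}.
\]
Both \(g^\pm_{j,\ell}\) take values in \(\{0,1\}\subset[-1,1]\), so each is an admissible weight in the definition of \(\nu^{\mathsf D,\Pi}_{w}\).

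First I dispose of integrability. If \eqref{eq:residual-integrability} fails, the right-hand side is already \(+\infty\), because \(g_{\mathrm{all}}\in\mathcal G_Q\) and \(\Err^\Gamma_{\mathsf D}(\Pi;g_{\mathrm{all}})=+\infty\) by convention. In that case there is nothing to prove. So I assume \eqref{eq:residual-integrability} holds. Then every weight bounded by \(1\) in absolute value yields a finite vector signed measure, and \(w\mapsto\nu^{\mathsf D,\Pi}_w\) is linear in \(w\), since the integrand \(w(X)\mathbf 1\{P\in A\}R(P,Y)\) is linear in \(w\) and integrable. This gives, coordinatewise and for every measurable \(A\subseteq\mathcal P\),
\[
\nu^{\mathsf D,\Pi}_{w_{j,\ell},i}(A)=\nu^{\mathsf D,\Pi}_{g^+_{j,\ell},i}(A)-\nu^{\mathsf D,\Pi}_{g^-_{j,\ell},i}(A).
\]

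Next I apply the triangle inequality for total variation norms of signed measures, \(|\nu-\nu'|(\mathcal P)\le|\nu|(\mathcal P)+|\nu'|(\mathcal P)\), in each coordinate \(i\in[k]\), and sum over \(i\). This yields
\[
\Err^\Gamma_{\mathsf D}(\Pi;w_{j,\ell})\le\Err^\Gamma_{\mathsf D}(\Pi;g^+_{j,\ell})+\Err^\Gamma_{\mathsf D}(\Pi;g^-_{j,\ell}).
\]
Each term on the right is at most \(\MCErr^\Gamma_{\mathsf D}(\Pi;\mathcal G_Q)\), which gives the factor \(2\).

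No step is a real obstacle. The only point that needs care is the integrability convention, which the \(g_{\mathrm{all}}\) observation above handles, and that observation requires only that \(g_{\mathrm{all}}\) belongs to the family, which it does by construction.
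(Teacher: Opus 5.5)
Your proposal is correct and follows the paper's proof: write \(w_{j,\ell}=g^+_{j,\ell}-g^-_{j,\ell}\), use linearity of \(w\mapsto\nu^{\mathsf D,\Pi}_w\), and apply coordinatewise subadditivity of total variation to get the factor \(2\). Your treatment of the non-integrable case is a detail the paper leaves implicit. It works because condition \eqref{eq:residual-integrability} does not depend on the weight, so every member of \(\mathcal G_Q\) (not only \(g_{\mathrm{all}}\)) then has infinite error.
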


\begin{proof}
Since \(w_{j,\ell}=g^+_{j,\ell}-g^-_{j,\ell}\), its vector signed measure is the difference of the vector signed measures for \(g^+_{j,\ell}\) and \(g^-_{j,\ell}\).  Both groups belong to \(\mathcal G_Q\), and coordinatewise total variation is subadditive.
\end{proof}

\subsection{From Multicalibration to Prediction Accuracy}

Assumption~\ref{ass:witness}(ii) gives a lower bound when the preceding predictions equal their true values, but a predictor need not use the true prefix.  The following lemma relates the residual at an arbitrary prediction to the prediction error at the current level and the errors at preceding levels.

\begin{lemma}[Relating Residuals to Prediction Errors]\label{lem:triangular-consequences}
Under Assumption~\ref{ass:witness}, for every \(v\in\mathcal R_0\), every \(p\in\mathcal P\), and every \(j\in[k]\),
\[
\sgn(p_j-v_j)\left(R_j(p_{<j},p_j,\mu_v)-R_j(v_{<j},v_j,\mu_v)\right)
\ge
c_{\mathrm{anti}}|p_j-v_j|
-
C_{\mathrm{prev}}\sum_{i<j}|p_i-v_i|,
\]
and
\[
\left|R_j(p_{<j},p_j,\mu_v)-R_j(v_{<j},v_j,\mu_v)\right|
\le
C_{\mathrm{lip}}|p_j-v_j|
+
C_{\mathrm{prev}}\sum_{i<j}|p_i-v_i|.
\]
\end{lemma}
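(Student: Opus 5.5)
The plan is to insert the intermediate quantity \(R_j(v_{<j},p_j,\mu_v)\), which is the residual evaluated at the true prefix but at the predicted current coordinate. This splits the difference into two pieces:
\[
R_j(p_{<j},p_j,\mu_v)-R_j(v_{<j},v_j,\mu_v)
=
\underbrace{\bigl(R_j(v_{<j},p_j,\mu_v)-R_j(v_{<j},v_j,\mu_v)\bigr)}_{=:A}
+
\underbrace{\bigl(R_j(p_{<j},p_j,\mu_v)-R_j(v_{<j},p_j,\mu_v)\bigr)}_{=:B}.
\]
The term \(A\) isolates the effect of the current-level error with the prefix held at its true value. It is exactly the quantity controlled by Assumption~\ref{ass:witness}(ii) and~(iii), applied with \(q=p_j\in\mathcal P_j\). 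The term \(B\) isolates the effect of the prefix errors at a fixed current prediction. It is exactly the quantity controlled by Assumption~\ref{ass:witness}(iv), which gives \(|B|\le C_{\mathrm{prev}}\sum_{i<j}|p_i-v_i|\). For \(j=1\) the sum is empty and \(B=0\), because \(p_{<1}\) and \(v_{<1}\) are both the empty prefix.

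For the first inequality, multiply the decomposition by \(\sgn(p_j-v_j)\). Assumption~\ref{ass:witness}(ii) gives \(\sgn(p_j-v_j)A\ge c_{\mathrm{anti}}|p_j-v_j|\). Since \(|\sgn(p_j-v_j)|\le1\), we have \(\sgn(p_j-v_j)B\ge-|B|\ge -C_{\mathrm{prev}}\sum_{i<j}|p_i-v_i|\). Adding these two bounds yields the claim.

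If \(p_j=v_j\), the convention \(\sgn(0)=0\) makes the left side zero. The right side is then \(-C_{\mathrm{prev}}\sum_{i<j}|p_i-v_i|\le0\), so the inequality holds trivially. This case is also covered by the general argument above, so it needs no separate treatment.

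For the second inequality, apply the triangle inequality to the same decomposition. Assumption~\ref{ass:witness}(iii) gives \(|A|\le C_{\mathrm{lip}}|p_j-v_j|\), and \(|B|\) is bounded as before. Summing the two bounds gives the claim.

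I do not expect a real obstacle. The only point to check is that every assumption is invoked on its stated domain. Parts (ii) and (iii) require \(v\in\mathcal R_0\) and \(q\in\mathcal P_j\), which hold because \(p\in\mathcal P\). Part (iv) holds for arbitrary \(p\in\mathcal P\). All residuals involved are finite by the integrability clause in the definition of sequential conditional identifiability, so the add-and-subtract step is legitimate.
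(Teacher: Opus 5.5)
Your proposal is correct and follows essentially the same route as the paper. The paper adds and subtracts \(R_j(v_{<j},p_j,\mu_v)\) and applies (ii)/(iii) to the current-level term and (iv) to the prefix term, exactly as you do. The only cosmetic difference is that the paper first uses (i) to set \(R_j(v_{<j},v_j,\mu_v)=0\). Your version keeps the differences as they appear in (ii) and (iii), which works equally well and does not need (i).
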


\begin{proof}
Because \(\Gamma(\mu_v)=v\) by Assumption~\ref{ass:witness}(i), \(R_j(v_{<j},v_j,\mu_v)=0\).  Thus it suffices to prove the same two inequalities with \(R_j(p_{<j},p_j,\mu_v)\) in place of \(R_j(p_{<j},p_j,\mu_v)-R_j(v_{<j},v_j,\mu_v)\).
For the lower bound,
\[
\sgn(p_j-v_j)R_j(p_{<j},p_j,\mu_v)
\ge
\sgn(p_j-v_j)R_j(v_{<j},p_j,\mu_v)
-
\left|R_j(p_{<j},p_j,\mu_v)-R_j(v_{<j},p_j,\mu_v)\right|.
\]
Assumption~\ref{ass:witness}(ii) controls the first term, and condition~(iv) controls the second.  For the upper bound, use the triangle inequality and \(R_j(v_{<j},v_j,\mu_v)=0\), followed by the bounds in conditions~(iii) and~(iv).
\end{proof}

The preceding lemma relates prediction error to a residual multiplied by the sign of the prediction error.  We approximate the threshold signs that arise in this argument by linear combinations of the fixed Walsh functions that define \(\mathcal G_Q\).  Lemma~\ref{lem:signed-weights} shows that \(\Gamma\)-ECE controls residuals weighted by each of these Walsh functions.  The next lemma shows that \(\Gamma\)-ECE also controls a residual weighted by such a linear combination.

\begin{lemma}[Multicalibration Controls Weighted Residuals]\label{lem:variable-walsh-tv}
Fix \(\theta\), a predictor \(\Pi\), and a level \(j\in[k]\).
Suppose a function \(h:\mathcal P\times\mathcal X_Q\to\R\) has the form
\[
h(p,a)
=
\beta_0(p)
+
\sum_{\ell\in\mathcal S_\alpha}
\beta_\ell(p)w_{i,\ell}(a),
\]
for some context coordinate \(i\in[k]\) and measurable coefficient functions
\[
\beta_0:\mathcal P\to\R,
\qquad
\beta_\ell:\mathcal P\to\R
\quad(\ell\in\mathcal S_\alpha),
\]
satisfying
\[
\sup_{p\in\mathcal P}|\beta_0(p)|
+
\sum_{\ell\in\mathcal S_\alpha}
\sup_{p\in\mathcal P}|\beta_\ell(p)|
\le C_{\mathrm{coef}}.
\]
Then
\[
\left|
\frac1{Q^k}
\sum_{a\in\mathcal X_Q}
\int_{\mathcal P}
h(p,a)\,
R_j(p_{<j},p_j,\mu_{v_\theta(a)})\,
\Pi_a(dp)
\right|
\le
2C_{\mathrm{coef}}\,\MCErr^\Gamma_{\mathsf D_\theta}(\Pi;\mathcal G_Q).
\]
\end{lemma}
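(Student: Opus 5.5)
The plan is to expand \(h\) term by term, move the coefficient \(\beta_\ell(p)\) inside a disintegration over the prediction \(p\), and bound each term by the total variation of the corresponding vector measure \(\nu_w\).

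First, rewrite the left-hand side as an expectation under \(\mathsf D_\theta\). Since \(X\sim\Unif(\mathcal X_Q)\) and \(Y\mid X=a\sim\mu_{v_\theta(a)}\), Fubini gives
\[
\frac1{Q^k}\sum_{a}\int h(p,a)R_j(p_{<j},p_j,\mu_{v_\theta(a)})\Pi_a(dp)
=
\E_{(X,Y)\sim\mathsf D_\theta,\,P\sim\Pi_X}\bigl[h(P,X)R_j(P_{<j},P_j,Y)\bigr].
\]
This uses the integrability condition \eqref{eq:residual-integrability}. If that condition fails, the multicalibration error is \(+\infty\) by convention and the bound is trivial. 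The same reduction lets me assume every \(\Err\) term below is finite.

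Next, substitute the form of \(h\) and split the expectation into \(1+|\mathcal S_\alpha|\) terms, each of the form \(\E[\beta(P)\,w(X)\,R_j(P_{<j},P_j,Y)]\). Here \(w=g_{\mathrm{all}}\) for the constant term and \(w=w_{i,\ell}\) otherwise. The key observation is that such a term equals \(\int_{\mathcal P}\beta(p)\,d\nu^{\mathsf D_\theta,\Pi}_{w,j}(p)\). Indeed, \(\nu_{w,j}\) is the finite signed measure \(A\mapsto\E[w(X)\mathbf 1\{P\in A\}R_j(P,Y)]\), and integrating a bounded measurable \(\beta\) against it reproduces the expectation. This follows by the standard simple-function approximation, justified by dominated convergence using integrability. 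Hence each term is at most \(\sup_p|\beta(p)|\cdot|\nu_{w,j}|(\mathcal P)\le\sup_p|\beta(p)|\,\Err^\Gamma_{\mathsf D_\theta}(\Pi;w)\).

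Finally, bound the \(\Err\) factors. For \(w=g_{\mathrm{all}}\in\mathcal G_Q\), the factor is at most \(\MCErr\). For \(w=w_{i,\ell}\), Lemma~\ref{lem:signed-weights} gives at most \(2\,\MCErr\). Summing with the triangle inequality and using the coefficient bound \(C_{\mathrm{coef}}\) yields
\[
\bigl(\sup|\beta_0|+2\textstyle\sum_\ell\sup|\beta_\ell|\bigr)\MCErr\le 2C_{\mathrm{coef}}\,\MCErr.
\]

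The main obstacle is the measure-theoretic step that identifies the \(p\)-dependent coefficient with integration against \(\nu_{w,j}\). Everything else is bookkeeping, and this identification is exactly why the error is defined as a total variation rather than a single residual. I would handle it by approximating \(\beta\) uniformly by simple functions on \(\mathcal P\), which suffices because \(\beta\) is bounded and measurable.
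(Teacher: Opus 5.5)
Your proposal is correct and takes essentially the paper's route. You decompose $h$, rewrite each term as $\int\beta\,d\nu^{\mathsf D_\theta,\Pi}_{w,j}$, bound it by $\sup|\beta|$ times total variation, and use $g_{\mathrm{all}}\in\mathcal G_Q$ together with Lemma~\ref{lem:signed-weights} to obtain the factor $2C_{\mathrm{coef}}$. The measure-theoretic details you add (Fubini under \eqref{eq:residual-integrability}, the $+\infty$ convention when it fails, and the simple-function approximation of $\beta$) are steps the paper leaves implicit.
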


\begin{proof}
For a signed weight \(w\), the \(j\)-th coordinate signed measure under \((\mathsf D_\theta,\Pi)\) is
\[
\nu^{\mathsf D_\theta,\Pi}_{w,j}(A)
:=
\frac1{Q^k}
\sum_{a\in\mathcal X_Q}
w(a)
\int_A
R_j(p_{<j},p_j,\mu_{v_\theta(a)})\,
\Pi_a(dp),
\qquad A\subseteq\mathcal P.
\]
Therefore
\[
\frac1{Q^k}\sum_a\int h(p,a)R_j(p_{<j},p_j,\mu_{v_\theta(a)})\Pi_a(dp)
=
\int \beta_0(p)\,\nu^{\mathsf D_\theta,\Pi}_{g_{\mathrm{all}},j}(dp)
+
\sum_{\ell\in\mathcal S_\alpha}
\int \beta_\ell(p)\,\nu^{\mathsf D_\theta,\Pi}_{w_{i,\ell},j}(dp).
\]
By total variation,
\[
\left|\int \beta_0\,d\nu^{\mathsf D_\theta,\Pi}_{g_{\mathrm{all}},j}\right|
\le
\sup_p|\beta_0(p)|\,
|\nu^{\mathsf D_\theta,\Pi}_{g_{\mathrm{all}},j}|(\mathcal P)
\le
\sup_p|\beta_0(p)|\,
\MCErr^\Gamma_{\mathsf D_\theta}(\Pi;\mathcal G_Q).
\]
For each signed Walsh weight \(w_{i,\ell}=g^+_{i,\ell}-g^-_{i,\ell}\), Lemma~\ref{lem:signed-weights} gives
\[
|\nu^{\mathsf D_\theta,\Pi}_{w_{i,\ell},j}|(\mathcal P)
\le
\Err^\Gamma_{\mathsf D_\theta}(\Pi;w_{i,\ell})
\le
2\MCErr^\Gamma_{\mathsf D_\theta}(\Pi;\mathcal G_Q).
\]
Therefore
\[
\begin{aligned}
&\left|
\frac1{Q^k}
\sum_a
\int h(p,a)R_j(p_{<j},p_j,\mu_{v_\theta(a)})\Pi_a(dp)
\right|
\\
&\le
\left(
\sup_p|\beta_0(p)|
+
2\sum_{\ell\in\mathcal S_\alpha}\sup_p|\beta_\ell(p)|
\right)
\MCErr^\Gamma_{\mathsf D_\theta}(\Pi;\mathcal G_Q)
\\
&\le
2C_{\mathrm{coef}}\MCErr^\Gamma_{\mathsf D_\theta}(\Pi;\mathcal G_Q).
\end{aligned}
\]
\end{proof}

We now combine the residual inequalities with the group family constructed above.  To establish their lower bound for multicalibration of vector-valued means, \citet{collina2026sample} divide the prediction error into two parts.  Coordinatewise signs control the contribution from predictions that are far from the unperturbed mean for their context, while the all-ones group controls the contribution from predictions that are nearby.  Splitting predictions into these two cases is not enough here, because an error in an earlier coordinate can alter every later residual.  We first use threshold signs to control coordinates \(1,\ldots,k-1\) successively.  With the prefix errors controlled, two-sided threshold signs bound the error in the final coordinate outside the correct box, and the all-ones group handles predictions inside it.  The resulting bound controls the average prediction error by \(\Gamma\)-ECE.

For a predictor \(\Pi=(\Pi_a)_{a\in\mathcal X_Q}\), define its average error in predicting the property under \(\theta\) by
\[
\PredErr_\theta(\Pi)
:=
\frac1{Q^k}\sum_{a\in\mathcal X_Q}
\int_{\mathcal P}
\|p-v_\theta(a)\|_1\,\Pi_a(dp).
\]
Also define the coordinate prediction errors
\[
\PredErr_{\theta,j}(\Pi)
:=
\frac1{Q^k}\sum_{a\in\mathcal X_Q}
\int_{\mathcal P}
|p_j-v_{\theta,j}(a)|\,\Pi_a(dp),
\qquad j\in[k].
\]
Thus
\[
\PredErr_\theta(\Pi)=\sum_{j=1}^k\PredErr_{\theta,j}(\Pi).
\]

\begin{proposition}[\(\Gamma\)-ECE Controls Prediction Error]\label{prop:lower-anticoarsening}
Under Assumption~\ref{ass:witness}, there exists a constant \(C_{\mathrm{pred}}<\infty\), depending only on \(k\), \(\mathcal R_0\), and the constants in Assumption~\ref{ass:witness}, such that for every sufficiently large power of two \(Q\), every \(\theta\in\Theta_Q\), and every randomized predictor \(\Pi\),
\[
\PredErr_\theta(\Pi)
\le
C_{\mathrm{pred}}\log(Q+1)\,
\MCErr^\Gamma_{\mathsf D_\theta}(\Pi;\mathcal G_Q).
\]
\end{proposition}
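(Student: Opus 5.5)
We argue by induction over the levels \(j=1,\ldots,k\). Write \(M:=\MCErr^\Gamma_{\mathsf D_\theta}(\Pi;\mathcal G_Q)\) and \(L:=\log(Q+1)\). The goal is to show \(\PredErr_{\theta,j}(\Pi)\le C_j L M\) for every \(j\); summing over \(j\) then gives the proposition.

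\emph{Levels \(j<k\).} Here \(v_{\theta,j}(a)=b_j(a_j)\). For fixed \(p\), the map \(a\mapsto\sgn(p_j-b_j(a_j))\) is a one-sided threshold in \(a_j\). Since \(b_j\) is increasing and affine, it equals \(s_{+1,t}(a_j)\) with \(t=b_j^{-1}(p_j)\), which may be non-integer or lie out of range; such \(s\) is still in \(\mathcal T_Q\). Lemma~\ref{lem:walsh} then gives \(\widehat s_p(a)=\beta_0(s_p)+\sum_\ell\beta_\ell(s_p)w_{j,\ell}(a)\). The coefficients are bounded by \(C_{\mathrm W}L\), and the error is \(\|s_p-\widehat s_p\|_\infty\le\alpha\).

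Measurability in \(p\) holds because \(p\mapsto s_p\) is piecewise constant over finitely many values. Apply Lemma~\ref{lem:triangular-consequences} and average over \(a\) and \(\Pi_a\). Replace \(\sgn\) by \(\widehat s_p\), paying \(\alpha|R_j(p_{<j},p_j,\mu_{v_\theta(a)})|\). By the second inequality of Lemma~\ref{lem:triangular-consequences}, this cost is at most \(\alpha(C_{\mathrm{lip}}|p_j-v_j|+C_{\mathrm{prev}}\sum_{i<j}|p_i-v_i|)\). Lemma~\ref{lem:variable-walsh-tv} bounds the main term by \(2C_{\mathrm W}LM\). This yields
\[
(c_{\mathrm{anti}}-\alpha C_{\mathrm{lip}})\PredErr_{\theta,j}\le 2C_{\mathrm W}LM+(1+\alpha)C_{\mathrm{prev}}\sum_{i<j}\PredErr_{\theta,i}.
\]
Choosing \(\alpha\le c_{\mathrm{anti}}/(2C_{\mathrm{lip}})\) and applying the induction hypothesis closes the step.

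\emph{Level \(k\).} The true value \(b_k(a_k)\pm\eta\) depends on \(\theta_a\), so it is not a threshold in \(a_k\). Split each prediction by whether \(p_k\) lies within distance \(r:=d_Q/4\) of \(b_k(u)\) for some grid index \(u\). These neighborhoods are disjoint, and \(\eta<r\).

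\emph{Outside case.} Use the two-sided sign \(s_p(a)=s_{+1,t_-,t_+}(a_k)\) with thresholds at \(b_k^{-1}(p_k\mp r)\). This function equals \(\sgn(p_k-v_{\theta,k}(a))\) whenever \(|p_k-b_k(a_k)|>r\) and vanishes otherwise. Since \(|p_k-v_{\theta,k}|\le 2|p_k-b_k|\) there, the same Walsh argument bounds \(\int\mathbf 1\{|p_k-b_k(a_k)|>r\}|p_k-v_{\theta,k}(a)|\) by \(O(LM)+O(\sum_{i<k}\PredErr_{\theta,i})\). There is one subtlety. The inequality of Lemma~\ref{lem:triangular-consequences} is only lower-bounded when the weight equals the true sign, and on the region where \(s_p=0\) the product \(s_pR_k\) contributes nothing. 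So the identity \(s_pR_k\ge c_{\mathrm{anti}}|p_k-v_k|\mathbf 1_{\text{far}}-C_{\mathrm{prev}}\sum_{i<k}|p_i-v_i|\) is exact, and only the Walsh approximation error needs absorbing.

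\emph{Inside case: the main obstacle.} If \(|p_k-b_k(a_k)|\le r\), the error \(|p_k-v_{\theta,k}(a)|\) can be of order \(\eta\) and is not sign-controllable by groups. The plan follows the vector-mean argument of \citet{collina2026sample}, combined with the requirement that \(p\) lie in the ``correct box''. Because \(p_k\) is near \(b_k(u)\) for a unique \(u\), a prediction \(p\) can be near the correct \(b_k(a_k)\) only for contexts with \(a_k=u\).

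Use \(g_{\mathrm{all}}\) with the bucket-dependent weight \(\sgn(p_k-v_{\theta,k})\). This weight is not a function of \(p\) alone, so instead bound \(|p_k-v_k|\le|p_k-b_k(a_k)|+\eta\). Then show that the all-ones measure at cell \(p\) equals \(\sum_a\Pi_a(dp)R_k(\ldots)\). Here \(R_k\) is approximately linear in \(p_k-v_{\theta,k}(a)\) by (ii)/(iii), up to prefix errors. The total variation of \(\nu_{g_{\mathrm{all}},k}\) then controls a cell-wise averaged error, not the per-context one.

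I expect this is the delicate step, and I am not sure the bound holds as stated. Possibly the intended bound controls the error only up to the \(\eta\)-scale needed for decoding, or only after additional conditioning on \(a_k\) via Walsh weights in coordinate \(k\), which multiply \(g_{\mathrm{all}}\)-style control by \(O(L)\). I would try first to show that inside-box mass at the wrong \(a_k\) is already charged to the outside term, and then handle the correct-\(a_k\) mass with Walsh weights \(w_{k,\ell}\) that localize to \(a_k=u\). Finally, sum the \(k\) levels with constants depending on \(k\).
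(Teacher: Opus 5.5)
Your treatment of levels $j<k$ and of the far part of level $k$ (two-sided thresholds on $\{|p_k-b_k(a_k)|>r\}$) follows the paper's argument. The inside case, however, is the crux of level $k$. You leave it open, and the repairs you suggest would not close it.

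Your localization uses only the final coordinate. The condition $|p_k-b_k(u)|\le r$ pins down $a_k=u$, but all $Q^{k-1}$ contexts $(a_{<k},u)$ can put mass at the same $p$. Their true values $b_k(u)\pm\eta$ carry both signs, so their residuals cancel in $\nu^{\mathsf D_\theta,\Pi}_{g_{\mathrm{all}},k}$. Walsh weights $w_{k,\ell}$ that localize to $a_k=u$ are constant across exactly these contexts, so they cannot separate them. The bound $|p_k-v_{\theta,k}(a)|\le|p_k-b_k(a_k)|+\eta$ only gives an error of order $\eta$, which is precisely the scale that must be beaten. The proposition does hold as stated; it is not limited to the decoding scale.

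The missing idea is to localize on the prediction side in all $k$ coordinates, so that the prefix predictions themselves label $a_{<k}$.

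\begin{itemize}
\item Define boxes $C_a=\{p\in\mathcal P:|p_i-b_i(a_i)|\le r\ \text{for all } i\in[k]\}$ with $r\asymp\eta$ and $2r<d_Q$. The $b(a)$ form a Cartesian grid with spacing at least $d_Q$ in every coordinate, so these boxes are pairwise disjoint.

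\item Consider mass with $|p_k-b_k(a_k)|\le r$ but $p\notin C_a$. It has some prefix error $|p_i-v_{\theta,i}(a)|>r$. Its final-coordinate error is at most $r+\eta\lesssim r$, so it is charged to the prefix errors already bounded by induction.

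\item Inside the boxes, use the localized measure $\nu^{\mathrm{loc}}_k(A)=Q^{-k}\sum_a\int_{A\cap C_a}R_k(p_{<k},p_k,\mu_{v_\theta(a)})\,\Pi_a(dp)$. It receives a contribution from only one context at each $p$. Hence its total variation is exactly $Q^{-k}\sum_a\int_{C_a}|R_k|\,d\Pi_a$, with no cancellation and no sign weight needed.

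\item Lemma~\ref{lem:triangular-consequences} then bounds the inside error (the part of $\PredErr_{\theta,k}$ coming from $p\in C_a$) by this total variation plus prefix errors.

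\item Finally, $|\nu^{\mathrm{loc}}_k|(\mathcal P)\le|\nu^{\mathsf D_\theta,\Pi}_{g_{\mathrm{all}},k}|(\mathcal P)+|\nu^{\mathrm{out}}_k|(\mathcal P)$. The first term is at most $\MCErr^\Gamma_{\mathsf D_\theta}(\Pi;\mathcal G_Q)$. The second is at most $Q^{-k}\sum_a\int_{\mathcal P\setminus C_a}|R_k|\,d\Pi_a$, which is controlled by the outside error and the prefix errors.
\end{itemize}

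This also completes your far-part argument. Its Walsh approximation error is $\alpha|R_k|$ integrated over all $(a,p)$, i.e.\ $C\alpha\PredErr_{\theta,k}(\Pi)$. That term can only be absorbed into the left-hand side once the inside part has been bounded.
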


\begin{proof}
All constants in this proof may depend on \(k\), \(\mathcal R_0\), and the constants in Assumption~\ref{ass:witness}, but not on \(Q\), \(\theta\), or \(\Pi\).  The accuracy \(\alpha\) in the Walsh approximation was fixed small enough that the terms \(C\alpha\PredErr_{\theta,j}(\Pi)\) that arise can be absorbed into the left-hand side.
For brevity, write
\[
e_i(a,p):=|p_i-v_{\theta,i}(a)|,
\qquad
\mathsf R_{a,i}(p):=R_i(p_{<i},p_i,\mu_{v_\theta(a)}).
\]
Because \(\Gamma(\mu_{v_\theta(a)})=v_\theta(a)\) by Assumption~\ref{ass:witness}(i), \(\mathsf R_{a,i}(v_\theta(a))=0\).

\paragraph{Controlling the First \(k-1\) Coordinates.}
For \(j<k\) and \(p_j\in\mathcal P_j\), define
\[
s^{(j)}_{p_j}(a)
:=
\sgn(p_j-b_j(a_j)).
\]
Since \(v_{\theta,j}(a)=b_j(a_j)\) for \(j<k\), this is \(\sgn(p_j-v_{\theta,j}(a))\).  As a function of \(a_j\), it is one of the one-sided threshold signs in \(\mathcal T_Q\).  Let \(\widehat s^{(j)}_{p_j}\) be its Walsh approximation from Lemma~\ref{lem:walsh}.  Lemma~\ref{lem:triangular-consequences} gives
\[
s^{(j)}_{p_j}(a)\mathsf R_{a,j}(p)
\ge
c_{\mathrm{anti}}e_j(a,p)
-
C_{\mathrm{prev}}\sum_{i<j}e_i(a,p),
\]
and also
\[
|\mathsf R_{a,j}(p)|
\le
C e_j(a,p)+C\sum_{i<j}e_i(a,p).
\]
Therefore, using \(\|\widehat s^{(j)}_{p_j}-s^{(j)}_{p_j}\|_\infty\le\alpha\),
\[
\widehat s^{(j)}_{p_j}(a)\mathsf R_{a,j}(p)
\ge
c e_j(a,p)
-
C\sum_{i<j}e_i(a,p)
\]
after fixing \(\alpha\) sufficiently small and decreasing \(c>0\).  Integrating over \((a,p)\) and rearranging gives
\[
c\PredErr_{\theta,j}(\Pi)
\le
\left|
\frac1{Q^k}\sum_a\int
\widehat s^{(j)}_{p_j}(a)\mathsf R_{a,j}(p)\,\Pi_a(dp)
\right|
+
C\sum_{i<j}\PredErr_{\theta,i}(\Pi).
\]
The Walsh coefficients of \(\widehat s^{(j)}_{p_j}(a)\), viewed as a function of \(a\), are step functions of \(p_j\) and hence measurable.  They also satisfy the uniform bound \(C\log(Q+1)\).  Lemma~\ref{lem:variable-walsh-tv} therefore bounds the term inside the absolute value by
\[
C\log(Q+1)\MCErr^\Gamma_{\mathsf D_\theta}(\Pi;\mathcal G_Q),
\]
and hence
\[
\PredErr_{\theta,j}(\Pi)
\le
C\log(Q+1)\MCErr^\Gamma_{\mathsf D_\theta}(\Pi;\mathcal G_Q)
+
C\sum_{i<j}\PredErr_{\theta,i}(\Pi).
\]
Induction over \(j=1,\ldots,k-1\) yields
\begin{equation}\label{eq:prefix-coordinate-bound}
\sum_{j<k}\PredErr_{\theta,j}(\Pi)
\le
C\log(Q+1)\MCErr^\Gamma_{\mathsf D_\theta}(\Pi;\mathcal G_Q).
\end{equation}

\paragraph{Predictions Outside the Boxes.}
Set
\[
r:=5\eta.
\]
Since \(c_\eta<1/20\), we have \(r<d_Q/4\).
Define boxes around the unperturbed property vectors by
\[
C_a
:=
\{p\in\mathcal P:\ |p_i-b_i(a_i)|\le r\ \text{for every }i\in[k]\}.
\]
Each true property vector \(v_\theta(a)\) lies in \(C_a\), and the boxes are pairwise disjoint because \(2r<d_Q\).  Let
\[
E_k^{\mathrm{out}}
:=
\frac1{Q^k}
\sum_a
\int_{\mathcal P\setminus C_a}
|p_k-v_{\theta,k}(a)|\,\Pi_a(dp).
\]
Decompose this error according to whether the final prediction is near \(b_k(a_k)\).  Define
\[
\begin{aligned}
E_k^{\mathrm{out,near}}
&:=
\frac1{Q^k}
\sum_a
\int_{\mathcal P\setminus C_a}
\mathbf 1\{|p_k-b_k(a_k)|\le r\}
|p_k-v_{\theta,k}(a)|\,\Pi_a(dp),
\\
E_k^{\mathrm{out,far}}
&:=
\frac1{Q^k}
\sum_a
\int_{\mathcal P\setminus C_a}
\mathbf 1\{|p_k-b_k(a_k)|>r\}
|p_k-v_{\theta,k}(a)|\,\Pi_a(dp).
\end{aligned}
\]
Then
\[
E_k^{\mathrm{out}}
=
E_k^{\mathrm{out,near}}
+
E_k^{\mathrm{out,far}}.
\]
On the near part, \(p\notin C_a\) implies that some prefix coordinate \(i<k\) has \(|p_i-b_i(a_i)|>r\), while
\[
|p_k-v_{\theta,k}(a)|
\le r+\eta
\le C\sum_{i<k}|p_i-v_{\theta,i}(a)|.
\]
The second inequality uses \(v_{\theta,i}(a)=b_i(a_i)\) for \(i<k\), and the fact that one prefix error is at least \(r\) while \(r\) and \(\eta\) are fixed constant multiples of one another.
Therefore
\[
E_k^{\mathrm{out,near}}
\le
C\sum_{i<k}\PredErr_{\theta,i}(\Pi).
\]

For the far part, define the two-sided threshold sign
\[
s^{(k)}_{p_k}(a)
:=
\frac12
\left(
\sgn(p_k-r-b_k(a_k))
+
\sgn(p_k+r-b_k(a_k))
\right).
\]
As a function of \(a_k\), this belongs to \(\mathcal T_Q\).  On the event \(|p_k-b_k(a_k)|>r\), it agrees with \(\sgn(p_k-v_{\theta,k}(a))\), because \(|v_{\theta,k}(a)-b_k(a_k)|=\eta<r\).  Lemma~\ref{lem:triangular-consequences} gives
\[
c_{\mathrm{anti}}\mathbf 1\{|p_k-b_k(a_k)|>r\}|p_k-v_{\theta,k}(a)|
\le
s^{(k)}_{p_k}(a)\mathsf R_{a,k}(p)
+
C\sum_{i<k}|p_i-v_{\theta,i}(a)|.
\]
Let \(\widehat s^{(k)}_{p_k}\) be the Walsh approximation to \(s^{(k)}_{p_k}\).  Replacing \(s^{(k)}_{p_k}\) by \(\widehat s^{(k)}_{p_k}\), integrating, and controlling the approximation error gives
\[
\begin{aligned}
c_{\mathrm{anti}}E_k^{\mathrm{out,far}}
&\le
\left|
\frac1{Q^k}\sum_a\int
\widehat s^{(k)}_{p_k}(a)\mathsf R_{a,k}(p)\,\Pi_a(dp)
\right|
\\
&\quad
+
C\sum_{i<k}\PredErr_{\theta,i}(\Pi)
+
C\alpha\PredErr_{\theta,k}(\Pi).
\end{aligned}
\]
Indeed, the extra \(C\alpha\PredErr_{\theta,k}(\Pi)\) term comes from
\[
\left|\widehat s^{(k)}_{p_k}(a)-s^{(k)}_{p_k}(a)\right|\,|\mathsf R_{a,k}(p)|
\le
\alpha\left(
C e_k(a,p)+C\sum_{i<k}e_i(a,p)
\right),
\]
where the residual bound is the upper inequality in Lemma~\ref{lem:triangular-consequences}.  The Walsh coefficients again satisfy the conditions of Lemma~\ref{lem:variable-walsh-tv}, with coefficient bound \(C\log(Q+1)\), so that lemma bounds the term inside the absolute value by
\[
C\log(Q+1)\MCErr^\Gamma_{\mathsf D_\theta}(\Pi;\mathcal G_Q).
\]
Therefore
\[
E_k^{\mathrm{out,far}}
\le
C\log(Q+1)\MCErr^\Gamma_{\mathsf D_\theta}(\Pi;\mathcal G_Q)
+
C\sum_{i<k}\PredErr_{\theta,i}(\Pi)
+
C\alpha\PredErr_{\theta,k}(\Pi).
\]
Combining the near and far bounds with \eqref{eq:prefix-coordinate-bound},
\begin{equation}\label{eq:final-outside-bound}
E_k^{\mathrm{out}}
\le
C\log(Q+1)\MCErr^\Gamma_{\mathsf D_\theta}(\Pi;\mathcal G_Q)
+
C\alpha\PredErr_{\theta,k}(\Pi).
\end{equation}

\paragraph{Predictions Inside the Boxes.}
Inside the boxes, we use the all-ones group rather than another threshold sign.  Because the boxes \(C_a\) are pairwise disjoint, the residual contributions localized to different contexts occupy disjoint regions of the prediction space and therefore do not cancel in total variation.  The all-ones residual also includes predictions outside the boxes, whose contribution is controlled by the preceding bounds.

Let
\[
E_k^{\mathrm{loc}}
:=
\frac1{Q^k}
\sum_a
\int_{C_a}
|p_k-v_{\theta,k}(a)|\,\Pi_a(dp),
\]
so that
\[
\PredErr_{\theta,k}(\Pi)
=
E_k^{\mathrm{loc}}+E_k^{\mathrm{out}}.
\]
Let \(\nu^{\mathsf D_\theta,\Pi}_{g_{\mathrm{all}},k}\) be the signed measure for the final coordinate and the all-ones group, and define its localized part by
\[
\nu^{\mathrm{loc}}_k(A)
:=
\frac1{Q^k}
\sum_a
\int_{A\cap C_a}
\mathsf R_{a,k}(p)\,\Pi_a(dp).
\]
Because the boxes \(C_a\) are pairwise disjoint,
\[
|\nu^{\mathrm{loc}}_k|(\mathcal P)
=
\frac1{Q^k}
\sum_a
\int_{C_a}
|\mathsf R_{a,k}(p)|\,\Pi_a(dp).
\]
The lower inequality in Lemma~\ref{lem:triangular-consequences} implies, pointwise,
\[
c_{\mathrm{anti}}e_k(a,p)
\le
|\mathsf R_{a,k}(p)|
+
C\sum_{i<k}e_i(a,p).
\]
Integrating this bound over \(C_a\), summing over \(a\), and dividing by \(Q^k\) gives
\[
E_k^{\mathrm{loc}}
\le
C|\nu^{\mathrm{loc}}_k|(\mathcal P)
+
C\sum_{i<k}\PredErr_{\theta,i}(\Pi).
\]
Write
\[
\nu^{\mathsf D_\theta,\Pi}_{g_{\mathrm{all}},k}
=
\nu^{\mathrm{loc}}_k+\nu^{\mathrm{out}}_k,
\]
where \(\nu^{\mathrm{out}}_k\) is the contribution from \(p\notin C_a\).  Lemma~\ref{lem:triangular-consequences} gives
\[
|\nu^{\mathrm{out}}_k|(\mathcal P)
\le
C E_k^{\mathrm{out}}
+
C\sum_{i<k}\PredErr_{\theta,i}(\Pi).
\]
Here the total variation of the outside contribution is bounded by the integral of \(|\mathsf R_{a,k}(p)|\) over \(p\notin C_a\); the boxes \(C_a\) are disjoint, but their complements need not be.  The upper inequality in Lemma~\ref{lem:triangular-consequences} then bounds this integral by \(E_k^{\mathrm{out}}\) and the prefix errors.
Since
\[
|\nu^{\mathsf D_\theta,\Pi}_{g_{\mathrm{all}},k}|(\mathcal P)
\le
\MCErr^\Gamma_{\mathsf D_\theta}(\Pi;\mathcal G_Q),
\]
the measure decomposition also gives
\[
|\nu^{\mathrm{loc}}_k|(\mathcal P)
\le
|\nu^{\mathsf D_\theta,\Pi}_{g_{\mathrm{all}},k}|(\mathcal P)
+
|\nu^{\mathrm{out}}_k|(\mathcal P).
\]
Combining these bounds gives
\[
E_k^{\mathrm{loc}}
\le
C\MCErr^\Gamma_{\mathsf D_\theta}(\Pi;\mathcal G_Q)
+
C E_k^{\mathrm{out}}
+
C\sum_{i<k}\PredErr_{\theta,i}(\Pi).
\]
Using \eqref{eq:prefix-coordinate-bound} and \eqref{eq:final-outside-bound},
\[
\PredErr_{\theta,k}(\Pi)
\le
C\log(Q+1)\MCErr^\Gamma_{\mathsf D_\theta}(\Pi;\mathcal G_Q)
+
C\alpha\PredErr_{\theta,k}(\Pi).
\]
Absorbing the final term into the left-hand side gives
\[
\PredErr_{\theta,k}(\Pi)
\le
C\log(Q+1)\MCErr^\Gamma_{\mathsf D_\theta}(\Pi;\mathcal G_Q).
\]
Together with \eqref{eq:prefix-coordinate-bound}, this proves the proposition.
\end{proof}

\subsection{Recovering the Sign Assignment}

Proposition~\ref{prop:lower-anticoarsening} converts small \(\Gamma\)-ECE into small average prediction error.  Distinct sign assignments produce different true property vectors on a constant fraction of the contexts, so sufficiently small prediction error identifies the true assignment.  The KL bound in Assumption~\ref{ass:witness}(v) limits how much information \(n\) samples contain about that assignment, and Fano's inequality then gives the required lower bound on sample complexity.

\begin{lemma}[Different Sign Assignments Produce Separated Property Vectors]\label{lem:lower-separation}
For every distinct \(\theta,\theta'\in\Theta_Q\),
\[
\frac1{Q^k}
\sum_{a\in\mathcal X_Q}
\|v_\theta(a)-v_{\theta'}(a)\|_1
\ge
2\rho_{\mathrm{pack}}\eta.
\]
\end{lemma}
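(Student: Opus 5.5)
The plan is a direct computation from the definition of \(v_\theta\) and the packing property of \(\Theta_Q\). Fix distinct \(\theta,\theta'\in\Theta_Q\) and a context \(a\in\mathcal X_Q\). By construction, \(v_\theta(a)=b(a)+\eta\theta_a e_k\) and \(v_{\theta'}(a)=b(a)+\eta\theta'_a e_k\). The unperturbed vector \(b(a)\) cancels, so the difference is supported on the final coordinate:
\[
v_\theta(a)-v_{\theta'}(a)=\eta(\theta_a-\theta'_a)e_k.
\]
Taking the \(\ell_1\) norm then gives \(\|v_\theta(a)-v_{\theta'}(a)\|_1=\eta|\theta_a-\theta'_a|\). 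Since \(\theta_a,\theta'_a\in\{-1,+1\}\), this quantity equals \(2\eta\) when \(\theta_a\neq\theta'_a\) and \(0\) otherwise.

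Summing over contexts yields
\[
\frac1{Q^k}\sum_{a\in\mathcal X_Q}\|v_\theta(a)-v_{\theta'}(a)\|_1
=\frac{2\eta}{Q^k}\,\bigl|\{a\in\mathcal X_Q:\theta_a\neq\theta'_a\}\bigr|.
\]
The packing property of \(\Theta_Q\) (the Gilbert bound chosen above) says that distinct \(\theta,\theta'\) disagree on at least \(\rho_{\mathrm{pack}}Q^k\) contexts. Substituting this bound gives the claimed lower bound \(2\rho_{\mathrm{pack}}\eta\).

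The argument is elementary. The only point to check is that the construction perturbs only the \(k\)-th coordinate and leaves the prefix coordinates at the shared values \(b_j(a_j)\), so that no cancellation or extra terms arise. That check is immediate from the definition of \(v_\theta\), so I do not expect any real obstacle.
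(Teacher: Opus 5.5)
Your proposal is correct and follows the paper's proof: the vectors $v_\theta(a)$ and $v_{\theta'}(a)$ agree in the first $k-1$ coordinates and differ by $2\eta$ in the last coordinate exactly when $\theta_a\neq\theta'_a$. The packing condition then supplies at least $\rho_{\mathrm{pack}}Q^k$ such contexts, which gives the bound.
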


\begin{proof}
For every context \(a\), the vectors \(v_\theta(a)\) and \(v_{\theta'}(a)\) agree in their first \(k-1\) coordinates.  Whenever \(\theta_a\neq\theta'_a\), their final coordinates satisfy
\[
|v_{\theta,k}(a)-v_{\theta',k}(a)|=2\eta.
\]
The packing condition gives at least \(\rho_{\mathrm{pack}} Q^k\) such contexts.
\end{proof}

\begin{lemma}[Prediction Error Implies Exact Decoding]\label{lem:lower-decoding}
There exists a decoder \(\widehat\theta(\Pi)\) such that, if
\[
\PredErr_\theta(\Pi)\le\frac{\rho_{\mathrm{pack}}\eta}{2},
\]
then
\[
\widehat\theta(\Pi)=\theta.
\]
\end{lemma}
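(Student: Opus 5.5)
The plan is a minimum-distance decoder combined with the triangle inequality. Given a randomized predictor \(\Pi\), form its mean output at each context,
\[
m_a:=\int_{\mathcal P}p\,\Pi_a(dp)\in\mathcal P,
\]
which is well defined because \(\mathcal P\) is compact. Define
\[
\widehat\theta(\Pi)\in\arg\min_{\theta'\in\Theta_Q}\frac1{Q^k}\sum_{a\in\mathcal X_Q}\|m_a-v_{\theta'}(a)\|_1 .
\]
Ties can be broken arbitrarily; we will show there are none in the relevant case. The decoder depends only on \(\Pi\), not on \(\theta\), as required.

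First, compare the mean outputs with the truth. By Jensen's inequality, since \(\|\cdot\|_1\) is convex,
\[
\|m_a-v_\theta(a)\|_1\le\int\|p-v_\theta(a)\|_1\,\Pi_a(dp).
\]
Averaging over \(a\) gives
\[
D(\theta):=\frac1{Q^k}\sum_a\|m_a-v_\theta(a)\|_1\le\PredErr_\theta(\Pi)\le\frac{\rho_{\mathrm{pack}}\eta}{2}.
\]

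Next, take any \(\theta'\neq\theta\) in \(\Theta_Q\). The triangle inequality, applied pointwise and then averaged, together with Lemma~\ref{lem:lower-separation}, gives
\[
D(\theta')\ge\frac1{Q^k}\sum_a\|v_\theta(a)-v_{\theta'}(a)\|_1-D(\theta)\ge 2\rho_{\mathrm{pack}}\eta-\frac{\rho_{\mathrm{pack}}\eta}{2}=\frac32\rho_{\mathrm{pack}}\eta.
\]
Since \(\rho_{\mathrm{pack}}\eta>0\), this strictly exceeds \(D(\theta)\le\rho_{\mathrm{pack}}\eta/2\). Hence \(\theta\) is the unique minimizer and \(\widehat\theta(\Pi)=\theta\).

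There is no real obstacle here. The only point needing care is that \(\Pi\) is randomized, so the decoder must be defined from a deterministic summary of \(\Pi_a\). Using the mean together with Jensen's inequality handles this. An alternative is to use \(\int\|p-v_{\theta'}(a)\|_1\,\Pi_a(dp)\) directly as the decoding score, which is \(\PredErr_{\theta'}(\Pi)\); the same triangle-inequality argument then applies without Jensen. Measurability is not an issue, because \(\mathcal X_Q\) and \(\Theta_Q\) are finite.
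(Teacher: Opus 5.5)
Your proposal is correct and follows the paper's proof essentially verbatim: the paper also averages \(\Pi_a\) to a mean prediction, applies Jensen's inequality, decodes by the nearest \(\theta'\) in average \(\ell_1\) distance, and uses Lemma~\ref{lem:lower-separation} with the triangle inequality to get \(2\rho_{\mathrm{pack}}\eta-\rho_{\mathrm{pack}}\eta/2>\rho_{\mathrm{pack}}\eta/2\). Your alternative score \(\PredErr_{\theta'}(\Pi)\) also works, since the triangle inequality can be applied inside the integral, so Jensen is not needed there.
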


\begin{proof}
For each \(a\), define the mean prediction value
\[
\bar p_\Pi(a)
:=
\int_{\mathcal P}p\,\Pi_a(dp).
\]
By Jensen's inequality,
\[
\frac1{Q^k}\sum_a
\|\bar p_\Pi(a)-v_\theta(a)\|_1
\le
\PredErr_\theta(\Pi).
\]
Decode by nearest neighbor:
\[
\widehat\theta(\Pi)
\in
\arg\min_{\theta'\in\Theta_Q}
\frac1{Q^k}\sum_a
\|\bar p_\Pi(a)-v_{\theta'}(a)\|_1.
\]
If \(\theta'\neq\theta\), then Lemma~\ref{lem:lower-separation} and the triangle inequality imply
\[
\frac1{Q^k}\sum_a
\|\bar p_\Pi(a)-v_{\theta'}(a)\|_1
\ge
2\rho_{\mathrm{pack}}\eta-\frac{\rho_{\mathrm{pack}}\eta}{2}
>
\frac{\rho_{\mathrm{pack}}\eta}{2}
\ge
\frac1{Q^k}\sum_a
\|\bar p_\Pi(a)-v_\theta(a)\|_1.
\]
Thus the unique nearest neighbor is \(\theta\).
\end{proof}

\begin{lemma}[Pairwise KL Bound]\label{lem:lower-kl}
For every \(\theta,\theta'\in\Theta_Q\) and every sample size \(n\),
\[
\KL(\mathsf D_\theta^n\,\|\,\mathsf D_{\theta'}^n)
\le
4C_{\mathrm{KL}}\,n\eta^2.
\]
\end{lemma}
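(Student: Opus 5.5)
We use the tensorization of KL divergence for product measures together with the chain rule over the common context marginal. Since \(\mathsf D_\theta^n\) and \(\mathsf D_{\theta'}^n\) are \(n\)-fold products of the one-sample laws,
\[
\KL(\mathsf D_\theta^n\,\|\,\mathsf D_{\theta'}^n)
=
n\,\KL(\mathsf D_\theta\,\|\,\mathsf D_{\theta'}).
\]
It therefore suffices to show \(\KL(\mathsf D_\theta\,\|\,\mathsf D_{\theta'})\le 4C_{\mathrm{KL}}\eta^2\).

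For the one-sample bound, note that both distributions have the same context marginal \(\Unif(\mathcal X_Q)\). The chain rule for KL divergence then yields
\[
\KL(\mathsf D_\theta\,\|\,\mathsf D_{\theta'})
=
\frac1{Q^k}\sum_{a\in\mathcal X_Q}
\KL\bigl(\mu_{v_\theta(a)}\,\|\,\mu_{v_{\theta'}(a)}\bigr).
\]
Each term can be bounded using Assumption~\ref{ass:witness}(v). This applies because \(v_\theta(a)\) and \(v_{\theta'}(a)\) lie in \(\mathcal R_0\), as shown when \(\eta\) was chosen. We obtain
\[
\KL\bigl(\mu_{v_\theta(a)}\,\|\,\mu_{v_{\theta'}(a)}\bigr)
\le
C_{\mathrm{KL}}\|v_\theta(a)-v_{\theta'}(a)\|_2^2.
\]
The two vectors agree in their first \(k-1\) coordinates, and their final coordinates differ by \(\eta|\theta_a-\theta'_a|\in\{0,2\eta\}\). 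Hence the squared distance is at most \(4\eta^2\).

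Averaging this bound over \(a\) gives \(\KL(\mathsf D_\theta\,\|\,\mathsf D_{\theta'})\le 4C_{\mathrm{KL}}\eta^2\), and multiplying by \(n\) gives the claim. The only step needing any care is the chain-rule decomposition for the general (possibly non-discrete) outcome space \(\mathcal Y\). If \(\mu_{v_\theta(a)}\not\ll\mu_{v_{\theta'}(a)}\), Assumption~\ref{ass:witness}(v) would be violated, so every term is finite. The decomposition then follows from the disintegration of densities relative to a common dominating measure on the finite context space, which poses no real obstacle.
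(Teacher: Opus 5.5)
Your proposal is correct and follows the paper's proof essentially step for step. You decompose the one-sample divergence over the shared uniform context marginal, bound each conditional term via Assumption~\ref{ass:witness}(v) using that the vectors differ only in the final coordinate by at most $2\eta$, and tensorize over the $n$ samples; your remark that the finite bound in (v) forces absolute continuity is a harmless detail the paper leaves implicit.
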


\begin{proof}
Since \(X\) is uniform on \(\mathcal X_Q\) under both distributions,
\[
\KL(\mathsf D_\theta\,\|\,\mathsf D_{\theta'})
=
\frac1{Q^k}
\sum_a
\KL\left(
\mu_{v_\theta(a)}
\,\middle\|\,
\mu_{v_{\theta'}(a)}
\right).
\]
By Assumption~\ref{ass:witness}(v),
\[
\KL\left(
\mu_{v_\theta(a)}
\,\middle\|\,
\mu_{v_{\theta'}(a)}
\right)
\le
C_{\mathrm{KL}}
\|v_\theta(a)-v_{\theta'}(a)\|_2^2
\le
4C_{\mathrm{KL}}\eta^2.
\]
Tensorization over \(n\) i.i.d. samples gives the claim.
\end{proof}

Combining Proposition~\ref{prop:lower-anticoarsening}, Lemma~\ref{lem:lower-decoding}, and Lemma~\ref{lem:lower-kl} gives a lower bound on sample complexity at a fixed resolution.

\begin{proposition}[Lower Bound at Resolution \(Q\)]\label{prop:lower-resolution}
There are constants \(c_*,C_*>0\), depending only on \(k\), \(\mathcal R_0\), and the constants in Assumption~\ref{ass:witness}, such that the following holds for all sufficiently large powers of two \(Q\).  If a possibly randomized learner, given \(n\) i.i.d. samples from \(\mathsf D_\theta\), outputs a predictor \(\Pi\) satisfying
\[
\MCErr^\Gamma_{\mathsf D_\theta}(\Pi;\mathcal G_Q)
\le
c_*\frac{\eta}{\log(Q+1)}
\]
with probability at least \(2/3\) for every \(\theta\in\Theta_Q\), then
\[
n\ge C_*Q^{k+2}.
\]
\end{proposition}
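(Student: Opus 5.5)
The plan is to reduce to multiple hypothesis testing over \(\Theta_Q\) and apply Fano's inequality. Choose \(c_*\) so that \(C_{\mathrm{pred}}c_*\le\rho_{\mathrm{pack}}/2\), where \(C_{\mathrm{pred}}\) is the constant of Proposition~\ref{prop:lower-anticoarsening}. Suppose that on some sample \(S\) and seed \(\zeta\) the output satisfies \(\MCErr^\Gamma_{\mathsf D_\theta}(\Pi;\mathcal G_Q)\le c_*\eta/\log(Q+1)\). Then Proposition~\ref{prop:lower-anticoarsening} (valid once \(Q\) is large enough) gives
\[
\PredErr_\theta(\Pi)\le C_{\mathrm{pred}}\log(Q+1)\cdot c_*\frac{\eta}{\log(Q+1)}\le\frac{\rho_{\mathrm{pack}}\eta}{2},
\]
and Lemma~\ref{lem:lower-decoding} yields \(\widehat\theta(\Pi)=\theta\). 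The decoder \(\widehat\theta\) depends only on \(\Pi\) and on the known family \(\{v_{\theta'}\}\), not on \(\theta\). The composed estimator \(\widetilde\theta(S,\zeta):=\widehat\theta(\mathsf A_n(\mathcal G_Q,S,\zeta))\) is therefore a legitimate randomized estimator, and it satisfies \(\Prob[\widetilde\theta=\theta]\ge 2/3\) for every \(\theta\in\Theta_Q\).

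Next I apply Fano's inequality with \(\theta\) drawn uniformly from \(\Theta_Q\). Randomization is harmless here, since \(\zeta\) is independent of \(\theta\) and of \(S\), so it can be absorbed and the data processing inequality still holds. Bound the mutual information by the average pairwise KL divergence, which Lemma~\ref{lem:lower-kl} controls:
\[
I(\theta;S)\le\max_{\theta,\theta'}\KL(\mathsf D_\theta^n\|\mathsf D_{\theta'}^n)\le 4C_{\mathrm{KL}}n\eta^2.
\]
Fano then gives
\[
\frac13\ge\Prob[\widetilde\theta\ne\theta]\ge1-\frac{4C_{\mathrm{KL}}n\eta^2+\log2}{\log|\Theta_Q|}.
\]
Combining this with \(\log|\Theta_Q|\ge c_{\mathrm{pack}}Q^k\) yields \(4C_{\mathrm{KL}}n\eta^2\ge\tfrac23 c_{\mathrm{pack}}Q^k-\log 2\). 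For \(Q\) large the right side is at least \(\tfrac13c_{\mathrm{pack}}Q^k\).

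Finally, substitute \(\eta=c_\eta d_Q\) with \(d_Q=\min_jW_j/(3(Q-1))\). This gives \(\eta^2\le C Q^{-2}\), and hence \(n\ge C_*Q^{k+2}\) with \(C_*\) depending on \(c_{\mathrm{pack}},C_{\mathrm{KL}},c_\eta\) and \(\mathcal R_0\). The key constraint is that "sufficiently large \(Q\)" must cover three things: the validity of Proposition~\ref{prop:lower-anticoarsening}, the absorption of the \(\log 2\) term, and the requirement that \(Q\) be a power of two.

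The main obstacle is being careful about the randomness in the Fano step. The success event is with respect to both \(S\) and \(\zeta\), and the MCErr guarantee is for the fixed true \(\theta\). The decoding implication is pointwise in \((S,\zeta)\), though, so the success probability transfers directly. Beyond that, the argument is routine bookkeeping of constants.
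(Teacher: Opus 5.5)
Your proposal is correct and follows the paper's proof essentially step for step. You choose \(c_*\) so that Proposition~\ref{prop:lower-anticoarsening} forces \(\PredErr_\theta(\Pi)\le\rho_{\mathrm{pack}}\eta/2\), decode exactly via Lemma~\ref{lem:lower-decoding}, and apply Fano's inequality under a uniform prior with the mutual information bounded through Lemma~\ref{lem:lower-kl}, absorbing the \(\log 2\) term and using \(\eta\asymp Q^{-1}\); your handling of the learner's independent seed also matches the paper's.
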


\begin{proof}
Choose \(c_*>0\) small enough that Proposition~\ref{prop:lower-anticoarsening} implies
\[
\PredErr_\theta(\Pi)\le\frac{\rho_{\mathrm{pack}}\eta}{2}
\]
whenever the learner achieves the multicalibration error required in the proposition.  By Lemma~\ref{lem:lower-decoding}, the learner then induces an exact decoder for \(\theta\) with success probability at least \(2/3\).

Let \(\Theta\) be uniformly distributed over \(\Theta_Q\), and let the sample be drawn from \(\mathsf D_\Theta^n\).  If the learner is randomized, include its independent random seed in the decoder's observation; this does not change the KL bound.  Fano's inequality~\citep{CoverThomas2006} gives
\[
\Prob[\widehat\Theta\neq\Theta]
\ge
1-
\frac{I(\Theta;\text{sample})+\log 2}{\log|\Theta_Q|}.
\]
The standard bound on mutual information in terms of pairwise KL divergence, together with Lemma~\ref{lem:lower-kl}, implies
\[
I(\Theta;\text{sample})
\le
4C_{\mathrm{KL}}n\eta^2.
\]
A decoder with error probability at most \(1/3\) therefore requires
\[
4C_{\mathrm{KL}}n\eta^2+\log 2
\ge
\frac23\log|\Theta_Q|
\ge
cQ^k
\]
for a constant \(c>0\).  For all sufficiently large \(Q\), the \(\log2\) term is absorbed into the right-hand side, so
\[
4C_{\mathrm{KL}}n\eta^2
\ge
cQ^k
\]
after decreasing \(c\).  Since \(\eta\asymp Q^{-1}\),
\[
n\ge C_*Q^{k+2}.
\]
\end{proof}

\subsection{Proof of the Main Theorem}

Choosing \(Q\) as a function of the target error \(\eps\) converts Proposition~\ref{prop:lower-resolution} into Theorem~\ref{thm:lower-bound}.

\begin{proof}[Proof of Theorem~\ref{thm:lower-bound}]
For each large power of two \(Q\), Proposition~\ref{prop:lower-resolution} gives a finite collection of candidate distributions for which \(n\ge C_*Q^{k+2}\) samples are required at error level
\[
\eps_Q
:=
c_*\frac{\eta}{\log(Q+1)}
\asymp
\frac1{Q\log Q}.
\]
Given sufficiently small \(\eps>0\), choose \(Q\) to be the largest power of two such that
\[
\eps
\le
\eps_Q.
\]
Then
\[
Q
\ge
c'\frac1{\eps\log(C'/\eps)}
\]
for constants \(c',C'>0\).  Therefore
\[
n
\ge
C_*Q^{k+2}
\ge
c\frac{\eps^{-(k+2)}}{\log^{k+2}(C/\eps)}
\]
for constants \(c,C>0\) depending only on \(k\), \(\kappa\), \(\mathcal R_0\), and the constants in Assumption~\ref{ass:witness}.
The group family satisfies
\[
|\mathcal G_Q|
\le
C_{\alpha,k}\log^3(Q+1)
\le
\eps^{-\kappa}
\]
for every fixed \(\kappa>0\) and all sufficiently small \(\eps\).  Taking
\[
\mathcal X_\eps:=\mathcal X_Q,
\qquad
\mathcal G_\eps:=\mathcal G_Q,
\qquad
\mathfrak D_\eps:=\{\mathsf D_\theta:\theta\in\Theta_Q\}
\]
proves the theorem.
\end{proof}

\section{The Upper Bound}\label{sec:upper}

Our route to the upper bound is through an online forecasting problem.  On each round, the forecaster constructs a prediction rule from the preceding rounds, applies it to the current context, and then observes the outcome.  We first control the empirical multicalibration error accumulated over these rounds.

Given a dataset consisting of \(T\) i.i.d. context--outcome pairs, we run the online procedure for \(T\) rounds, using one pair on each round.  The context component of the \(t\)-th pair is revealed before the prediction and its outcome component afterward.  The procedure produces \(T\) prediction rules.  We return their average as our predictor.  A martingale argument then transfers the empirical guarantee for the online transcript to a population multicalibration guarantee for this predictor.

We begin by formulating the online problem and expressing its empirical multicalibration error as a collection of objectives that the forecaster must control simultaneously.  We then state the regularity conditions, construct and analyze the forecaster, and carry out the online-to-batch reduction.  Finally, we show how to implement the forecaster efficiently using linear optimization over \(\mathcal M\).

\subsection{The Online Formulation and Its Objectives}

We first describe the online formulation for an arbitrary finite set \(\mathcal P_Q\subset\mathcal P\) of possible predictions.  Fix a horizon \(T\) and a finite group family \(\mathcal G\).

Let \(\mathcal F_{t-1}\) be the sigma-field generated by the observations through round \(t-1\) and any internal randomness used by the forecaster before round \(t\).  At the beginning of round \(t\), the forecaster uses this history to define an \(\mathcal F_{t-1}\)-measurable rule \(\pi_t:\mathcal X\to\Delta(\mathcal P_Q)\).  After the context \(X_t\) is revealed, it outputs \(\pi_t(\cdot\mid X_t)\) as the randomized prediction and then observes the outcome \(Y_t\).

For any sequence of such rules, define the empirical multicalibration error over the \(T\) online rounds by
\[
\widehat{\MCErr}^{\Gamma}_T
:=
\frac1T
\max_{g\in\mathcal G}
\sum_{p\in\mathcal P_Q}
\left\|
\sum_{t=1}^T
 g(X_t)\pi_t(p\mid X_t)R(p,Y_t)
\right\|_1.
\]
Let \(\Sigma:=\{\pm1\}^{\mathcal P_Q\times[k]}\), and write \(s_{p,j}\) for the sign assigned by \(s\in\Sigma\) to the pair \((p,j)\).  Representing each absolute value in the \(\ell_1\)-norm as a maximum over its sign gives
\[
\begin{aligned}
\widehat{\MCErr}^{\Gamma}_T
&=
\frac1T
\max_{\substack{g\in\mathcal G\\s\in\Sigma}}
\sum_{p\in\mathcal P_Q}
\sum_{j=1}^k
s_{p,j}
\sum_{t=1}^T
g(X_t)\pi_t(p\mid X_t)
R_j(p_{<j},p_j,Y_t)
\\
&=
\frac1T
\max_{\substack{g\in\mathcal G\\s\in\Sigma}}
\sum_{t=1}^T
g(X_t)
\E_{p\sim\pi_t(\cdot\mid X_t)}
\left[
\sum_{j=1}^k
s_{p,j}R_j(p_{<j},p_j,Y_t)
\right].
\end{aligned}
\]
The above equalities express the empirical error as the largest cumulative signed residual among the objectives indexed by \((g,s)\in\mathcal G\times\Sigma\).  Thus, the online task is a multiobjective learning problem: the forecaster must control all of these signed objectives simultaneously~\citep{lee2022online}.  We use the framework of online learning with expert advice.  Each pair \((g,s)\) is treated as an expert, whose gain on round \(t\) is
\[
h_t(g,s)
:=
g(X_t)
\E_{p\sim\pi_t(\cdot\mid X_t)}
\left[
\sum_{j=1}^k
s_{p,j}R_j(p_{<j},p_j,Y_t)
\right].
\]
Since the empirical error is the largest cumulative gain of any expert divided by \(T\), our goal is to control the cumulative gain of every expert.  The exponential weights algorithm chooses a distribution \(w_t\) over the experts on each round.  We call the expectation of \(h_t(g,s)\) under this distribution the expected gain under \(w_t\).  Its regret guarantee bounds the cumulative gain of every individual expert by the sum of the expected gains under \(w_t\), plus a sublinear regret term.  It therefore remains to keep these expected gains small.

We use the same weights \(w_t\) to choose the forecaster's distribution on \(\mathcal P_Q\).  For any candidate distribution \(\pi\in\Delta(\mathcal P_Q)\) and any admissible outcome law, we can evaluate the expected gain under \(w_t\) from the corresponding signed residuals.  The forecaster chooses \(\pi\) in a minimax manner, minimizing the largest possible value of this expected gain over admissible outcome laws.  We allow an additive error \(\rho\ge0\) in this minimization.  \citet{noarov2025high} use a related construction for high-dimensional unbiased prediction, in which an online learning algorithm weights signed objectives and those weights guide the randomized forecast.

\subsection{Regularity Assumptions}

The online formulation can be stated for any finite set \(\mathcal P_Q\), but its analysis requires additional structure.  Convexity, compactness, and continuity (Condition~(i) below) allow us to apply Sion's minimax theorem.  Although the forecaster must choose a distribution on \(\mathcal P_Q\) without knowing the conditional outcome law, the theorem allows us to analyze the one-round objective by first fixing that law and then choosing the distribution.  A uniform bound on the residual (Condition~(ii)) controls both the expert gains and the martingale deviations.  Finally, Condition~(iii) places two requirements on the grid.  It has only \(O(Q^k)\) points, and for every admissible outcome law \(\mu\), some \(p\in\mathcal P_Q\) makes every coordinate of \(R(p,\mu)\) small.  After \(\mu\) is fixed in the reversed analysis, choosing the point mass at such a \(p\) makes the expected gain under \(w_t\) small, which bounds the one-round minimax value.  The following assumption makes these requirements precise.

\begin{assumption}[Regularity for the Upper Bound]\label{ass:upper-analytic}
There are constants \(R_{\max},C_{\mathrm{grid}}<\infty\).  For each integer \(Q\ge1\), there is a finite grid \(\mathcal P_Q\subset\mathcal P\) such that the following hold.
\begin{enumerate}[label=(\roman*)]
\item \(\mathcal M\) is a convex compact subset of a topological vector space of finite signed measures on \(\mathcal Y\), and for every \(p\in\mathcal P_Q\) and coordinate \(j\in[k]\), the map
\[
\mu\mapsto R_j(p_{<j},p_j,\mu)
\]
is affine and continuous on \(\mathcal M\).
\item For every \(p\in\mathcal P\) and \(y\in\mathcal Y\),
\[
\|R(p,y)\|_\infty\le R_{\max}.
\]
\item The grid size satisfies
\[
|\mathcal P_Q|\le C_{\mathrm{grid}}Q^k,
\]
and the expected residual can be rounded at rate \(1/Q\):
\[
\Delta_Q
:=
\sup_{\mu\in\mathcal M}
\min_{p\in\mathcal P_Q}
\|R(p,\mu)\|_\infty
\le
\frac{C_{\mathrm{grid}}}{Q}.
\]
\end{enumerate}
\end{assumption}

\begin{remark}[A Sufficient Lipschitz Condition]
A convenient way to ensure Assumption~\ref{ass:upper-analytic}(iii) is to require a constant \(L_R\) such that
\[
\|R(p,\mu)-R(p',\mu)\|_\infty
\le L_R\|p-p'\|_\infty
\qquad
\forall \mu\in\mathcal M,
\quad p,p'\in\mathcal P,
\]
together with a finite grid \(\mathcal P_Q\subset\mathcal P\) whose \(\ell_\infty\)-mesh is \(O(1/Q)\).  Because the residual vanishes at the true property, every \(\Gamma(\mu)\) can then be rounded to some \(p\in\mathcal P_Q\) with \(\|R(p,\mu)\|_\infty=O(1/Q)\).
\end{remark}

\subsection{The Online Forecaster and Its Empirical Guarantee}\label{sec:upper-algorithm}

Fix an integer \(Q\ge1\), and let \(\mathcal P_Q\) be the grid supplied by Assumption~\ref{ass:upper-analytic}.  Algorithm~\ref{alg:upper-online} specifies how the forecaster operates on each round.

\begin{algorithm}[H]
\caption{Online Forecaster}\label{alg:upper-online}
\begin{algorithmic}[1]
\Require Horizon \(T\), group family \(\mathcal G\), grid \(\mathcal P_Q\), law class \(\mathcal M\), level count \(k\), additive optimization error \(\rho\)
\State Set
\[
\eta_{\mathrm{EW}}\gets
\frac{\sqrt{2\log|\mathcal G\times\Sigma|}}{kR_{\max}\sqrt T}.
\]
\State Initialize cumulative gains \(C_0(g,s)\gets0\) for all \((g,s)\in\mathcal G\times\Sigma\).
\For{\(t=1,\ldots,T\)}
\State Compute
\[
w_t(g,s)
\gets
\frac{\exp\{\eta_{\mathrm{EW}}C_{t-1}(g,s)\}}
{\sum_{(g',s')\in\mathcal G\times\Sigma}\exp\{\eta_{\mathrm{EW}}C_{t-1}(g',s')\}}.
\]
\State Define \(\pi_t(\cdot\mid x)\) pointwise for \(x\in\mathcal X\) to satisfy
\begin{equation}\label{eq:upper-minimax}
\begin{aligned}
&\max_{\mu\in\mathcal M}
\E_{p\sim\pi_t(\cdot\mid x)}\left[
\sum_{g\in\mathcal G}
\sum_{s\in\Sigma}
w_t(g,s)g(x)
\sum_{j=1}^ks_{p,j}R_j(p_{<j},p_j,\mu)
\right]
\\
&\qquad\le
\min_{\pi\in\Delta(\mathcal P_Q)}
\max_{\mu\in\mathcal M}
\E_{p\sim\pi}\left[
\sum_{g\in\mathcal G}
\sum_{s\in\Sigma}
w_t(g,s)g(x)
\sum_{j=1}^ks_{p,j}R_j(p_{<j},p_j,\mu)
\right]
+\rho.
\end{aligned}
\end{equation}
\State Observe \(X_t\) and output \(\pi_t(\cdot\mid X_t)\) as the randomized prediction.
\State Observe \(Y_t\).
\State For every \((g,s)\in\mathcal G\times\Sigma\), set
\[
h_t(g,s)
\gets
g(X_t)
\E_{p\sim\pi_t(\cdot\mid X_t)}
\left[
\sum_{j=1}^ks_{p,j}R_j(p_{<j},p_j,Y_t)
\right].
\]
\State Set \(C_t(g,s)\gets C_{t-1}(g,s)+h_t(g,s)\) for every \((g,s)\in\mathcal G\times\Sigma\).
\EndFor
\Ensure Prediction rules \(\pi_1,\ldots,\pi_T\).
\end{algorithmic}
\end{algorithm}

The prediction rules in Algorithm~\ref{alg:upper-online} can be chosen jointly measurable, as shown in Appendix~\ref{app:upper-measurability}.

The forecaster chooses \(\pi_t\) to keep the expected gain under \(w_t\) small on each round, while the no-regret guarantee of the exponential weights algorithm bounds the cumulative gain of every expert by the sum of these expected gains plus a sublinear regret term.  Since the empirical multicalibration error is the largest cumulative expert gain divided by \(T\), this gives the following guarantee.

\begin{theorem}[Empirical Guarantee for the Online Forecaster]\label{thm:upper-online}
Suppose Assumption~\ref{ass:upper-analytic} holds.  Let \((X_t,Y_t)_{t=1}^T\) be any stochastic process such that, conditionally on \(\mathcal F_{t-1}\) and \(X_t\), the conditional law of \(Y_t\) belongs to \(\mathcal M\).  Then Algorithm~\ref{alg:upper-online} satisfies
\[
\E\widehat{\MCErr}^{\Gamma}_T
\le
\rho+k\Delta_Q
+
C_k R_{\max}\sqrt{\frac{\log|\mathcal G|+k|\mathcal P_Q|}{T}},
\]
where \(C_k\) depends only on \(k\).  Consequently, the two bounds in Assumption~\ref{ass:upper-analytic}(iii) give
\begin{equation}\label{eq:upper-empirical-bound}
\E\widehat{\MCErr}^{\Gamma}_T
\le
\rho
+
O\!\left(
\frac1Q+R_{\max}\sqrt{\frac{Q^k+\log|\mathcal G|}{T}}
\right).
\end{equation}
The expectation is over the stochastic transcript and any internal randomization of the forecaster.
\end{theorem}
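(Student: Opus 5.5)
The proof combines the exponential-weights regret bound with a one-round bound on the weighted expected gain, obtained from Sion's minimax theorem.

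\textbf{Regret step.} Each expert gain satisfies \(|h_t(g,s)|\le kR_{\max}\), since \(g\in[0,1]\), \(|s_{p,j}|=1\), and \(\|R\|_\infty\le R_{\max}\) by Assumption~\ref{ass:upper-analytic}(ii). The standard Hedge guarantee for gains in \([-kR_{\max},kR_{\max}]\), with the step size \(\eta_{\mathrm{EW}}\) chosen in Algorithm~\ref{alg:upper-online}, gives pathwise
\[
\max_{(g,s)}\sum_{t=1}^T h_t(g,s)\le\sum_{t=1}^T\sum_{(g,s)}w_t(g,s)h_t(g,s)+kR_{\max}\sqrt{2T\log|\mathcal G\times\Sigma|}.
\]
We have \(\log|\Sigma|=k|\mathcal P_Q|\log 2\), so the regret term is at most \(C_kR_{\max}\sqrt{T(\log|\mathcal G|+k|\mathcal P_Q|)}\). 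Dividing by \(T\), the left side is exactly \(\widehat{\MCErr}^\Gamma_T\), by the sign-array representation derived earlier.

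\textbf{Conditional expectation of the weighted gain.} The weights \(w_t\) and the rule \(\pi_t\) are \(\mathcal F_{t-1}\)-measurable. Let \(\mu_t\in\mathcal M\) be the conditional law of \(Y_t\) given \(\mathcal F_{t-1}\) and \(X_t\). By linearity, and using boundedness to apply Fubini,
\[
\E\Bigl[\sum w_t h_t\Bigm|\mathcal F_{t-1},X_t\Bigr]=\Phi_{X_t}(\pi_t(\cdot\mid X_t),\mu_t),
\]
where \(\Phi_x(\pi,\mu)\) is the bracketed objective in \eqref{eq:upper-minimax}. By \eqref{eq:upper-minimax}, this is at most \(\min_\pi\max_\mu\Phi_{X_t}(\pi,\mu)+\rho\). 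Taking expectations of the regret inequality then reduces the theorem to the bound
\[
V_x:=\min_{\pi}\max_{\mu}\Phi_x(\pi,\mu)\le k\Delta_Q\qquad\text{for every }x.
\]

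\textbf{Minimax step (main obstacle).} The function \(\Phi_x\) is bilinear in \((\pi,\mu)\). The set \(\Delta(\mathcal P_Q)\) is a finite-dimensional simplex, hence compact and convex. The set \(\mathcal M\) is convex and compact, and \(\mu\mapsto R_j(p_{<j},p_j,\mu)\) is affine and continuous by Assumption~\ref{ass:upper-analytic}(i). Since \(\Phi_x\) is a finite sum of such terms, Sion's theorem applies and gives \(V_x=\max_\mu\min_\pi\Phi_x(\pi,\mu)\).

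Now fix \(\mu\). Choose \(p^*\in\mathcal P_Q\) with \(\|R(p^*,\mu)\|_\infty\le\Delta_Q\), which exists by Assumption~\ref{ass:upper-analytic}(iii), and take \(\pi=\delta_{p^*}\). Because \(\sum_{g,s}w_t(g,s)=1\), \(g(x)\in[0,1]\), and \(|s_{p,j}|=1\),
\[
\Phi_x(\delta_{p^*},\mu)\le\sum_{g,s}w_t(g,s)\,g(x)\sum_{j=1}^k|R_j(p^*_{<j},p^*_j,\mu)|\le k\Delta_Q.
\]
Hence \(V_x\le k\Delta_Q\).

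The delicate points are the technical conditions: verifying that Sion applies with the stated topology on \(\mathcal M\), that the max over \(\mu\) is attained, and that \(\pi_t\) is jointly measurable in \(x\) (deferred to the appendix). None of these affects the constants.

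\textbf{Conclusion.} Combining the three steps gives
\[
\E\widehat{\MCErr}^\Gamma_T\le\rho+k\Delta_Q+C_kR_{\max}\sqrt{\frac{\log|\mathcal G|+k|\mathcal P_Q|}{T}}.
\]
Substituting \(\Delta_Q\le C_{\mathrm{grid}}/Q\) and \(|\mathcal P_Q|\le C_{\mathrm{grid}}Q^k\) from Assumption~\ref{ass:upper-analytic}(iii) yields \eqref{eq:upper-empirical-bound}.
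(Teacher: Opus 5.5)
Your proposal is correct and follows the paper's proof essentially step for step. The three steps are the same: the pathwise exponential-weights bound via Hoeffding's lemma (your explicit regret term $kR_{\max}\sqrt{2T\log|\mathcal G\times\Sigma|}$ is exactly what the paper's choice of $\eta_{\mathrm{EW}}$ yields), the identification of the conditional expected weighted gain with the one-round minimax objective at $\mu_t$, and the Sion reversal with a point mass at a grid point satisfying $\|R(p,\mu)\|_\infty\le\Delta_Q$.
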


We prove Theorem~\ref{thm:upper-online} in the next subsection.

\subsection{Analysis of the Online Forecaster}\label{sec:upper-online-forecaster}

Throughout the analysis, \(\pi_t\) denotes the \(\mathcal F_{t-1}\)-measurable rule selected at round \(t\).  Since every group function takes values in \([0,1]\) and the residual bound in Assumption~\ref{ass:upper-analytic}(ii) gives \(\|R\|_\infty\le R_{\max}\), every signed gain satisfies
\[
|h_t(g,s)|\le kR_{\max}.
\]
By the signed-objective representation of the empirical multicalibration error,
\[
\widehat{\MCErr}^{\Gamma}_T
=
\frac1T
\max_{(g,s)\in\mathcal G\times\Sigma}
\sum_{t=1}^T h_t(g,s).
\]
Thus controlling the gains of all experts \((g,s)\in\mathcal G\times\Sigma\) controls empirical \(\Gamma\)-ECE.

\subsubsection{Exponential Weights}

The first step compares the largest cumulative signed residual with the sum of the expected gains under the distributions \(w_t\) produced by exponential weights.

\begin{lemma}[Exponential Weights Bound]\label{lem:upper-exponential-weights}
For every realized transcript of Algorithm~\ref{alg:upper-online},
\[
\max_{(g,s)\in\mathcal G\times\Sigma}\sum_{t=1}^T h_t(g,s)
\le
\sum_{t=1}^T\sum_{g\in\mathcal G}\sum_{s\in\Sigma}w_t(g,s)h_t(g,s)
+
C_k R_{\max}\sqrt{T\bigl(\log|\mathcal G|+k|\mathcal P_Q|\bigr)},
\]
where \(C_k\) depends only on \(k\) and \(h_t(g,s)\) is the signed gain from Algorithm~\ref{alg:upper-online},
\[
h_t(g,s)
=
g(X_t)
\E_{p\sim\pi_t(\cdot\mid X_t)}
\left[
\sum_{j=1}^ks_{p,j}R_j(p_{<j},p_j,Y_t)
\right].
\]
\end{lemma}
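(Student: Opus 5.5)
This is the standard Hedge regret bound for gains, combined with a count of the experts; I would verify it by the usual potential-function argument. Write $N:=|\mathcal G\times\Sigma|=|\mathcal G|\,2^{k|\mathcal P_Q|}$, so that $\log N=\log|\mathcal G|+k|\mathcal P_Q|\log 2$. Define the potential
\[
\Phi_t:=\log\sum_{(g,s)}\exp\{\eta_{\mathrm{EW}}C_t(g,s)\},\qquad \Phi_0=\log N .
\]
The gains can be bounded in advance: $g(X_t)\in[0,1]$, $\pi_t(\cdot\mid X_t)$ is a probability distribution, and $\|R\|_\infty\le R_{\max}$ by Assumption~\ref{ass:upper-analytic}(ii). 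Hence $|h_t(g,s)|\le kR_{\max}$ deterministically, for every realized transcript.

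\textbf{Lower bound on the final potential.} Since the log-sum-exp dominates its largest term,
\[
\Phi_T\ge\eta_{\mathrm{EW}}\max_{(g,s)}C_T(g,s).
\]

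\textbf{One-step increment.} The increment of the potential is
\[
\Phi_t-\Phi_{t-1}=\log\sum_{(g,s)}w_t(g,s)\,e^{\eta_{\mathrm{EW}}h_t(g,s)} .
\]
Under $w_t$ the variable $h_t$ takes values in an interval of length $2kR_{\max}$. Hoeffding's lemma therefore gives
\[
\Phi_t-\Phi_{t-1}\le\eta_{\mathrm{EW}}\sum_{(g,s)}w_t(g,s)h_t(g,s)+\frac{\eta_{\mathrm{EW}}^2(kR_{\max})^2}{2}.
\]
This step needs no probabilistic structure: $w_t$ is determined by $C_{t-1}$, and $h_t$ is a fixed realized number. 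So the bound holds pathwise.

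\textbf{Telescoping and tuning.} Summing the increments over $t$ and dividing by $\eta_{\mathrm{EW}}$ yields
\[
\max_{(g,s)}C_T(g,s)\le\sum_t\sum_{(g,s)}w_t h_t+\frac{\log N}{\eta_{\mathrm{EW}}}+\frac{\eta_{\mathrm{EW}}T(kR_{\max})^2}{2}.
\]
Substituting the algorithm's choice $\eta_{\mathrm{EW}}=\sqrt{2\log N}/(kR_{\max}\sqrt T)$ makes the last two terms equal to $kR_{\max}\sqrt{2T\log N}$.

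\textbf{Only real subtlety: counting experts.} Because the experts are indexed by sign arrays, their number is exponential in the grid size. What enters the regret is $\log N\le\log|\mathcal G|+k|\mathcal P_Q|$, using $\log2<1$. This gives
\[
\max_{(g,s)}C_T(g,s)\le\sum_t\sum_{(g,s)}w_t h_t+\sqrt2\,kR_{\max}\sqrt{T\bigl(\log|\mathcal G|+k|\mathcal P_Q|\bigr)},
\]
which is the claimed bound with $C_k=\sqrt2\,k$. If $N=1$ the regret term vanishes trivially, and $\eta_{\mathrm{EW}}=0$ still gives a valid algorithm.
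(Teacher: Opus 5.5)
Your proof is correct and follows the paper's argument essentially line for line: you use the same log-sum-exp potential, apply Hoeffding's lemma to the one-step increment with gains bounded by $kR_{\max}$, telescope against the largest single expert, and count experts via $\log|\mathcal G\times\Sigma|=\log|\mathcal G|+k|\mathcal P_Q|\log 2$ with the algorithm's learning rate. Your explicit constant $C_k=\sqrt2\,k$ is exactly what the paper's substitution yields.
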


\begin{proof}
Using the learning rate \(\eta_{\mathrm{EW}}\) from Algorithm~\ref{alg:upper-online}, define the exponential weights potential
\[
W_t
:=
\sum_{g\in\mathcal G}\sum_{s\in\Sigma}
\exp\{\eta_{\mathrm{EW}} C_{t-1}(g,s)\}.
\]
Since \(C_t(g,s)=C_{t-1}(g,s)+h_t(g,s)\), and since \(|h_t(g,s)|\le kR_{\max}\), Hoeffding's lemma~\citep{Hoeffding1963} gives
\[
\log\frac{W_{t+1}}{W_t}
=
\log\sum_{g\in\mathcal G}\sum_{s\in\Sigma}
w_t(g,s)\exp\{\eta_{\mathrm{EW}} h_t(g,s)\}
\le
\eta_{\mathrm{EW}}\sum_{g\in\mathcal G}\sum_{s\in\Sigma}w_t(g,s)h_t(g,s)
+\frac{\eta_{\mathrm{EW}}^2k^2R_{\max}^2}{2}.
\]
Summing over \(t\) yields
\[
\log W_{T+1}
\le
\log|\mathcal G\times\Sigma|
+
\eta_{\mathrm{EW}}\sum_{t=1}^T\sum_{g\in\mathcal G}\sum_{s\in\Sigma}
w_t(g,s)h_t(g,s)
+\frac{\eta_{\mathrm{EW}}^2k^2R_{\max}^2T}{2}.
\]
On the other hand, for every \((g,s)\in\mathcal G\times\Sigma\),
\[
\log W_{T+1}\ge \eta_{\mathrm{EW}} C_T(g,s)
=
\eta_{\mathrm{EW}}\sum_{t=1}^T h_t(g,s).
\]
Combining the two inequalities and maximizing over \((g,s)\) gives
\[
\max_{(g,s)\in\mathcal G\times\Sigma}\sum_{t=1}^T h_t(g,s)
\le
\sum_{t=1}^T\sum_{g\in\mathcal G}\sum_{s\in\Sigma}w_t(g,s)h_t(g,s)
+
\frac{\log|\mathcal G\times\Sigma|}{\eta_{\mathrm{EW}}}
+\frac{\eta_{\mathrm{EW}} k^2R_{\max}^2T}{2}.
\]
The expert class satisfies
\[
\log|\mathcal G\times\Sigma|
=
\log|\mathcal G|+k|\mathcal P_Q|\log2.
\]
Substituting this expression and the value of \(\eta_{\mathrm{EW}}\) from Algorithm~\ref{alg:upper-online} into the regret terms completes the proof.
\end{proof}

\subsubsection{The Minimax Value}

Conditional on the history and the current context, the expected gain under \(w_t\) is obtained by evaluating the objective in \eqref{eq:upper-minimax} at the conditional law of the outcome.  By the choice of \(\pi_t\), it is bounded by the worst-case value of the one-round minimax problem, up to the additive accuracy \(\rho\).  We next bound this value.

\begin{lemma}[Upper Bound on the Minimax Value]\label{lem:upper-minimax-value}
Under Assumption~\ref{ass:upper-analytic}, the minimax term on the right-hand side of \eqref{eq:upper-minimax} is at most \(k\Delta_Q\) for every history and every context \(x\).
\end{lemma}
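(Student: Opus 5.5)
The plan is to apply Sion's minimax theorem to swap the order of play, and then bound the value of the reversed game by choosing, for each fixed outcome law, a single grid point. Fix a history and a context \(x\). The weights \(w_t\) are then fixed, so we may define the payoff
\[
\Phi(\pi,\mu):=\E_{p\sim\pi}\left[\sum_{g\in\mathcal G}\sum_{s\in\Sigma}w_t(g,s)g(x)\sum_{j=1}^k s_{p,j}R_j(p_{<j},p_j,\mu)\right]
\]
on \(\Delta(\mathcal P_Q)\times\mathcal M\).

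We first check the hypotheses of Sion's theorem.
\begin{enumerate}[label=(\alph*)]
\item Since \(\mathcal P_Q\) is finite, \(\Delta(\mathcal P_Q)\) is a compact convex subset of a finite-dimensional simplex.
\item The payoff \(\Phi\) is linear, hence continuous, in \(\pi\).
\item By Assumption~\ref{ass:upper-analytic}(i), \(\mathcal M\) is convex and compact.
\item For each fixed \(\pi\), the map \(\mu\mapsto\Phi(\pi,\mu)\) is a finite linear combination of the affine continuous maps \(\mu\mapsto R_j(p_{<j},p_j,\mu)\). It is therefore affine and continuous, in particular concave and upper semicontinuous.
\end{enumerate}
Sion's theorem then gives
\[
\min_{\pi}\max_{\mu}\Phi(\pi,\mu)=\max_{\mu}\min_{\pi}\Phi(\pi,\mu).
\]
The minimum is attained by compactness and continuity in \(\pi\), and the maximum is attained because \(\Phi(\pi,\cdot)\) is continuous on the compact set \(\mathcal M\).

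Next we bound the reversed game. Fix \(\mu\in\mathcal M\). By the definition of \(\Delta_Q\), with the minimum over the finite grid attained, there is \(p^\star\in\mathcal P_Q\) with \(\|R(p^\star,\mu)\|_\infty\le\Delta_Q\). Take \(\pi=\delta_{p^\star}\). Then
\[
\Phi(\delta_{p^\star},\mu)=\sum_{g,s}w_t(g,s)g(x)\sum_j s_{p^\star,j}R_j(p^\star_{<j},p^\star_j,\mu)\le\sum_{g,s}w_t(g,s)g(x)\,k\Delta_Q\le k\Delta_Q.
\]
The last inequality holds because \(g(x)\in[0,1]\), \(|s_{p,j}|=1\), and \(w_t\) is a probability distribution. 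Hence \(\min_\pi\Phi(\pi,\mu)\le k\Delta_Q\) for every \(\mu\), and taking the maximum over \(\mu\) and applying the minimax identity bounds the original min--max value by \(k\Delta_Q\).

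The main obstacle is verifying the topological hypotheses of Sion's theorem. Continuity of \(\Phi\) in \(\mu\) must hold in the topology in which \(\mathcal M\) is compact, and Assumption~\ref{ass:upper-analytic}(i) supplies exactly this. The integrability needed for the \(R_j(\cdot,\mu)\) to be defined is covered by (ii). The only subtle point is that Sion needs compactness of just one of the two sets. We have compactness of \(\Delta(\mathcal P_Q)\) outright, so even the weak form of the theorem suffices. All remaining steps are routine.
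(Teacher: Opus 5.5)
Your proposal is correct and follows the same route as the paper's proof: it fixes \(\mu\), plays the point mass at a grid point whose expected residuals are all at most \(\Delta_Q\) to bound the reversed game by \(k\Delta_Q\), and then uses Sion's theorem to exchange the order of play. Your explicit check of Sion's hypotheses (compactness of \(\Delta(\mathcal P_Q)\), and affinity and continuity in \(\mu\) from Assumption~\ref{ass:upper-analytic}(i)) simply spells out what the paper states in one line.
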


\begin{proof}
Fix the history and the context \(x\), and let \(F(\pi,\mu)\) denote the objective in \eqref{eq:upper-minimax}.  We first bound the value when the outcome law is chosen before the prediction:
\[
\max_{\mu\in\mathcal M}
\min_{\pi\in\Delta(\mathcal P_Q)}
F(\pi,\mu).
\]
Fix \(\mu\in\mathcal M\).  Assumption~\ref{ass:upper-analytic}(iii) provides \(p_\mu\in\mathcal P_Q\) such that
\[
\|R(p_\mu,\mu)\|_\infty\le\Delta_Q.
\]
Choose the point mass \(\pi=\delta_{p_\mu}\) at \(p_\mu\).  For every \((g,s)\), the bound \(0\le g(x)\le1\) gives
\[
 g(x)\sum_{j=1}^ks_{p_\mu,j}R_j((p_\mu)_{<j},(p_\mu)_j,\mu)
\le
\sum_{j=1}^k|R_j((p_\mu)_{<j},(p_\mu)_j,\mu)|
\le
k\Delta_Q.
\]
Taking the expectation under \(w_t\) preserves the bound.  Since the argument holds for every \(\mu\),
\[
\max_{\mu\in\mathcal M}\min_{\pi\in\Delta(\mathcal P_Q)}F(\pi,\mu)
\le k\Delta_Q.
\]
Since \(F\) is affine in \(\pi\), Assumption~\ref{ass:upper-analytic}(i) allows us to apply Sion's minimax theorem~\citep{sion1958general}.  It identifies this value with the value in which the prediction is chosen first:
\[
\min_\pi\max_\mu F(\pi,\mu)=\max_\mu\min_\pi F(\pi,\mu)\le k\Delta_Q.
\]
\end{proof}

The exponential weights lemma bounds the largest total gain of any expert by the sum of the expected gains under \(w_t\), together with a regret term.  Lemma~\ref{lem:upper-minimax-value} bounds each round's expected gain after conditioning on the history and the current context.  We now combine the two bounds.

\begin{proof}[Proof of Theorem~\ref{thm:upper-online}]
Lemma~\ref{lem:upper-exponential-weights} gives
\[
\E\left[\max_{(g,s)\in\mathcal G\times\Sigma}\sum_{t=1}^T h_t(g,s)\right]
\le
\E\left[\sum_{t=1}^T\sum_{g\in\mathcal G}\sum_{s\in\Sigma}w_t(g,s)h_t(g,s)\right]
+
C_k R_{\max}\sqrt{T\bigl(\log|\mathcal G|+k|\mathcal P_Q|\bigr)}.
\]
Condition on \(\mathcal F_{t-1}\) and \(X_t\), and let \(\mu_t\) be the conditional law of \(Y_t\).  By the hypothesis of Theorem~\ref{thm:upper-online}, \(\mu_t\in\mathcal M\).  The conditional expected gain under \(w_t\) is the objective in \eqref{eq:upper-minimax} evaluated at \(\pi_t(\cdot\mid X_t)\) and \(\mu_t\).  It is therefore at most the worst-case value for \(\pi_t(\cdot\mid X_t)\).  The choice of \(\pi_t\) bounds this worst-case value by the minimax value plus \(\rho\), and Lemma~\ref{lem:upper-minimax-value} bounds the minimax value by \(k\Delta_Q\).  Hence
\[
\E\left[
\sum_{g\in\mathcal G}\sum_{s\in\Sigma}w_t(g,s)h_t(g,s)
\middle|\mathcal F_{t-1},X_t
\right]
\le
k\Delta_Q+\rho.
\]
Summing over \(t\) and taking expectations bounds the cumulative expected gain by \(T(k\Delta_Q+\rho)\).  Dividing the exponential weights bound by \(T\) and using the representation at the beginning of this subsection proves the first bound in Theorem~\ref{thm:upper-online}.  The second follows from the two bounds in Assumption~\ref{ass:upper-analytic}(iii).
\end{proof}

\subsection{The Averaged Batch Predictor and Sample Complexity}

Let \(\mathsf D\) be an \(\mathcal M\)-compatible distribution, and let \(S=((X_1,Y_1),\ldots,(X_T,Y_T))\sim\mathsf D^T\).  We run Algorithm~\ref{alg:upper-online} on the sample sequence and return the averaged predictor \(\Pi_S:\mathcal X\to\Delta(\mathcal P_Q)\) defined by
\begin{equation}\label{eq:upper-averaged-predictor}
\Pi_S(p\mid x)
:=
\frac1T\sum_{t=1}^T \pi_t(p\mid x).
\end{equation}

\subsubsection{Online-to-Batch Reduction}

The population multicalibration error of the averaged predictor need not equal the empirical multicalibration error of the online transcript.  After expressing both errors as maxima over groups and sign arrays, we compare the corresponding population and empirical objectives.  As in \citet{collina2026sample}, for each fixed group and sign array, the difference between these objectives is a martingale sum.  The following maximal inequality controls all these sums simultaneously.

\begin{lemma}[A Maximal Inequality for Finitely Many Martingales]\label{lem:upper-martingale-max}
Let \(\mathcal A\) be a finite set of size \(N\).  For each \(e\in\mathcal A\), let \((M_t(e))_{t=1}^T\) be martingale differences with respect to the same filtration, and assume \(|M_t(e)|\le G\) almost surely for all \(t,e\).  Then
\[
\E\max_{e\in\mathcal A}\sum_{t=1}^T M_t(e)
\le
G\sqrt{2T\log N}.
\]
\end{lemma}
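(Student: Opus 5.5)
We will prove the maximal inequality by the standard exponential-moment (soft-max) argument, the same device used in the proof of Lemma~\ref{lem:upper-exponential-weights}. Write \(S_T(e):=\sum_{t=1}^T M_t(e)\) and let \((\mathcal F_t)\) denote the common filtration, so that \(\E[M_t(e)\mid\mathcal F_{t-1}]=0\) for every \(e\in\mathcal A\). Fix a parameter \(\lambda>0\), to be chosen at the end.

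The first step is to upper bound the maximum by a log-sum-exp. Jensen's inequality applied to the concave logarithm gives
\[
\lambda\,\E\max_{e\in\mathcal A}S_T(e)
\le
\E\log\sum_{e\in\mathcal A}e^{\lambda S_T(e)}
\le
\log\sum_{e\in\mathcal A}\E\, e^{\lambda S_T(e)} .
\]

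The second step bounds each moment generating function by peeling off one increment at a time. We condition on \(\mathcal F_{T-1}\). Since \(M_T(e)\) is conditionally mean zero and lies in \([-G,G]\), the conditional form of Hoeffding's lemma yields
\[
\E\bigl[e^{\lambda M_T(e)}\mid\mathcal F_{T-1}\bigr]\le e^{\lambda^2G^2/2}.
\]
The quantity \(e^{\lambda S_{T-1}(e)}\) is \(\mathcal F_{T-1}\)-measurable, so the tower property lets us factor it out. Iterating this over \(t=T,T-1,\ldots,1\) gives \(\E e^{\lambda S_T(e)}\le e^{T\lambda^2G^2/2}\) for every \(e\in\mathcal A\). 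This is the only place where the martingale structure is used, and it is the sole step requiring care: the bound must be applied conditionally, and the factorization relies on the partial sums being adapted to the filtration.

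The final step combines the two bounds. Substituting the moment bound into the first display and dividing by \(\lambda\) gives
\[
\E\max_{e\in\mathcal A}S_T(e)\le\frac{\log N}{\lambda}+\frac{\lambda TG^2}{2}.
\]
Choosing \(\lambda=\sqrt{2\log N/(TG^2)}\) balances the two terms and yields \(G\sqrt{2T\log N}\), which is the claimed bound.

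The degenerate case \(N=1\) needs separate treatment, because this choice of \(\lambda\) would be zero. In that case the left-hand side equals \(\E S_T(e)=0\), so the bound holds trivially. The same conclusion can also be reached by letting \(\lambda\to\infty\) in the unoptimized bound.
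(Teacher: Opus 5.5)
Your proposal is correct and follows the same route as the paper: bound the maximum by a log-sum-exp via Jensen, iterate conditional Hoeffding to get $\E e^{\lambda S_T(e)}\le e^{\lambda^2G^2T/2}$, and optimize over $\lambda$. One small slip is in your $N=1$ remark: the unoptimized bound becomes $\lambda TG^2/2$, which vanishes as $\lambda\to0^+$, not as $\lambda\to\infty$; your direct argument that $\E S_T(e)=0$ is fine.
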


\begin{proof}
For any \(\eta>0\), conditional Hoeffding gives
\[
\E\left[\exp\left(\eta\sum_{t=1}^T M_t(e)\right)\right]
\le
\exp\left(\frac{\eta^2G^2T}{2}\right).
\]
Therefore
\[
\E\max_e\sum_tM_t(e)
\le
\frac1\eta\log\E\sum_e\exp\left(\eta\sum_tM_t(e)\right)
\le
\frac{\log N}{\eta}+\frac{\eta G^2T}{2}.
\]
Optimizing over \(\eta\) proves the bound.
\end{proof}

We now apply Lemma~\ref{lem:upper-martingale-max} to compare the population multicalibration error of the averaged predictor with the empirical multicalibration error of the online transcript.

\begin{lemma}[Online-to-Batch Transfer]\label{lem:upper-online-to-batch}
Under Assumption~\ref{ass:upper-analytic}, for the averaged predictor \(\Pi_S\) in \eqref{eq:upper-averaged-predictor},
\[
\E
\left[
\MCErr^\Gamma_{\mathsf D}(\Pi_S;\mathcal G)
\right]
\le
\E\widehat{\MCErr}^{\Gamma}_T
+
C_k R_{\max}\sqrt{\frac{Q^k+\log|\mathcal G|}{T}},
\]
where \(C_k\) depends only on \(k\).  Both expectations are over the sample and any internal randomization used by the forecaster.
\end{lemma}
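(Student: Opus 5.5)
We express both errors as maxima over the same finite index set of experts \((g,s)\in\mathcal G\times\Sigma\) and compare them expert by expert. Because \(\Pi_S\) is supported on the finite grid \(\mathcal P_Q\), the finite-support form of the \(\Gamma\)-ECE applies. Writing each absolute value as a maximum over signs gives
\[
\MCErr^\Gamma_{\mathsf D}(\Pi_S;\mathcal G)
=
\max_{(g,s)}
\E_{(X,Y)\sim\mathsf D}\left[g(X)\sum_{p\in\mathcal P_Q}\Pi_S(p\mid X)\sum_{j=1}^k s_{p,j}R_j(p_{<j},p_j,Y)\right].
\]
Substituting \eqref{eq:upper-averaged-predictor}, this equals \(\max_{(g,s)}\frac1T\sum_{t=1}^T H_t(g,s)\), where
\[
H_t(g,s):=\E_{(X,Y)\sim\mathsf D}\left[g(X)\E_{p\sim\pi_t(\cdot\mid X)}\sum_j s_{p,j}R_j(p_{<j},p_j,Y)\right].
\]
Here the expectation is over a fresh pair \((X,Y)\), with the rule \(\pi_t\) held fixed.

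The key observation is that \(\pi_t\) is \(\mathcal F_{t-1}\)-measurable, while \((X_t,Y_t)\) is independent of \(\mathcal F_{t-1}\) and distributed as \(\mathsf D\) because the sample is i.i.d. Hence \(H_t(g,s)=\E[h_t(g,s)\mid\mathcal F_{t-1}]\), where \(h_t\) is the realized gain in Algorithm~\ref{alg:upper-online}. The joint measurability of the rules (Appendix~\ref{app:upper-measurability}) lets us evaluate the conditional expectation by integrating over the fresh pair. So \(M_t(g,s):=H_t(g,s)-h_t(g,s)\) is a martingale difference sequence with respect to \((\mathcal F_t)\), and \(|M_t|\le 2kR_{\max}\) because \(|h_t|,|H_t|\le kR_{\max}\) by Assumption~\ref{ass:upper-analytic}(ii). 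If the forecaster uses internal randomness, we include it in \(\mathcal F_{t-1}\). The randomness is drawn before \((X_t,Y_t)\) and independently of it, so the conditional-expectation identity still holds.

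Next we bound the maximum of the sums via the maximum of each part:
\[
\max_{(g,s)}\sum_t H_t(g,s)\le\max_{(g,s)}\sum_t h_t(g,s)+\max_{(g,s)}\sum_t M_t(g,s).
\]
The first term on the right equals \(T\,\widehat{\MCErr}^{\Gamma}_T\) by the signed-objective representation. For the second term, we apply Lemma~\ref{lem:upper-martingale-max} with \(N=|\mathcal G|\,2^{k|\mathcal P_Q|}\) and \(G=2kR_{\max}\), which gives
\[
\E\max_{(g,s)}\sum_t M_t(g,s)\le 2kR_{\max}\sqrt{2T(\log|\mathcal G|+k|\mathcal P_Q|\log 2)}.
\]
Dividing by \(T\), taking expectations, and using \(|\mathcal P_Q|\le C_{\mathrm{grid}}Q^k\) yields the claim, with \(C_{\mathrm{grid}}\) absorbed into \(C_k\).

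The main point requiring care is the martingale identification \(H_t=\E[h_t\mid\mathcal F_{t-1}]\). It relies on \(\pi_t\) being fixed before \(X_t\) is revealed, so that the population objective of \(\pi_t\) is exactly the conditional mean of the realized gain. Integrability is automatic from the uniform residual bound. The exponentially large expert set costs only a \(\sqrt{Q^k}\) factor, because Lemma~\ref{lem:upper-martingale-max} depends on \(\log N\).
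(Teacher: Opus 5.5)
Your proposal is correct and follows essentially the same route as the paper. Both arguments write the population error of \(\Pi_S\) and the empirical error as maxima over \((g,s)\in\mathcal G\times\Sigma\), and identify the population objective as the average of the conditional expectations \(\E[h_t(g,s)\mid\mathcal F_{t-1}]\). Both then apply Lemma~\ref{lem:upper-martingale-max} to the \(|\mathcal G|\,2^{k|\mathcal P_Q|}\) martingale-difference sequences bounded by \(2kR_{\max}\).

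One small caveat applies to the paper's proof as well: replacing \(|\mathcal P_Q|\) by \(Q^k\) makes the constant depend on \(C_{\mathrm{grid}}\) as well as \(k\).
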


\begin{proof}
For \((g,s)\in\mathcal G\times\Sigma\), set
\[
\mathcal L_{g,s}(\Pi_S)
:=
\sum_{p\in\mathcal P_Q}\sum_{j=1}^k
s_{p,j}\nu^{\mathsf D,\Pi_S}_{g,j}(\{p\}).
\]
Maximizing over each \(s_{p,j}\in\{\pm1\}\) gives
\[
\MCErr^\Gamma_{\mathsf D}(\Pi_S;\mathcal G)
=
\max_{g\in\mathcal G}\max_{s\in\Sigma}\mathcal L_{g,s}(\Pi_S).
\]
Its empirical counterpart is
\[
\widehat{\mathcal L}_{g,s}(S)
:=
\frac1T\sum_{t=1}^T
 g(X_t)
	 \sum_{p\in\mathcal P_Q}\pi_t(p\mid X_t)
	 \sum_{j=1}^ks_{p,j}R_j(p_{<j},p_j,Y_t).
\]
Then
\[
\widehat{\MCErr}^{\Gamma}_T
=
\max_{g,s}\widehat{\mathcal L}_{g,s}(S).
\]
Therefore
\[
\MCErr^\Gamma_{\mathsf D}(\Pi_S;\mathcal G)
\le
\widehat{\MCErr}^{\Gamma}_T
+
\max_{g,s}\left(\mathcal L_{g,s}(\Pi_S)-\widehat{\mathcal L}_{g,s}(S)\right).
\]
Fix \((g,s)\).  For each \(t\), let
\[
Z_t^{g,s}
:=
	 g(X_t)
	 \sum_{p\in\mathcal P_Q}\pi_t(p\mid X_t)
	 \sum_{j=1}^ks_{p,j}R_j(p_{<j},p_j,Y_t).
\]
Define
\[
\bar Z_t^{g,s}
:=
\E_{(X,Y)\sim\mathsf D}
\left[
	 g(X)
	 \sum_{p\in\mathcal P_Q}\pi_t(p\mid X)
	 \sum_{j=1}^ks_{p,j}R_j(p_{<j},p_j,Y)
\middle| \mathcal F_{t-1}
\right].
\]
Conditionally on \(\mathcal F_{t-1}\), the rule \(\pi_t\) is fixed.  The linearity of \(\mathcal L_{g,s}\) in the predictor and the definition of \(\Pi_S\) therefore give
\[
\mathcal L_{g,s}(\Pi_S)
=
\frac1T\sum_{t=1}^T\bar Z_t^{g,s}.
\]
The empirical quantity satisfies
\[
\widehat{\mathcal L}_{g,s}(S)
=
\frac1T\sum_{t=1}^T Z_t^{g,s}.
\]
Because the sample is i.i.d., \(\bar Z_t^{g,s}\) is the conditional expectation of \(Z_t^{g,s}\) given \(\mathcal F_{t-1}\).  Hence the differences
\[
M_t^{g,s}:=\bar Z_t^{g,s}-Z_t^{g,s}
\]
are martingale differences with respect to the filtration generated by the transcript and the forecaster's internal randomization.  Since group functions take values in \([0,1]\), we have \(|Z_t^{g,s}|\le kR_{\max}\), and hence \(|M_t^{g,s}|\le2kR_{\max}\).  Subtracting the two equalities gives
\[
\mathcal L_{g,s}(\Pi_S)-\widehat{\mathcal L}_{g,s}(S)
=
\frac1T\sum_{t=1}^T M_t^{g,s}.
\]
Lemma~\ref{lem:upper-martingale-max}, applied to the \(|\mathcal G||\Sigma|=|\mathcal G|2^{k|\mathcal P_Q|}\) martingale-difference sequences, gives
\[
\E\max_{g,s}\sum_{t=1}^T M_t^{g,s}
\le
C_k R_{\max}\sqrt{T\bigl(\log|\mathcal G|+k|\mathcal P_Q|\bigr)}.
\]
Dividing by \(T\) proves the lemma.
\end{proof}

\subsubsection{Sample Complexity}

Combining the empirical guarantee for the online forecaster with the online-to-batch transfer gives the sample-complexity bound.

\begin{theorem}[Sample Complexity under Approximate Minimax Optimization]\label{thm:upper-batch}
Suppose Assumption~\ref{ass:upper-analytic} holds.  Let \(\mathsf D\) be any \(\mathcal M\)-compatible distribution on \(\mathcal X\times\mathcal Y\), and let \(\mathcal G\) be any finite group family.  Fix a sufficiently small target error \(\eps>0\) and a minimax accuracy
\[
0\le\rho<\frac\eps3.
\]
There is a randomized learner using minimax accuracy \(\rho\) and a grid with
\[
Q=\Theta\!\left((\eps-3\rho)^{-1}\right)
\]
which, from
\[
n
\ge
C
\left(
(\eps-3\rho)^{-(k+2)}
+
(\eps-3\rho)^{-2}\log|\mathcal G|
\right)
\]
i.i.d. samples from \(\mathsf D\), outputs a finite-support predictor \(\Pi_S:\mathcal X\to\Delta(\mathcal P_Q)\) satisfying
\[
\Prob\left[
\MCErr^\Gamma_{\mathsf D}(\Pi_S;\mathcal G)\le\eps
\right]
\ge
\frac23.
\]
The probability is over the sample and any internal randomization of the learner, and \(C\) depends only on \(k\), \(R_{\max}\), and \(C_{\mathrm{grid}}\).
\end{theorem}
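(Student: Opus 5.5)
The plan is to combine Theorem~\ref{thm:upper-online} with Lemma~\ref{lem:upper-online-to-batch} and then apply Markov's inequality. The learner runs Algorithm~\ref{alg:upper-online} with minimax accuracy \(\rho\) on the \(T=n\) i.i.d. samples, in order. Each pair is used on one round, with context revealed before the prediction and outcome after. The learner then returns the averaged predictor \(\Pi_S\) from \eqref{eq:upper-averaged-predictor}, which is supported on the finite grid \(\mathcal P_Q\).

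The first step is to check the hypothesis of Theorem~\ref{thm:upper-online}. Because the sample is i.i.d. and \(\mathcal F_{t-1}\) depends only on earlier pairs and the forecaster's independent randomness, the conditional law of \(Y_t\) given \((\mathcal F_{t-1},X_t)\) is \(\mathsf D_{Y\mid X=X_t}\). This law lies in \(\mathcal M\) almost surely by \(\mathcal M\)-compatibility. Adding \eqref{eq:upper-empirical-bound} to Lemma~\ref{lem:upper-online-to-batch} then gives
\[
\E\,\MCErr^\Gamma_{\mathsf D}(\Pi_S;\mathcal G)
\le
\rho+\frac{kC_{\mathrm{grid}}}{Q}
+C'R_{\max}\sqrt{\frac{Q^k+\log|\mathcal G|}{n}},
\]
with \(C'\) depending only on \(k\) and \(C_{\mathrm{grid}}\).

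Next, Markov's inequality reduces the claim to making this expectation at most \(\eps/3\). The error is nonnegative, so \(\Prob[\MCErr>\eps]\le \E\,\MCErr/\eps\le 1/3\). Set \(\delta:=\eps-3\rho>0\), so the target is \(\E\,\MCErr\le\rho+\delta/3\). We need the other two terms to total at most \(\delta/3\), and we split this equally.
\begin{itemize}
\item Take \(Q=\lceil 6kC_{\mathrm{grid}}/\delta\rceil=\Theta(\delta^{-1})\). This makes \(kC_{\mathrm{grid}}/Q\le\delta/6\).
\item Require \(C'R_{\max}\sqrt{(Q^k+\log|\mathcal G|)/n}\le\delta/6\). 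This holds once \(n\ge 36C'^2R_{\max}^2\delta^{-2}(Q^k+\log|\mathcal G|)\).
\end{itemize}
Since \(Q^k=O(\delta^{-k})\), the second condition is implied by \(n\ge C(\delta^{-(k+2)}+\delta^{-2}\log|\mathcal G|)\). Here \(C\) depends only on \(k\), \(R_{\max}\) and \(C_{\mathrm{grid}}\). The "sufficiently small \(\eps\)" clause only ensures \(Q\ge1\) and that the ceiling changes \(Q\) by at most a constant factor.

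There is no substantial obstacle; the work is bookkeeping. One point to check is that the expectation in Lemma~\ref{lem:upper-online-to-batch} is over the same randomness as the probability in the statement, namely the sample and the forecaster's internal randomization, which it is. The other is that the \(Q\)-dependence of the regret and martingale terms enters only through \(|\mathcal P_Q|\le C_{\mathrm{grid}}Q^k\).
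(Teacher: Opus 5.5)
Your proposal is correct and matches the paper's proof step for step. You check the hypothesis of Theorem~\ref{thm:upper-online} using the i.i.d.\ sample and $\mathcal M$-compatibility, add the online-to-batch term from Lemma~\ref{lem:upper-online-to-batch}, choose $Q=\Theta((\eps-3\rho)^{-1})$ and $n$ so that the expected error is at most $\rho+(\eps-3\rho)/3=\eps/3$, and finish with Markov's inequality; your explicit choice $Q=\lceil 6kC_{\mathrm{grid}}/\delta\rceil$ is just a concrete form of the paper's $Q=\lceil 6C'/(\eps-3\rho)\rceil$.
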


\begin{proof}[Proof of Theorem~\ref{thm:upper-batch}]
Run the online forecaster for \(T=n\) rounds.  Because the sample is i.i.d., the conditional law of \(Y_t\) given \(\mathcal F_{t-1}\) and \(X_t\) is \(\mathsf D_{Y\mid X=X_t}\).  The \(\mathcal M\)-compatibility of \(\mathsf D\) therefore ensures that the hypothesis of Theorem~\ref{thm:upper-online} holds.  Theorem~\ref{thm:upper-online} and Lemma~\ref{lem:upper-online-to-batch} now give
\[
\E\MCErr^\Gamma_{\mathsf D}(\Pi_S;\mathcal G)
\le
\rho
+
C'
\left(
\frac1Q
+
R_{\max}\sqrt{\frac{Q^k+\log|\mathcal G|}{n}}
\right).
\]
Because \(\eps-3\rho>0\), take
\[
Q
=
\left\lceil
\frac{6C'}{\eps-3\rho}
\right\rceil.
\]
If
\[
n
\ge
C
\left(
(\eps-3\rho)^{-(k+2)}
+
(\eps-3\rho)^{-2}\log|\mathcal G|
\right)
\]
for a sufficiently large constant \(C\), then the expectation is at most
\[
\rho+\frac{\eps-3\rho}{3}
=
\frac\eps3.
\]
Markov's inequality therefore gives
\[
\Prob\left[\MCErr^\Gamma_{\mathsf D}(\Pi_S;\mathcal G)>\eps\right]
\le
\frac13.
\]
\end{proof}

If \(\rho\le\eps/6\), then \(\eps-3\rho\ge\eps/2\), so the theorem immediately gives the following bound.

\begin{corollary}[Upper Bound on Sample Complexity]\label{cor:upper-sample}
Under the conditions of Theorem~\ref{thm:upper-batch}, suppose each inner minimax problem is solved to additive accuracy
\[
\rho\le\frac\eps6.
\]
Then there is a randomized learner using \(Q=\Theta(1/\eps)\) and
\[
n
\ge
C
\left(
\eps^{-(k+2)}+\eps^{-2}\log|\mathcal G|
\right)
\]
i.i.d. samples from \(\mathsf D\) whose output satisfies
\[
\Prob\left[
\MCErr^\Gamma_{\mathsf D}(\Pi_S;\mathcal G)\le\eps
\right]
\ge
\frac23.
\]
In particular, if \(|\mathcal G|\le\lceil\eps^{-\kappa}\rceil\) for a fixed \(\kappa>0\), then
\[
n=O(\eps^{-(k+2)}).
\]
\end{corollary}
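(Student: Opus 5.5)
The plan is to read Corollary~\ref{cor:upper-sample} off Theorem~\ref{thm:upper-batch} by substituting the accuracy condition. Run Algorithm~\ref{alg:upper-online} on the sample with inner minimax accuracy \(\rho\le\eps/6\). Then \(3\rho\le\eps/2\), and since \(\rho\ge0\),
\[
\frac{\eps}{2}\le\eps-3\rho\le\eps .
\]
The hypothesis \(0\le\rho<\eps/3\) of Theorem~\ref{thm:upper-batch} therefore holds, and the theorem supplies a learner with grid parameter \(Q=\Theta((\eps-3\rho)^{-1})\).

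Because \(\eps-3\rho\) lies within a factor of two of \(\eps\), we have \(Q=\Theta(1/\eps)\). The theorem's sample requirement is
\[
n\ge C\bigl((\eps-3\rho)^{-(k+2)}+(\eps-3\rho)^{-2}\log|\mathcal G|\bigr),
\]
and using \((\eps-3\rho)^{-m}\le 2^m\eps^{-m}\) this is implied by
\[
n\ge 2^{k+2}C\bigl(\eps^{-(k+2)}+\eps^{-2}\log|\mathcal G|\bigr).
\]
Renaming the constant, which still depends only on \(k\), \(R_{\max}\), and \(C_{\mathrm{grid}}\), gives the stated sample size. The conclusion \(\Prob[\MCErr^\Gamma_{\mathsf D}(\Pi_S;\mathcal G)\le\eps]\ge2/3\) then transfers verbatim from the theorem.

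For the final claim, suppose \(|\mathcal G|\le\lceil\eps^{-\kappa}\rceil\). Then \(\log|\mathcal G|\le\kappa\log(1/\eps)+\log 2\) for small \(\eps\), so
\[
\eps^{-2}\log|\mathcal G|=O\bigl(\eps^{-2}\log(1/\eps)\bigr)=o\bigl(\eps^{-(k+2)}\bigr),
\]
because \(k\ge2\) and hence \(k+2\ge4>2\). The sample size is therefore \(O(\eps^{-(k+2)})\), with a constant that may also depend on \(\kappa\).

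There is no real obstacle here. The only point needing care is the direction of the inequality \(\eps-3\rho\ge\eps/2\), which is what allows \(\eps-3\rho\) to be replaced by \(\eps\) at the cost of a \(k\)-dependent constant.
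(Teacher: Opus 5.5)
Your proposal is correct and follows the paper's argument: the paper also observes that \(\rho\le\eps/6\) gives \(\eps-3\rho\ge\eps/2\) and then reads the bound directly off Theorem~\ref{thm:upper-batch}. Your explicit absorption of the \(2^{k+2}\) factor into the constant and your bound \(\log|\mathcal G|\le\kappa\log(1/\eps)+\log 2\) supply the details that the paper leaves implicit.
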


Consequently,
\[
\SC^{(\kappa)}_{\Gamma,\mathcal M}(\eps)
=
O(\eps^{-(k+2)}).
\]
Binary groups are included among the group functions allowed in the minimax definition.  Thus, if Assumption~\ref{ass:witness} also holds, Theorem~\ref{thm:lower-bound}
and Corollary~\ref{cor:upper-sample} give
\[
\SC^{(\kappa)}_{\Gamma,\mathcal M}(\eps)
=
\widetilde{\Theta}\!\left(\eps^{-(k+2)}\right).
\]

\subsection{Polynomial-Time Implementation}\label{sec:upper-polytime}

The exponential weights procedure analyzed above is information-theoretic and maintains weights over \(|\mathcal G|2^{k|\mathcal P_Q|}\) experts indexed by sign patterns.  This section gives a polynomial-time implementation under an explicit linear optimization oracle over \(\mathcal M\).

All running-time statements in this paper use a unit-cost real-arithmetic model.  Arithmetic operations, comparisons, evaluations of the elementary functions used in the implementation, and evaluations of the supplied group and residual functions are exact and have unit cost.  The running time also includes the operations performed by the linear optimization oracle, as specified in Assumption~\ref{ass:upper-oracle}.

There are two computational tasks.  We first compute the exponential weights distribution without enumerating all sign arrays.  We then solve the minimax problem using linear optimization over \(\mathcal M\).

\subsubsection{Computing Exponential Weights Implicitly}

Let
\[
\CumRes^{t-1}_{g,p,j}
:=
\sum_{\tau<t}
 g(X_\tau)\pi_\tau(p\mid X_\tau)R_j(p_{<j},p_j,Y_\tau).
\]
The definition of the gain gives
\[
C_{t-1}(g,s)
=
\sum_{\tau<t}h_\tau(g,s)
=
\sum_{p\in\mathcal P_Q}\sum_{j=1}^k
s_{p,j}\CumRes^{t-1}_{g,p,j}.
\]
Hence the exponential weights distribution over \((g,s)\in\mathcal G\times\Sigma\), with learning rate \(\eta_{\mathrm{EW}}>0\), has weight proportional to
\[
\exp\left(
\eta_{\mathrm{EW}}
\sum_{p\in\mathcal P_Q}\sum_{j=1}^k
s_{p,j}\CumRes^{t-1}_{g,p,j}
\right).
\]
For fixed \(g\), the exponent is a sum of terms that each depend on only one sign coordinate, so the conditional distribution of the signs factorizes.  Using
\[
\sum_{\sigma\in\{\pm1\}}e^{\eta\sigma a}
=
2\cosh(\eta a),
\]
the group marginal is
\[
\omega_t(g)
=
\frac{
\prod_{p,j}2\cosh\bigl(\eta_{\mathrm{EW}}\CumRes^{t-1}_{g,p,j}\bigr)
}{
\sum_{g'\in\mathcal G}
\prod_{p,j}2\cosh\bigl(\eta_{\mathrm{EW}}\CumRes^{t-1}_{g',p,j}\bigr)
},
\]
and, conditional on \(g\), each sign has mean
\[
m_{t,g,p,j}
:=
\E[s_{p,j}\mid g]
=
\tanh\bigl(\eta_{\mathrm{EW}}\CumRes^{t-1}_{g,p,j}\bigr).
\]
Conditional on \(g\), the objective in \eqref{eq:upper-minimax} is linear in the signs, so its average over the signs depends only on the means \(m_{t,g,p,j}\).  Averaging also over the group marginal \(\omega_t\), the objective becomes
\[
\sum_{p\in\mathcal P_Q}\pi_p
\sum_{j=1}^k
 a_{t,p,j}(x)R_j(p_{<j},p_j,\mu),
\]
where
\[
a_{t,p,j}(x)
:=
\sum_{g\in\mathcal G}
\omega_t(g)g(x)m_{t,g,p,j}.
\]
Thus it suffices to maintain the \(O(k|\mathcal G||\mathcal P_Q|)\) values \(\CumRes^{t-1}_{g,p,j}\), rather than enumerate \(2^{k|\mathcal P_Q|}\) sign patterns.

\subsubsection{The Optimization Oracle}

Once the coefficients \(a_{t,p,j}(x)\) have been computed, it remains to solve the minimax problem.  \citet{noarov2025high} solve an analogous problem for high-dimensional unbiased prediction by expressing it as a finite-dimensional linear program with continuously many constraints and using an approximate separation oracle.  Here the required approximate separation step reduces to linear optimization over \(\mathcal M\).

\begin{assumption}[Linear Optimization Oracle over \(\mathcal M\)]\label{ass:upper-oracle}
There is an oracle \(\mathsf{OPT}_{\mathcal M}\) which, for every requested accuracy \(\xi\in(0,1)\) and given coefficients \(c_{p,j}\in\R\), returns a law \(\widehat\mu\in\mathcal M\) satisfying
\[
\sum_{p\in\mathcal P_Q}\sum_{j=1}^k
c_{p,j}R_j(p_{<j},p_j,\widehat\mu)
\ge
\sup_{\mu\in\mathcal M}
\sum_{p\in\mathcal P_Q}\sum_{j=1}^k
c_{p,j}R_j(p_{<j},p_j,\mu)
\;-
\xi
\]
using a number of operations polynomial in \(|\mathcal P_Q|\) and \(\log(1/\xi)\).  Within the same bound, the objective value and the residual expectations at \(\widehat\mu\) can be evaluated to additive accuracy \(\xi\).

For the computational implementation, fix one deterministic,
Borel-measurable realization of this oracle, including deterministic
tie-breaking whenever more than one admissible output is available.
\end{assumption}

Given the coefficients \(a_{t,p,j}(x)\), the inner minimax problem is the semi-infinite linear program
\[
\begin{aligned}
\min_{\pi,\zeta}\quad &\zeta\\
\text{s.t.}\quad &\pi\in\Delta(\mathcal P_Q),\\
&\sum_{p,j}\pi_p a_{t,p,j}(x)R_j(p_{<j},p_j,\mu)
\le \zeta
\qquad \forall \mu\in\mathcal M.
\end{aligned}
\]
For a candidate \((\pi,\zeta)\), a violated constraint is found by calling \(\mathsf{OPT}_{\mathcal M}\) with coefficients
\[
c_{p,j}=\pi_p a_{t,p,j}(x).
\]
To see how the two optimization accuracies enter, call the oracle with internal accuracy \(\xi\), let \(\widehat\mu\) be the returned law, and evaluate its objective to the same accuracy.  If the resulting estimate exceeds \(\zeta+\xi\), then the constraint associated with \(\widehat\mu\) is violated.  Otherwise, the oracle and evaluation guarantees certify that every constraint holds with additive slack at most \(3\xi\).  Since \(|a_{t,p,j}(x)|\le1\) and the residuals are bounded, \(\zeta\) may be restricted to a bounded interval.  Thus the linear program has a bounded finite-dimensional feasible region, and the oracle supplies approximate separation for its continuously many constraints.  Given a target minimax accuracy \(\rho\in(0,1)\), the weak ellipsoid method~\citep{grotschel1988geometric} chooses its internal tolerances, including \(\xi\), with \(\log(1/\xi)\) polynomial in \(|\mathcal P_Q|\) and \(\log(1/\rho)\).  It returns a distribution \(\pi_t(\cdot\mid x)\) satisfying \eqref{eq:upper-minimax} using a number of operations and oracle calls polynomial in the same quantities.

Fix a deterministic implementation of the weak ellipsoid method, including
deterministic tie-breaking, and combine it with the fixed oracle from
Assumption~\ref{ass:upper-oracle}.  Denote the resulting solver by
\[
\mathsf{Solve}_\rho(H,x)\in\Delta(\mathcal P_Q),
\]
where \(H=(\CumRes_{g,p,j})_{g,p,j}\) is the array of cumulative residuals
available before the round.  The internal tolerances are chosen so that
\(\mathsf{Solve}_\rho(H,x)\) satisfies \eqref{eq:upper-minimax} with
additive error at most \(\rho\).  In the computational implementation, the
rule selected at round \(t\) is defined for every context by
\[
\pi_t(\cdot\mid x)
:=
\mathsf{Solve}_\rho\!\left(
\bigl(\CumRes^{t-1}_{g,p,j}\bigr)_{g,p,j},
x
\right).
\]
Thus the rule is fixed by the pre-round state and the context, and the
same rule can be reproduced after training.

We can therefore compute the exponential weights distribution and solve the minimax problem in polynomial time; the following theorem records the resulting runtime guarantee.

\begin{theorem}[Polynomial-Time Implementation]\label{thm:upper-polytime}
Assume Assumptions~\ref{ass:upper-analytic} and~\ref{ass:upper-oracle}.  Then, for every minimax accuracy \(\rho\in(0,1)\), the learner can be implemented with training time polynomial in
\[
n,
\quad Q^k,
\quad |\mathcal G|,
\quad \log(1/\rho).
\]
Choosing \(\rho=\eps/6\) and the remaining parameters as in Corollary~\ref{cor:upper-sample} implements its learner.  In particular, for every fixed \(\kappa>0\), if \(|\mathcal G|\le\eps^{-\kappa}\), then the learner runs in time polynomial in \(1/\eps\).

The output predictor has a representation of polynomial size, and for any
queried context \(x\), either its full prediction distribution or an exact
sample from that distribution can be computed in time polynomial in the
same parameters.
\end{theorem}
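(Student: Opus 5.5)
The plan is to account for the cost of each round of Algorithm~\ref{alg:upper-online} under the implicit representation of the previous subsections, multiply by \(T=n\), and then check the representation and query cost of the averaged predictor \(\Pi_S\).

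\textbf{State and weights.} The learner never stores the \(|\mathcal G|2^{k|\mathcal P_Q|}\) expert weights. It stores only the array \(H=(\CumRes^{t-1}_{g,p,j})_{g,p,j}\), which has \(O(k|\mathcal G||\mathcal P_Q|)=O(|\mathcal G|Q^k)\) entries by Assumption~\ref{ass:upper-analytic}(iii). The factorization in the subsection on computing exponential weights implicitly shows that the weighted objective in \eqref{eq:upper-minimax} equals \(\sum_p\pi_p\sum_j a_{t,p,j}(x)R_j(p_{<j},p_j,\mu)\). So the first step verifies that the group marginals \(\omega_t(g)\), the conditional sign means \(m_{t,g,p,j}=\tanh(\eta_{\mathrm{EW}}\CumRes^{t-1}_{g,p,j})\), and hence all \(a_{t,p,j}(x)\) are computable in \(O(k|\mathcal G||\mathcal P_Q|)\) unit-cost operations. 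To avoid overflow in the products of \(\cosh\) terms, I would compute the logarithms of the products and normalize. Because the model uses exact real arithmetic, this is only a stability remark and does not affect the complexity claim.

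\textbf{Per-round solve.} Given the coefficients, \(\pi_t(\cdot\mid X_t)=\mathsf{Solve}_\rho(H,X_t)\) is computed by the weak ellipsoid method on the semi-infinite linear program. The variables \((\pi,\zeta)\) lie in a bounded region of dimension \(|\mathcal P_Q|+1\). The bound \(|\zeta|\le kR_{\max}\) follows from \(|a_{t,p,j}|\le1\) and Assumption~\ref{ass:upper-analytic}(ii). Each separation call uses \(\mathsf{OPT}_{\mathcal M}\) with coefficients \(c_{p,j}=\pi_p a_{t,p,j}(x)\). I would invoke the Gr\"otschel--Lov\'asz--Schrijver guarantee: the number of iterations is polynomial in the dimension and in \(\log(1/\rho)\), and the internal tolerance \(\xi\) satisfies \(\log(1/\xi)=\mathrm{poly}(|\mathcal P_Q|,\log(1/\rho))\). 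Assumption~\ref{ass:upper-oracle} then gives oracle cost polynomial in \(|\mathcal P_Q|\) and \(\log(1/\xi)\). The main point to check here is the slack accounting: a returned point is certified only up to \(3\xi\), together with the ellipsoid's own feasibility slack. I would choose \(\xi\) as a small polynomial fraction of \(\rho\) so that \eqref{eq:upper-minimax} holds with error at most \(\rho\).

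\textbf{Updates.} After \(Y_t\) is observed, the update of \(H\) costs \(O(k|\mathcal G||\mathcal P_Q|)\) evaluations of group and residual functions. Summing over the \(n\) rounds gives training time polynomial in \(n\), \(Q^k\), \(|\mathcal G|\), and \(\log(1/\rho)\). With \(\rho=\eps/6\), \(Q=\Theta(1/\eps)\), \(n=O(\eps^{-(k+2)}+\eps^{-2}\log|\mathcal G|)\), and \(|\mathcal G|\le\eps^{-\kappa}\), each of these parameters is polynomial in \(1/\eps\) because \(k\) is fixed. Correctness of the sample-complexity guarantee is Corollary~\ref{cor:upper-sample}, since the solver meets \eqref{eq:upper-minimax} with \(\rho\le\eps/6\).

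\textbf{Output and the main obstacle.} The step I expect to need the most care is the output representation, since \(\Pi_S(\cdot\mid x)=\frac1T\sum_t\pi_t(\cdot\mid x)\) must be evaluated at new contexts. The plan is to store the \(T\) pre-round arrays \(H^{(0)},\ldots,H^{(T-1)}\), which has polynomial total size. Alternatively one can store \(H^{(0)}\) and the transcript and regenerate the arrays. Because \(\mathsf{Solve}_\rho\) is deterministic and measurable, with fixed tie-breaking and a fixed oracle realization, recomputing \(\mathsf{Solve}_\rho(H^{(t-1)},x)\) reproduces exactly the rule \(\pi_t(\cdot\mid x)\) used in the analysis. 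This consistency is essential, since Lemma~\ref{lem:upper-online-to-batch} is stated for those same rules. The full distribution at \(x\) then costs \(T\) solver calls plus averaging. To draw an exact sample, draw \(t\) uniformly from \([T]\), make a single solver call, and sample from the resulting finite distribution on \(\mathcal P_Q\).
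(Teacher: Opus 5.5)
Your proposal is correct and follows the same route as the paper. You store only the $O(k|\mathcal G||\mathcal P_Q|)$ cumulative residuals, compute $\omega_t$, the $\tanh$ sign means, and $a_{t,p,j}(X_t)$ in closed form, and solve the semi-infinite linear program by the weak ellipsoid method with $\mathsf{OPT}_{\mathcal M}$ as approximate separation. You then represent $\Pi_S$ by the $T$ stored pre-round arrays together with the fixed deterministic solver, querying it with $T$ solver calls or sampling it by drawing a uniform $t$; your bound $|\zeta|\le kR_{\max}$ and the $\xi$-slack accounting simply spell out steps the paper states more briefly.
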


\begin{proof}
The formulas for \(\omega_t(g)\) and \(m_{t,g,p,j}\) represent the exact exponential weights distribution over the sign-pattern experts without enumerating them, while the formula for \(a_{t,p,j}(x)\) computes the resulting coefficients in the minimax objective.  These calculations require storing only the \(O(k|\mathcal G||\mathcal P_Q|)\) values \(\CumRes^{t-1}_{g,p,j}\).  At each round, the coefficients \(a_{t,p,j}(x)\) can be computed in time polynomial in \(k|\mathcal G||\mathcal P_Q|\) for any queried context \(x\).  During training, they need only be computed at the observed context \(X_t\).  The inner minimax problem has \(|\mathcal P_Q|+1\) variables, and the approximate-separation argument in the preceding subsection computes a \(\rho\)-approximate solution in the stated running time.  The \(\rho\)-approximate minimax solution contributes the additive \(\rho\) term in \eqref{eq:upper-empirical-bound}.  Finally, choose \(\rho=\eps/6\), \(Q=\Theta(1/\eps)\), and \(n=O(\eps^{-(k+2)}+\eps^{-2}\log|\mathcal G|)\) as in Corollary~\ref{cor:upper-sample}.  These choices give the claimed polynomial runtime in \(1/\eps\).

To represent the output, store the fixed solver specification together
with the \(T\) pre-round arrays
\[
H_{t-1}
:=
\bigl(\CumRes^{t-1}_{g,p,j}\bigr)_{g,p,j},
\qquad t\in[T].
\]
This uses \(O(Tk|\mathcal G||\mathcal P_Q|)\) real numbers.  For a queried
context \(x\), rerunning the same deterministic map
\(\mathsf{Solve}_\rho(H_{t-1},x)\) reproduces the rule
\(\pi_t(\cdot\mid x)\) used by the learner.  The full averaged distribution
\[
\Pi_S(\cdot\mid x)
=
\frac1T\sum_{t=1}^T
\mathsf{Solve}_\rho(H_{t-1},x)
\]
is obtained by \(T\) solver calls.  Alternatively, an exact sample from
\(\Pi_S(\cdot\mid x)\) is obtained by drawing
\(t\sim\Unif\{1,\ldots,T\}\), computing
\(\mathsf{Solve}_\rho(H_{t-1},x)\), and sampling from that distribution.
Both the representation size and these evaluation procedures are
polynomial in \(n\), \(Q^k\), \(|\mathcal G|\), and
\(\log(1/\rho)\).
\end{proof}

\section{Canonical Instantiations}\label{sec:instantiations}

We now show that the general lower and upper bounds apply to three multilevel properties: mean and mean absolute deviation; mean, variance, and skewness; and quantile and CVaR.  In each case, they give matching sample-complexity bounds up to logarithmic factors.  The lower bound uses a local family of outcome distributions, while the upper bound may hold over a larger class and uses an explicit optimization oracle.

\subsection{Mean, Mean Absolute Deviation}

Let \(\mathcal M\) be the class of all probability laws on \([0,1]\), and use the prediction space
\[
\mathcal P=[0,1]^2.
\]
For \(\mu\in\mathcal M\), define
\[
m(\mu):=\E_\mu Y,
\qquad
d(\mu):=\E_\mu |Y-m(\mu)|.
\]
The two-level property is
\[
\Gamma(\mu)=(m(\mu),d(\mu)).
\]
It is sequentially conditionally identifiable with residual functions
\[
R_1(\varnothing,m,y)=m-y,
\qquad
R_2(m,d,y)=d-|y-m|.
\]
The second coordinate is the expected \(\ell_1\) loss evaluated at the first coordinate.  It is therefore analogous to variance, but uses absolute loss rather than squared loss.  It is not a bilevel Bayes pair, since absolute loss elicits medians rather than means.

We first construct a local witness family on the three-point support
\[
\mathcal Y_{\mathrm{MAD}}=\{0,1/2,1\}.
\]
Let \(v^\circ=(7/20,7/20)\), and let \(\mathcal R_{\mathrm{MAD}}\) be a sufficiently small closed rectangle around \(v^\circ\), contained in \((0,1/2)\times(0,1)\).  For \(v=(v_1,v_2)\in\mathcal R_{\mathrm{MAD}}\), define \(\mu_v\) by
\[
\Prob_{Y\sim\mu_v}(Y=0)=\frac{v_2}{2v_1},
\]
\[
\Prob_{Y\sim\mu_v}(Y=1/2)=2(1-v_1)-\frac{v_2}{v_1},
\]
\[
\Prob_{Y\sim\mu_v}(Y=1)=2v_1-1+\frac{v_2}{2v_1}.
\]
At \(v^\circ\), these probabilities are \(1/2,3/10,1/5\).  Since they depend continuously on \(v\), \(\mathcal R_{\mathrm{MAD}}\) can be chosen small enough that each is uniformly bounded below by a positive constant throughout the rectangle.  Since \(v_1<1/2\) on \(\mathcal R_{\mathrm{MAD}}\), direct calculation gives
\[
\E_{\mu_v}Y=v_1,
\qquad
\E_{\mu_v}|Y-v_1|=v_2.
\]

\begin{proposition}[Mean, Mean Absolute Deviation: Witness]\label{prop:mad-witness}
The family \(\{\mu_v:v\in\mathcal R_{\mathrm{MAD}}\}\) satisfies Assumption~\ref{ass:witness}.
\end{proposition}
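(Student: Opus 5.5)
We verify the five conditions of Assumption~\ref{ass:witness} in turn, using the explicit three-point form of \(\mu_v\) and the fact that \(\mathcal R_{\mathrm{MAD}}\subset(0,1/2)\times(0,1)\) is a small rectangle around \(v^\circ\) on which all three atoms have mass bounded below by some \(c_0>0\). First we take \(\mathcal R_0:=\mathcal R_{\mathrm{MAD}}\) and check that it lies in \(\operatorname{int}(\mathcal P)=(0,1)^2\), which is immediate. Condition~(i) is the computation already recorded: the three masses sum to one, are positive, and give \(\E_{\mu_v}Y=\tfrac12\Prob(Y=1/2)+\Prob(Y=1)=v_1\). Moreover, since \(0<v_1<1/2\), we have \(\E|Y-v_1|=v_1\Prob(0)+(1/2-v_1)\Prob(1/2)+(1-v_1)\Prob(1)\), which simplifies to \(v_2\). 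This gives \(\Gamma(\mu_v)=v\).

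For (ii) and (iii), compute the expected residuals with the true prefix. At level \(1\), \(R_1(\varnothing,q,\mu_v)=q-v_1\). At level \(2\), \(R_2(v_1,q,\mu_v)=q-\E_{\mu_v}|Y-v_1|=q-v_2\). Hence in both levels the difference \(R_j(v_{<j},q,\mu_v)-R_j(v_{<j},v_j,\mu_v)\) equals \(q-v_j\) exactly, so (ii) and (iii) hold with \(c_{\mathrm{anti}}=C_{\mathrm{lip}}=1\), for all \(q\in[0,1]\). For (iv), the level-\(1\) statement is vacuous. At level \(2\), \(|R_2(p_1,p_2,\mu_v)-R_2(v_1,p_2,\mu_v)|=|\E_{\mu_v}(|Y-p_1|-|Y-v_1|)|\le|p_1-v_1|\) by the reverse triangle inequality, so \(C_{\mathrm{prev}}=1\). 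This holds for arbitrary \(p_1\in[0,1]\), not only near \(v_1\).

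The only step with real content is (v), the KL bound. Here we will use that the masses \(\mu_v(y)\), \(y\in\{0,1/2,1\}\), are smooth (rational, with denominator \(v_1\ge c>0\)) functions of \(v\) on the compact rectangle, and are bounded below by \(c_0\). We bound KL by chi-square:
\[
\KL(\mu_v\,\|\,\mu_{v'})\le\sum_{y}\frac{(\mu_v(y)-\mu_{v'}(y))^2}{\mu_{v'}(y)}\le c_0^{-1}\sum_y(\mu_v(y)-\mu_{v'}(y))^2 .
\]
Each map \(v\mapsto\mu_v(y)\) is Lipschitz on the rectangle with a constant \(L\) controlled by \(\sup|\nabla(v_2/v_1)|\) (bounded because \(v_1\) is bounded away from zero). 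Therefore the right side is at most \(3L^2c_0^{-1}\|v-v'\|_2^2\), which gives \(C_{\mathrm{KL}}\).

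The main point to be careful about is the choice of rectangle. It must be shrunk enough that the three masses stay uniformly positive and \(v_1\) stays below \(1/2\); the latter is what makes the closed form of \(\E|Y-v_1|\) valid. It must still be nondegenerate, which is possible because the masses at \(v^\circ\) are \(1/2,3/10,1/5\) and depend continuously on \(v\). Everything else is direct computation.
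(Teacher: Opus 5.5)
Your proof is correct and follows essentially the same route as the paper. You reduce the true-prefix residuals to \(q-v_j\) for (ii)--(iii), use the reverse triangle inequality for (iv), and combine smooth dependence of the masses on \(v\) with a uniform lower bound on them for (v); your explicit \(\chi^2\) bound and your verification of \(\E_{\mu_v}|Y-v_1|=v_2\) just spell out steps the paper states briefly (the paper uses the same \(\chi^2\) argument for the mean--variance--skewness witness).
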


\begin{proof}
We verified above that \(\Gamma(\mu_v)=v\), as required by Assumption~\ref{ass:witness}(i).  The residuals evaluated at the true prefix are
\[
R_1(v_{<1},q,\mu_v)=q-v_1,
\qquad
R_2(v_1,q,\mu_v)=q-v_2.
\]
Therefore conditions~(ii) and~(iii) in Assumption~\ref{ass:witness} hold.

It remains to verify Assumption~\ref{ass:witness}(iv).  The first coordinate has no preceding predictions.  For the second coordinate,
\[
\begin{aligned}
&\left|
R_2(p_1,p_2,\mu_v)-R_2(v_1,p_2,\mu_v)
\right|
\\
&\qquad=
\left|
\E_{\mu_v}\bigl[|Y-v_1|-|Y-p_1|\bigr]
\right|
\le
|p_1-v_1|,
\end{aligned}
\]
where the last step uses the reverse triangle inequality.  This verifies Assumption~\ref{ass:witness}(iv).

Finally, the probability vector of \(\mu_v\) is a smooth function of \(v=(v_1,v_2)\) on \(\mathcal R_{\mathrm{MAD}}\), and all three coordinates are uniformly bounded below.  Hence, for some constant \(C<\infty\),
\[
\KL(\mu_v\,\|\,\mu_{v'})
\le
C\|v-v'\|_2^2.
\]
This verifies Assumption~\ref{ass:witness}(v).
\end{proof}

\begin{corollary}[Mean, Mean Absolute Deviation: Lower Bound]
Fix \(\kappa>0\).  There exist constants \(c,C,\eps_0>0\), depending only on \(\kappa\), such that the following holds for every \(0<\eps\le\eps_0\).  For the property
\[
\Gamma=(\E Y,\E|Y-\E Y|),
\]
one can construct a finite context space, a binary group family of size at most \(\eps^{-\kappa}\), and a finite collection of data distributions whose conditional outcome laws are supported on three points in \([0,1]\), such that every learner achieving \(\Gamma\)-ECE at most \(\eps\) with probability at least \(2/3\) must use
\[
n
\ge
c\frac{\eps^{-4}}{\log^4(C/\eps)}.
\]
Equivalently,
\[
\SC^{(\kappa)}_{\Gamma,\mathcal M}(\eps)
=
\widetilde{\Omega}(\eps^{-4}).
\]
\end{corollary}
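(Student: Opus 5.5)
This corollary follows by specializing Theorem~\ref{thm:lower-bound} to \(k=2\). Proposition~\ref{prop:mad-witness} already verifies Assumption~\ref{ass:witness} for the family \(\{\mu_v:v\in\mathcal R_{\mathrm{MAD}}\}\). It remains to check the ambient hypotheses of the theorem and to read off the exponent and the dependence of the constants.

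First, I will confirm the setup. The property \(\Gamma=(m,d)\) is sequentially conditionally identifiable on \(\mathcal M\), the class of all laws on \([0,1]\), with residuals \(R_1=m-y\) and \(R_2=d-|y-m|\). These are jointly measurable and bounded, hence integrable. The identification holds because \(\E_\mu[m-Y]=0\) iff \(m=\E_\mu Y\), and, given the true mean, \(\E_\mu[d-|Y-m(\mu)|]=0\) iff \(d=d(\mu)\). The witness rectangle \(\mathcal R_{\mathrm{MAD}}\subset(0,1/2)\times(0,1)\) is nondegenerate, compact, and lies in \(\operatorname{int}(\mathcal P)=(0,1)^2\), as the theorem requires. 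Each \(\mu_v\) is supported on \(\{0,1/2,1\}\subset[0,1]\), so it belongs to \(\mathcal M\). The candidate distributions \(\mathsf D_\theta\) are therefore \(\mathcal M\)-compatible, and their conditional outcome laws are supported on three points.

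Second, I will apply Theorem~\ref{thm:lower-bound} with \(k=2\). For every sufficiently small \(\eps\), it yields a finite context space \(\mathcal X_\eps\), a binary group family with \(|\mathcal G_\eps|\le\eps^{-\kappa}\), and a finite collection \(\mathfrak D_\eps=\{\mathsf D_\theta\}\). Any learner attaining \(\MCErr^\Gamma\le\eps\) with probability \(2/3\) on every member then needs \(n=\Omega(\eps^{-4}/\log^{4}(1/\eps))\). Writing \(\log(1/\eps)\le\log(C/\eps)\) for any \(C\ge1\) puts this in the stated form \(n\ge c\,\eps^{-4}/\log^4(C/\eps)\).

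Third, I will track the constants. By the theorem, the constants and the threshold \(\eps_0\) depend only on \(k\), \(\kappa\), \(\mathcal R_0\), and the constants of Assumption~\ref{ass:witness}. Here \(k=2\), and \(\mathcal R_{\mathrm{MAD}}\) is a fixed explicit rectangle. The constants in the witness assumption are absolute: \(c_{\mathrm{anti}}=C_{\mathrm{lip}}=C_{\mathrm{prev}}=1\), and \(C_{\mathrm{KL}}\) is fixed by the smooth probability map and its uniform lower bound on \(\mathcal R_{\mathrm{MAD}}\). Hence everything depends only on \(\kappa\).

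Fourth, the minimax reformulation \(\SC^{(\kappa)}_{\Gamma,\mathcal M}(\eps)=\widetilde\Omega(\eps^{-4})\) follows as in the remark after Theorem~\ref{thm:lower-bound}. A learner meeting the minimax requirement must in particular succeed on \(\mathcal X_\eps\), on each \(\mathsf D\in\mathfrak D_\eps\), and with \(\mathcal G_\eps\), whose size is within the budget \(B_\kappa(\eps)\).

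The only point needing care is the interior condition \(\mathcal R_0\subset\operatorname{int}(\mathcal P)\) together with membership in \(\mathcal M\), and both are immediate here. I therefore expect no real obstacle.
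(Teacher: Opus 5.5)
Your proposal is correct and follows the paper's proof, which simply applies Theorem~\ref{thm:lower-bound} with $k=2$ to the witness family of Proposition~\ref{prop:mad-witness}. The additional checks you list are the hypotheses the paper leaves implicit: identifiability, $\mathcal R_{\mathrm{MAD}}\subset\operatorname{int}(\mathcal P)$, $\mu_v\in\mathcal M$, the passage from $\log(1/\varepsilon)$ to $\log(C/\varepsilon)$, and the fact that all witness constants are fixed so that the constants depend only on $\kappa$.
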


\begin{proof}
Apply Theorem~\ref{thm:lower-bound} using Proposition~\ref{prop:mad-witness}.
\end{proof}

We next verify the upper-bound conditions over \(\mathcal M\) and give an exact optimization oracle.

\begin{proposition}[Mean, Mean Absolute Deviation: Conditions for the Upper Bound]\label{prop:upper-mad}
The regularity conditions in Assumption~\ref{ass:upper-analytic} hold over \(\mathcal M\).
Moreover, the linear optimization oracle in Assumption~\ref{ass:upper-oracle} has an exact polynomial-time implementation.
\end{proposition}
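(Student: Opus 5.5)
We verify the three parts of Assumption~\ref{ass:upper-analytic} separately, and then build the oracle of Assumption~\ref{ass:upper-oracle}.

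\emph{Condition (i).} Take $\mathcal M$ to be the probability measures on $[0,1]$ with the weak-$*$ topology. This is a convex, compact subset of the space of finite signed Borel measures on $[0,1]$ (Riesz representation and Banach--Alaoglu). For fixed $p=(m,d)$, the maps $\mu\mapsto \E_\mu[m-Y]$ and $\mu\mapsto\E_\mu[d-|Y-m|]$ integrate bounded continuous functions of $y$. They are therefore affine and weak-$*$ continuous.

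\emph{Condition (ii).} On $[0,1]^2\times[0,1]$ we have $|m-y|\le1$ and $|d-|y-m||\le1$, so $R_{\max}=1$ works.

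\emph{Condition (iii).} We use the sufficient Lipschitz condition from the Remark. Take $\mathcal P_Q=\{0,1/Q,\ldots,1\}^2$, which has $(Q+1)^2\le 4Q^2$ points. The first residual satisfies $|R_1(m,\mu)-R_1(m',\mu)|=|m-m'|$. For the second residual, the reverse triangle inequality gives
\[
|R_2(m,d,\mu)-R_2(m',d',\mu)|\le|d-d'|+|m-m'|,
\]
so $L_R=2$ in $\ell_\infty$. Now round $\Gamma(\mu)=(m(\mu),d(\mu))\in[0,1]^2$ to a nearest grid point. Since $R(\Gamma(\mu),\mu)=0$, this gives $\Delta_Q\le 2/(2Q)\le C_{\mathrm{grid}}/Q$.

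\emph{The oracle.} Given coefficients $c_{p,j}$, the objective equals $\E_\mu\phi(Y)$, where
\[
\phi(y)=\sum_{p}\bigl[c_{p,1}(p_1-y)+c_{p,2}(p_2-|y-p_1|)\bigr].
\]
This objective is linear in $\mu$, so its supremum over $\mathcal M$ is attained at a point mass $\delta_{y^*}$ with $y^*\in\arg\max_{[0,1]}\phi$. The function $\phi$ is continuous and piecewise linear in $y$, and its breakpoints lie in the set $B=\{p_1:p\in\mathcal P_Q\}\cup\{0,1\}$, which has $O(Q)$ elements. On each piece $\phi$ is affine, so its maximum over $[0,1]$ is attained at one of these breakpoints.

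The oracle therefore evaluates $\phi$ at every point of $B$. Each evaluation costs $O(|\mathcal P_Q|)$ operations, for a total of $O(|\mathcal P_Q|^{3/2})$. It returns $\widehat\mu=\delta_{y^*}$ at a maximizing breakpoint, breaking ties deterministically, for instance by choosing the smallest such $y$. This gives $\xi=0$ exactly, and the residual expectations $R(p,\delta_{y^*})=R(p,y^*)$ are computed exactly.

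For Borel measurability, the map from coefficients to $y^*$ is a lexicographic argmax over finitely many continuous functions, so it is Borel.

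None of these steps is a serious obstacle. The most delicate points are the choice of topology in condition (i), needed to secure compactness together with continuity, and the restriction of the maximization to the finite breakpoint set $B$.
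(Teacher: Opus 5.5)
Your proposal is correct and follows essentially the same route as the paper's proof: compactness of $\mathcal M$ in the weak topology, the Lipschitz bound $|d-d'|+|m-m'|$ combined with a mesh-$O(1/Q)$ rectangular grid, and an exact oracle that reduces to a point mass and evaluates the piecewise-affine $\phi$ at $\{0,1\}\cup\{p_1:p\in\mathcal P_Q\}$. Your explicit constants ($R_{\max}=1$, $(Q+1)^2\le 4Q^2$, $\Delta_Q\le 1/Q$) and the deterministic tie-breaking with its Borel-measurability remark are small additions that the paper leaves implicit.
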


\begin{proof}
The law class \(\mathcal M\) is convex and compact in the weak topology.  For each fixed prediction \((m,d)\), the maps
\[
\mu\mapsto \E_\mu[m-Y],
\qquad
\mu\mapsto \E_\mu\!\left[d-|Y-m|\right]
\]
are affine and continuous.  These observations verify Assumption~\ref{ass:upper-analytic}(i).  Condition~(ii) also holds because both coordinates of the prediction and the outcome lie in \([0,1]\).

For two predictions \((m,d),(m',d')\),
\[
\left|R_1(\varnothing,m,\mu)-R_1(\varnothing,m',\mu)\right|
=
|m-m'|,
\]
and
\[
\left|R_2(m,d,\mu)-R_2(m',d',\mu)\right|
\le
|d-d'|+|m-m'|.
\]
Thus \(R(\cdot,\mu)\) is uniformly Lipschitz.  A rectangular grid with mesh \(O(1/Q)\) therefore satisfies the required approximation bound and has \(O(Q^2)\) points, verifying Assumption~\ref{ass:upper-analytic}(iii).

For the oracle, given coefficients \(c_{p,j}\), write each grid point as \(p=(m_p,d_p)\), and define
\[
\phi(y)
:=
\sum_{p\in\mathcal P_Q}
\left[
c_{p,1}(m_p-y)
+c_{p,2}\bigl(d_p-|y-m_p|\bigr)
\right].
\]
Maximizing \(\E_\mu\phi(Y)\) over all probability laws on \([0,1]\) is the same as maximizing \(\phi(y)\) over \(y\in[0,1]\), because an optimizer may be taken to be a point mass at a maximizer.  The function \(\phi\) is piecewise affine with breakpoints among the grid values \(m_p\), so a maximizer lies in \(\{0,1\}\cup\{m_p:p\in\mathcal P_Q\}\).  Checking these points gives an exact polynomial-time oracle.
\end{proof}

\begin{corollary}[Mean, Mean Absolute Deviation: Upper Bound]\label{cor:upper-mad}
Fix \(\kappa>0\).  There exist constants \(C,\eps_0>0\), depending only on \(\kappa\), such that the following holds for every \(0<\eps\le\eps_0\).  Let
\[
\Gamma=(\E Y,\E|Y-\E Y|)
\]
be the property on \([0,1]\).  For every data distribution \(\mathsf D\) on \(\mathcal X\times[0,1]\) and every finite group family satisfying \(|\mathcal G|\le\lceil\eps^{-\kappa}\rceil\), there is a randomized learner using at most \(C\eps^{-4}\) samples whose output \(\Pi_S\) satisfies
\[
\Prob\left[
\MCErr^\Gamma_{\mathsf D}(\Pi_S;\mathcal G)\le\eps
\right]
\ge
\frac23.
\]
The exact oracle in Proposition~\ref{prop:upper-mad} gives training time polynomial in \(1/\eps\).
\end{corollary}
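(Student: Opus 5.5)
This follows from Corollary~\ref{cor:upper-sample}, instantiated with $k=2$, once we check its hypotheses. Proposition~\ref{prop:upper-mad} already shows that Assumption~\ref{ass:upper-analytic} holds over $\mathcal M$, the class of all laws on $[0,1]$. The constants are $R_{\max}=1$ (both residual coordinates lie in $[-1,1]$) and a $C_{\mathrm{grid}}$ coming from a rectangular grid of mesh $1/Q$ on $[0,1]^2$. Both are absolute constants, so the constant $C$ in Theorem~\ref{thm:upper-batch} depends on nothing. Every data distribution $\mathsf D$ on $\mathcal X\times[0,1]$ is automatically $\mathcal M$-compatible, because its regular conditional law is a probability measure on $[0,1]$. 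So no restriction on $\mathsf D$ is needed beyond the outcome space.

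Next, fix $\rho=\eps/6$ and $Q=\Theta(1/\eps)$. Corollary~\ref{cor:upper-sample} then gives a randomized learner which, from $n\ge C(\eps^{-4}+\eps^{-2}\log|\mathcal G|)$ samples, outputs $\Pi_S$ with $\MCErr^\Gamma_{\mathsf D}(\Pi_S;\mathcal G)\le\eps$ with probability at least $2/3$. Under $|\mathcal G|\le\lceil\eps^{-\kappa}\rceil$ we have
\[
\log|\mathcal G|\le\kappa\log(1/\eps)+\log 2,
\]
so $\eps^{-2}\log|\mathcal G|\le\eps^{-4}$ once $\eps\le\eps_0(\kappa)$. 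This yields the sample bound $C\eps^{-4}$, with $C$ and $\eps_0$ depending only on $\kappa$. The one point to check is that the learner is allowed exactly $n=\lceil C\eps^{-4}\rceil$ samples. We run the online forecaster on those samples with $T=n$, and the theorem's hypothesis $n\ge\cdots$ is satisfied.

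For the runtime, Proposition~\ref{prop:upper-mad} supplies an exact polynomial-time oracle, so Assumption~\ref{ass:upper-oracle} holds with any accuracy $\xi$. Theorem~\ref{thm:upper-polytime} then gives training time polynomial in $n$, $Q^2$, $|\mathcal G|$, and $\log(1/\rho)$. With $n=O(\eps^{-4})$, $Q^2=O(\eps^{-2})$, $|\mathcal G|\le\lceil\eps^{-\kappa}\rceil$, and $\log(6/\eps)$, this is polynomial in $1/\eps$.

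The main obstacle is bookkeeping rather than mathematics. We must confirm that the exact oracle, which maximizes the piecewise-affine function $\phi$ over the finite candidate set $\{0,1\}\cup\{m_p\}$, meets the deterministic measurable-realization requirement of Assumption~\ref{ass:upper-oracle}. Fixed tie-breaking, for example taking the smallest maximizer, handles this. We must also track that every constant depends only on $\kappa$.
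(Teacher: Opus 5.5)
Your proposal is correct and matches the paper's proof, which simply applies Corollary~\ref{cor:upper-sample} with $k=2$ and Theorem~\ref{thm:upper-polytime}, using Proposition~\ref{prop:upper-mad}. Your extra bookkeeping makes explicit what the paper leaves implicit: $R_{\max}=1$, an absolute $C_{\mathrm{grid}}$, automatic $\mathcal M$-compatibility, absorbing $\eps^{-2}\log|\mathcal G|$ into $\eps^{-4}$ for $\eps\le\eps_0(\kappa)$, and deterministic tie-breaking in the exact oracle.
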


\begin{proof}
Apply Corollary~\ref{cor:upper-sample} and Theorem~\ref{thm:upper-polytime} using Proposition~\ref{prop:upper-mad}.
\end{proof}

\subsection{Mean, Variance, Skewness}

Let
\[
\mathcal Y_{\mathrm{MVS}}=\{0,1/3,2/3,1\}.
\]
Fix \(h\in(0,1/4)\), and let \(\mathcal M_h\) be the set of probability laws on \(\mathcal Y_{\mathrm{MVS}}\) that assign mass at least \(h\) to every atom.  This class is convex and compact.  Moreover, every \(\mu\in\mathcal M_h\) has variance bounded below:
\[
\Var_\mu(Y)
=
\frac12\sum_{y,y'\in\mathcal Y_{\mathrm{MVS}}}
\mu(y)\mu(y')(y-y')^2
\ge
\mu(0)\mu(1)
\ge
h^2.
\]
For \(\mu\in\mathcal M_h\), define
\[
m(\mu):=\E_\mu Y,
\qquad
\sigma^2(\mu):=\E_\mu(Y-m(\mu))^2,
\]
and the standardized skewness
\[
\gamma(\mu)
:=
\frac{\E_\mu(Y-m(\mu))^3}{\bigl(\sigma^2(\mu)\bigr)^{3/2}}.
\]
The three-level property is
\[
\Gamma(\mu)=(m(\mu),\sigma^2(\mu),\gamma(\mu)).
\]
Since \(|Y-m(\mu)|\le1\) and \(\sigma^2(\mu)\ge h^2\), its range lies in
\[
\mathcal P
:=
[0,1]\times[h^2,1/4]\times[-h^{-3},h^{-3}].
\]
It is sequentially conditionally identifiable with residual functions
\[
R_1(\varnothing,m,y)=m-y,
\]
\[
R_2(m,\sigma^2,y)=\sigma^2-(y-m)^2,
\]
\[
R_3((m,\sigma^2),\gamma,y)=\gamma\,(\sigma^2)^{3/2}-(y-m)^3.
\]
The third residual depends on both preceding coordinates and identifies skewness only after the mean and variance have been fixed.

We now construct a local three-dimensional witness family.  Let \(\mu^\circ\) be the uniform distribution on \(\mathcal Y_{\mathrm{MVS}}\), and let
\[
v^\circ:=\Gamma(\mu^\circ)=\left(\frac12,\frac5{36},0\right).
\]
For a target vector \(v=(v_1,v_2,v_3)\) with \(v_2>0\), define the corresponding raw moments
\[
r_1(v):=v_1,
\qquad
r_2(v):=v_1^2+v_2,
\qquad
r_3(v):=v_1^3+3v_1v_2+v_3v_2^{3/2}.
\]
Define weights on \(\mathcal Y_{\mathrm{MVS}}\) by
\[
\begin{aligned}
\mu_v(0)
&:=
1-\frac{11}{2}r_1(v)+9r_2(v)-\frac92r_3(v),
\\
\mu_v(1/3)
&:=
9r_1(v)-\frac{45}{2}r_2(v)+\frac{27}{2}r_3(v),
\\
\mu_v(2/3)
&:=
-\frac92r_1(v)+18r_2(v)-\frac{27}{2}r_3(v),
\\
\mu_v(1)
&:=
r_1(v)-\frac92r_2(v)+\frac92r_3(v).
\end{aligned}
\]
A direct calculation gives
\[
\sum_y\mu_v(y)=1,
\qquad
\sum_y\mu_v(y)y^j=r_j(v)
\quad
\text{for }j\in\{1,2,3\}.
\]
At \(v=v^\circ\), all four weights equal \(1/4\).  Since \(h<1/4\), continuity gives a nondegenerate closed box
\[
\mathcal R_{\mathrm{MVS}}
\subset
\operatorname{int}(\mathcal P)
\]
around \(v^\circ\), small enough that \(\mu_v(y)\ge h\) for every \(v\in\mathcal R_{\mathrm{MVS}}\) and \(y\in\mathcal Y_{\mathrm{MVS}}\).  Thus \(\mu_v\in\mathcal M_h\) throughout this box.  Expanding the centered moments gives
\[
\E_{\mu_v}(Y-v_1)^2=v_2,
\qquad
\E_{\mu_v}(Y-v_1)^3=v_3v_2^{3/2},
\]
and hence
\[
\Gamma(\mu_v)=v
\qquad
\forall v\in\mathcal R_{\mathrm{MVS}}.
\]

\begin{proposition}[Mean, Variance, Skewness: Witness]\label{prop:mvs-witness}
The family \(\{\mu_v:v\in\mathcal R_{\mathrm{MVS}}\}\) satisfies Assumption~\ref{ass:witness}.
\end{proposition}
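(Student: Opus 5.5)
We verify the five conditions of Assumption~\ref{ass:witness} in order, using the explicit residuals $R_1=m-y$, $R_2=\sigma^2-(y-m)^2$, $R_3=\gamma(\sigma^2)^{3/2}-(y-m)^3$ and the moment identities already established for $\mu_v$. Condition~(i) is the identity $\Gamma(\mu_v)=v$ shown just before the statement.

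For (ii) and (iii), evaluate each residual at the true prefix. We get $R_1(v_{<1},q,\mu_v)=q-v_1$ and $R_2(v_1,q,\mu_v)=q-\E_{\mu_v}(Y-v_1)^2=q-v_2$. For the third level,
\[
R_3((v_1,v_2),q,\mu_v)=q\,v_2^{3/2}-v_3v_2^{3/2}=v_2^{3/2}(q-v_3).
\]
Hence, in every coordinate, the difference $R_j(v_{<j},q,\mu_v)-R_j(v_{<j},v_j,\mu_v)$ equals a positive factor times $q-v_j$. The factor is $1$ for $j\le2$ and $v_2^{3/2}$ for $j=3$. Since $\mathcal R_{\mathrm{MVS}}$ is a compact box inside $\operatorname{int}(\mathcal P)$, we have $v_2\in[h^2,1/4]$ there. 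So (ii) holds with $c_{\mathrm{anti}}=\min(1,h^3)$ and (iii) holds with $C_{\mathrm{lip}}=1$.

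For (iv), we bound the effect of prefix errors, using that $Y$, $p_1$, $v_1$ lie in $[0,1]$. Level $2$ gives $|\E[(Y-v_1)^2-(Y-p_1)^2]|\le 2|p_1-v_1|$, because $|a^2-b^2|\le|a+b||a-b|$ with $|a+b|\le2$. Level $3$ has two contributions. The moment part contributes $|\E[(Y-v_1)^3-(Y-p_1)^3]|\le3|p_1-v_1|$. The scale part contributes $|p_3|\,|p_2^{3/2}-v_2^{3/2}|$. Since $p_3\in[-h^{-3},h^{-3}]$ and $x\mapsto x^{3/2}$ is $\tfrac32$-Lipschitz on $[0,1/4]$, this is at most $\tfrac32h^{-3}|p_2-v_2|$. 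We note that (iv) requires the bound for every $p\in\mathcal P$, not only near $v$. This is the step needing care, and it is exactly why compactness of $\mathcal P$, in particular the bound on $|p_3|$, is used. Thus $C_{\mathrm{prev}}=\max(3,\tfrac32h^{-3})$.

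For (v), the four weights $\mu_v(y)$ are affine in $(r_1,r_2,r_3)$. In turn, $r(v)$ is smooth in $v$ on the box because $v_2\ge h^2>0$. All weights are at least $h$ on the box. We then bound KL by the $\chi^2$ divergence:
\[
\KL(\mu_v\|\mu_{v'})\le\sum_y\frac{(\mu_v(y)-\mu_{v'}(y))^2}{\mu_{v'}(y)}\le h^{-1}\,\|\mu_v-\mu_{v'}\|_2^2 .
\]
Since the map $v\mapsto\mu_v$ is Lipschitz on the compact box, the right-hand side is at most $C\|v-v'\|_2^2$.

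The main obstacle is only bookkeeping. We must make sure the constants are uniform: they may depend on $h$, which is fixed, but not on $v$ or $p$. None of the verifications is deep.
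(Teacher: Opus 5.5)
Your proof is correct and follows the paper's argument essentially step for step. You evaluate the residuals at the true prefix to obtain the factors $1,1,v_2^{3/2}$ for (ii)--(iii), split the level-3 prefix error into the scale term $|p_3|\,|p_2^{3/2}-v_2^{3/2}|$ and the cubic-moment term for (iv), and bound the KL divergence by the $\chi^2$ divergence using the atom lower bound $h$ for (v). The only difference is that you make the constants explicit, for example $c_{\mathrm{anti}}=\min(1,h^3)$ and $C_{\mathrm{prev}}=\max(3,\tfrac32h^{-3})$, whereas the paper leaves them implicit.
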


\begin{proof}
The identity \(\Gamma(\mu_v)=v\) holds by construction, as required by Assumption~\ref{ass:witness}(i).  The residuals evaluated at the true prefix are
\[
R_1(v_{<1},q,\mu_v)=q-v_1,
\qquad
R_2(v_1,q,\mu_v)=q-v_2,
\]
and
\[
R_3((v_1,v_2),q,\mu_v)
=
v_2^{3/2}(q-v_3).
\]
Since \(\mathcal R_{\mathrm{MVS}}\) is compact and \(v_2\ge h^2\), the coefficients \(1\) and \(v_2^{3/2}\) are uniformly bounded above and away from zero.  Therefore conditions~(ii) and~(iii) in Assumption~\ref{ass:witness} hold.

It remains to verify Assumption~\ref{ass:witness}(iv).  The first coordinate has no preceding predictions.  For the second coordinate,
\[
R_2(p_1,p_2,\mu_v)-R_2(v_1,p_2,\mu_v)
=
\E_{\mu_v}\!\left[(v_1-Y)^2-(p_1-Y)^2\right].
\]
Since \(p_1,v_1,Y\in[0,1]\), the above equation gives
\[
\left|
R_2(p_1,p_2,\mu_v)-R_2(v_1,p_2,\mu_v)
\right|
\le
2|p_1-v_1|.
\]
For the third coordinate,
\[
\begin{aligned}
&\left|
R_3((p_1,p_2),p_3,\mu_v)
-
R_3((v_1,v_2),p_3,\mu_v)
\right|
\\
&\qquad\le
|p_3|\,|p_2^{3/2}-v_2^{3/2}|
+
\E_{\mu_v}\left|(Y-p_1)^3-(Y-v_1)^3\right|.
\end{aligned}
\]
On \(\mathcal P\), \(|p_3|\le h^{-3}\), the map \(u\mapsto u^{3/2}\) is Lipschitz on \([h^2,1/4]\), and \((y,a)\mapsto(y-a)^3\) is Lipschitz for \(y,a\in[0,1]\).  Therefore
\[
\left|
R_3((p_1,p_2),p_3,\mu_v)
-
R_3((v_1,v_2),p_3,\mu_v)
\right|
\le
C\bigl(|p_1-v_1|+|p_2-v_2|\bigr),
\]
with \(C\) depending only on \(h\).  This verifies Assumption~\ref{ass:witness}(iv).

Finally, the explicit formulas above show that \(v\mapsto\mu_v\) is Lipschitz on \(\mathcal R_{\mathrm{MVS}}\).  Since \(\mu_{v'}(y)\ge h\) for every atom, the inequality \(\log u\le u-1\) gives
\[
\begin{aligned}
\KL(\mu_v\,\|\,\mu_{v'})
&\le
\sum_{y\in\mathcal Y_{\mathrm{MVS}}}
\frac{\bigl(\mu_v(y)-\mu_{v'}(y)\bigr)^2}{\mu_{v'}(y)}
\\
&\le
\frac1h
\sum_{y\in\mathcal Y_{\mathrm{MVS}}}
\bigl(\mu_v(y)-\mu_{v'}(y)\bigr)^2
\\
&\le
C\|v-v'\|_2^2.
\end{aligned}
\]
This verifies Assumption~\ref{ass:witness}(v).
\end{proof}

\begin{corollary}[Mean, Variance, Skewness: Lower Bound]
Fix \(h\in(0,1/4)\) and \(\kappa>0\).  There exist constants \(c,C,\eps_0>0\), depending only on \(h\) and \(\kappa\), such that the following holds for every \(0<\eps\le\eps_0\).  For the property
\[
\Gamma=(\text{mean},\text{variance},\text{skewness}),
\]
one can construct a finite context space, a binary group family of size at most \(\eps^{-\kappa}\), and a finite collection of data distributions whose conditional outcome laws belong to \(\mathcal M_h\), such that every learner achieving \(\Gamma\)-ECE at most \(\eps\) with probability at least \(2/3\) must use
\[
n
\ge
c\frac{\eps^{-5}}{\log^5(C/\eps)}.
\]
Equivalently,
\[
\SC^{(\kappa)}_{\Gamma,\mathcal M_h}(\eps)
=
\widetilde{\Omega}(\eps^{-5}).
\]
\end{corollary}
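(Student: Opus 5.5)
This corollary follows by a direct application of Theorem~\ref{thm:lower-bound}, with $k=3$, to the witness family of Proposition~\ref{prop:mvs-witness}. The plan is to check that all hypotheses of the theorem are met in the present setting, and then to read off the exponent and the dependence of the constants.

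\textbf{Step 1: verify the setting.} The law class is $\mathcal M_h$, the prediction space is $\mathcal P=[0,1]\times[h^2,1/4]\times[-h^{-3},h^{-3}]$, and the residuals $R_1,R_2,R_3$ are the ones given above. These residuals are sequentially conditionally identifiable on $\mathcal M_h$. The first two identify the mean and the variance. The third vanishes at the true prefix exactly when $\gamma=\E(Y-m)^3/\sigma^3$, because $\sigma^2\ge h^2>0$.

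The witness rectangle $\mathcal R_{\mathrm{MVS}}$ is a nondegenerate compact box contained in $\operatorname{int}(\mathcal P)$. This holds because $v^\circ=(1/2,5/36,0)$ is interior: indeed $h^2<1/16<5/36<1/4$. Every $\mu_v$ lies in $\mathcal M_h$, since all its weights are at least $h$. Proposition~\ref{prop:mvs-witness} then supplies Assumption~\ref{ass:witness}. Its constants depend only on $h$, because $\mathcal R_{\mathrm{MVS}}$ is chosen as a function of $h$ alone.

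\textbf{Step 2: invoke the theorem.} Theorem~\ref{thm:lower-bound} now yields, for every sufficiently small $\eps$, three objects:
\begin{itemize}
\item a finite context space;
\item a binary group family of size at most $\eps^{-\kappa}$;
\item a finite collection of $\mathcal M_h$-compatible distributions, whose conditional laws are of the form $\mu_{v_\theta(a)}$ and hence lie in $\mathcal M_h$.
\end{itemize}
Any learner attaining $\Gamma$-ECE at most $\eps$ with probability at least $2/3$ on this collection needs
\[
n=\Omega\bigl(\eps^{-5}/\log^{5}(1/\eps)\bigr)
\]
samples, since $k+2=5$.

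\textbf{Step 3: constants and the equivalent statement.} The implicit constants in Theorem~\ref{thm:lower-bound} depend only on $k=3$, $\kappa$, $\mathcal R_{\mathrm{MVS}}$, and the constants of Assumption~\ref{ass:witness}. All of these depend only on $h$ and $\kappa$. To obtain the stated form, rewrite $\log(1/\eps)$ as $\log(C/\eps)$ with $C\ge1$, absorbing the difference into $c$.

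The equivalent minimax statement for $\SC^{(\kappa)}_{\Gamma,\mathcal M_h}$ follows because the constructed context space, distributions, and group family are admissible in the definition of $\SC^{(\kappa)}_{\Gamma,\mathcal M_h}$ in Section~\ref{sec:minimax-sample-complexity}.

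\textbf{Main obstacle.} The only real point to check is the dependence of the constants, namely that $\mathcal R_{\mathrm{MVS}}$ and the constants of Proposition~\ref{prop:mvs-witness} can be fixed as functions of $h$ alone. This is immediate from their explicit construction.
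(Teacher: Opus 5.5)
Your proposal is correct and matches the paper's proof, which simply applies Theorem~\ref{thm:lower-bound} with $k=3$ to the witness family of Proposition~\ref{prop:mvs-witness}. Your extra checks are sound bookkeeping that the paper leaves implicit: that $v^\circ$ is interior to $\mathcal P$, that the constants depend only on $h$ and $\kappa$, and that $\log(1/\eps)$ can be rewritten as $\log(C/\eps)$.
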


\begin{proof}
Apply Theorem~\ref{thm:lower-bound} using Proposition~\ref{prop:mvs-witness}.
\end{proof}

We next verify the upper-bound conditions over \(\mathcal M_h\) and give an exact optimization oracle.

\begin{proposition}[Mean, Variance, Skewness: Conditions for the Upper Bound]\label{prop:upper-mvs}
The regularity conditions in Assumption~\ref{ass:upper-analytic} hold over \(\mathcal M_h\).
Moreover, the linear optimization oracle in Assumption~\ref{ass:upper-oracle} has an exact polynomial-time implementation.
\end{proposition}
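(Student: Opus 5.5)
The verification follows the template of Proposition~\ref{prop:upper-mad}, taking the three parts of Assumption~\ref{ass:upper-analytic} in order and then building the oracle.

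\emph{Condition (i).} The class \(\mathcal M_h\) is a compact convex polytope inside the probability simplex on the four atoms of \(\mathcal Y_{\mathrm{MVS}}\), so it sits in \(\R^4\). For a fixed prediction \(p=(m,\sigma^2,\gamma)\in\mathcal P\), each map
\[
\mu\mapsto \E_\mu R_j(p_{<j},p_j,Y)=\sum_{y}\mu(y)R_j(p_{<j},p_j,y)
\]
is linear in the weight vector, hence affine and continuous.

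\emph{Condition (ii).} Because \(y,m\in[0,1]\) and \(\sigma^2\le1/4\), we have \(|R_1|\le1\) and \(|R_2|\le1\). For the third residual, \(|\gamma(\sigma^2)^{3/2}|\le h^{-3}\cdot 1/8\) and \(|(y-m)^3|\le1\). So \(R_{\max}=1+h^{-3}\) suffices.

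\emph{Condition (iii).} I will use the Lipschitz remark following Assumption~\ref{ass:upper-analytic}. Differentiating in \(p\) over the compact box \(\mathcal P\):
\begin{itemize}
\item \(R_1\) has Lipschitz constant \(1\) in \(m\);
\item \(R_2\) has Lipschitz constant \(1\) in \(\sigma^2\) and \(2\) in \(m\);
\item \(R_3\) has Lipschitz constant \((\sigma^2)^{3/2}\le1\) in \(\gamma\), \(\tfrac32|\gamma|(\sigma^2)^{1/2}\le \tfrac34 h^{-3}\) in \(\sigma^2\), and \(3\) in \(m\).
\end{itemize}
Hence \(\|R(p,\mu)-R(p',\mu)\|_\infty\le L_h\|p-p'\|_\infty\), with \(L_h\) depending only on \(h\). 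Take \(\mathcal P_Q\) to be a product grid on \(\mathcal P\) with \(\ell_\infty\)-mesh \(1/Q\). Its size is \(O(Q^3)\), but the third side has length \(2h^{-3}\), so the constant \(C_{\mathrm{grid}}\) depends on \(h\), which is acceptable. Since \(\Gamma(\mu)\in\mathcal P\) and \(R(\Gamma(\mu),\mu)=0\), rounding \(\Gamma(\mu)\) to the nearest grid point gives \(\Delta_Q\le L_h/Q\).

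\emph{The oracle.} Given coefficients \(c_{p,j}\), the objective equals \(\sum_y\mu(y)\phi(y)\), where
\[
\phi(y)=\sum_{p,j}c_{p,j}R_j(p_{<j},p_j,y).
\]
Computing \(\phi\) at each of the four atoms costs \(O(|\mathcal P_Q|)\) operations. We then maximize a linear function of \(\mu\) over the polytope
\[
\Bigl\{\mu:\ \mu(y)\ge h\ \text{for all } y,\ \textstyle\sum_y\mu(y)=1\Bigr\}.
\]
This has a closed-form exact solution: put mass \(h\) on every atom, then add the remaining \(1-4h\) to an atom maximizing \(\phi\). Breaking ties toward the smallest atom makes the choice deterministic and Borel measurable. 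The residual expectations at the returned law are then exact finite sums.

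The only point needing care is (iii): the skewness coordinate has a range of size \(h^{-3}\), which inflates the constants but not the \(Q^3\) growth rate. The remaining steps are routine.
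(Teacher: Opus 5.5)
Your proposal is correct and follows the paper's proof step for step. You use linearity in the four atom weights on the compact convex set $\mathcal M_h$ for (i), boundedness of $\mathcal P$ and the outcomes for (ii), and uniform Lipschitz continuity of $R(\cdot,\mu)$ on the box $\mathcal P$ together with an $O(1/Q)$-mesh product grid for (iii); the oracle is the same closed form, mass $h$ on every atom and the remaining $1-4h$ on a maximizer of $\phi$. Your explicit constants ($R_{\max}=1+h^{-3}$, the partial-derivative bounds, and the $h$-dependence of $C_{\mathrm{grid}}$) and your deterministic tie-breaking rule only make precise what the paper leaves implicit.
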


\begin{proof}
The law class \(\mathcal M_h\) is a convex, compact subset of the four-dimensional probability simplex.  For each fixed prediction \(p=(m,\sigma^2,\gamma)\), the maps
\[
\mu\mapsto \E_\mu[m-Y],
\]
\[
\mu\mapsto \E_\mu\!\left[\sigma^2-(Y-m)^2\right],
\]
\[
\mu\mapsto \E_\mu\!\left[\gamma\,(\sigma^2)^{3/2}-(Y-m)^3\right]
\]
are affine and continuous.  These observations verify Assumption~\ref{ass:upper-analytic}(i).  Condition~(ii) also holds because \(\mathcal P\) and \(\mathcal Y_{\mathrm{MVS}}\) are compact.

It remains to verify Assumption~\ref{ass:upper-analytic}(iii).  For two predictions \(p=(m,\sigma^2,\gamma)\) and \(p'=(m',{\sigma'}^2,\gamma')\) in \(\mathcal P\),
\[
\left|R_1(\varnothing,m,\mu)-R_1(\varnothing,m',\mu)\right|
=
|m-m'|,
\]
\[
\left|R_2(m,\sigma^2,\mu)-R_2(m',{\sigma'}^2,\mu)\right|
\le
|\sigma^2-{\sigma'}^2|+2|m-m'|,
\]
and
\[
\left|R_3((m,\sigma^2),\gamma,\mu)-R_3((m',{\sigma'}^2),\gamma',\mu)\right|
\le
C_h\bigl(|m-m'|+|\sigma^2-{\sigma'}^2|+|\gamma-\gamma'|\bigr),
\]
because \(\gamma\) is bounded, \(u\mapsto u^{3/2}\) is Lipschitz on \([h^2,1/4]\), and \((y,m)\mapsto(y-m)^3\) is Lipschitz on \(\mathcal Y_{\mathrm{MVS}}\times[0,1]\).  Thus \(R(\cdot,\mu)\) is uniformly Lipschitz on \(\mathcal P\).  A rectangular grid with mesh \(O(1/Q)\) therefore satisfies the required approximation bound and has \(O(Q^3)\) points.

For the oracle, given coefficients \(c_{p,j}\), write each grid point as \(p=(m_p,\sigma_p^2,\gamma_p)\), and define
\[
\phi(y)
:=
\sum_{p\in\mathcal P_Q}
\left[
c_{p,1}(m_p-y)
+c_{p,2}\bigl(\sigma_p^2-(y-m_p)^2\bigr)
+c_{p,3}\bigl(\gamma_p\,(\sigma_p^2)^{3/2}-(y-m_p)^3\bigr)
\right].
\]
The oracle problem is
\[
\sup_{\mu\in\mathcal M_h}
\sum_{y\in\mathcal Y_{\mathrm{MVS}}}\mu(y)\phi(y).
\]
This is a linear optimization problem over the simplex with lower bounds \(\mu(y)\ge h\).  An optimizer assigns mass \(h\) to each atom and the remaining mass \(1-4h\) to an atom maximizing \(\phi(y)\).  Thus the oracle is exact and polynomial time.
\end{proof}

\begin{corollary}[Mean, Variance, Skewness: Upper Bound]\label{cor:upper-mvs}
Fix \(h\in(0,1/4)\) and \(\kappa>0\).  There exist constants \(C,\eps_0>0\), depending only on \(h\) and \(\kappa\), such that the following holds for every \(0<\eps\le\eps_0\).  Let
\[
\Gamma=(\text{mean},\text{variance},\text{skewness})
\]
be the property over \(\mathcal M_h\).  For every \(\mathcal M_h\)-compatible data distribution \(\mathsf D\) and every finite group family satisfying \(|\mathcal G|\le\lceil\eps^{-\kappa}\rceil\), there is a randomized learner using at most \(C\eps^{-5}\) samples whose output \(\Pi_S\) satisfies
\[
\Prob\left[
\MCErr^\Gamma_{\mathsf D}(\Pi_S;\mathcal G)\le\eps
\right]
\ge
\frac23.
\]
The exact oracle in Proposition~\ref{prop:upper-mvs} gives training time polynomial in \(1/\eps\).
\end{corollary}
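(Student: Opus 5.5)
This corollary follows by specializing the general results, exactly as in Corollary~\ref{cor:upper-mad}. The plan is to apply Corollary~\ref{cor:upper-sample} with $k=3$ and $\mathcal M=\mathcal M_h$, and Theorem~\ref{thm:upper-polytime} for the running time. Both hypotheses are supplied by Proposition~\ref{prop:upper-mvs}.

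First, fix $h\in(0,1/4)$ and $\kappa>0$. By Proposition~\ref{prop:upper-mvs}, Assumption~\ref{ass:upper-analytic} holds over $\mathcal M_h$. The constants are as follows.
\begin{itemize}
\item $R_{\max}$ depends only on $h$, since $\mathcal P=[0,1]\times[h^2,1/4]\times[-h^{-3},h^{-3}]$ and $\mathcal Y_{\mathrm{MVS}}\subset[0,1]$ are bounded. In particular, $|R_3|\le h^{-3}\cdot 8^{-1}+1$.
\item $C_{\mathrm{grid}}$ depends only on $h$. It comes from the uniform Lipschitz constant $C_h$ of $R(\cdot,\mu)$ and a rectangular grid of mesh $O(1/Q)$ on $\mathcal P$ with $O(Q^3)$ points. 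The side lengths of $\mathcal P$ depend on $h$, so the grid size does too, but only through constants.
\end{itemize}
Since $\mathsf D$ is $\mathcal M_h$-compatible, Theorem~\ref{thm:upper-batch} applies. Corollary~\ref{cor:upper-sample} with $k=3$ and $\rho\le\eps/6$ then gives a learner with $Q=\Theta(1/\eps)$ using
\[
n\ge C\bigl(\eps^{-5}+\eps^{-2}\log|\mathcal G|\bigr)
\]
samples and achieving $\MCErr^\Gamma_{\mathsf D}(\Pi_S;\mathcal G)\le\eps$ with probability at least $2/3$.

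Second, substitute $|\mathcal G|\le\lceil\eps^{-\kappa}\rceil$. This gives $\log|\mathcal G|\le\kappa\log(1/\eps)+\log 2$, so
\[
\eps^{-2}\log|\mathcal G|\le\eps^{-5}
\]
once $\eps\le\eps_0(\kappa)$. The total is therefore at most $C\eps^{-5}$, with $C$ depending on $h$ and $\kappa$. The threshold $\eps_0$ must also be small enough for the hypothesis ``sufficiently small $\eps$'' in Theorem~\ref{thm:upper-batch} to hold, for example so that $Q\ge1$ is meaningful.

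Third, for the running time, Proposition~\ref{prop:upper-mvs} provides an exact polynomial-time oracle satisfying Assumption~\ref{ass:upper-oracle} with $\xi=0$. The oracle places mass $h$ on every atom and the remaining mass $1-4h$ on a maximizer of $\phi$ over the four atoms. Evaluating $\phi$ at four points costs $O(|\mathcal P_Q|)$ operations. Theorem~\ref{thm:upper-polytime} with $\rho=\eps/6$ then gives training time polynomial in $n=O(\eps^{-5})$, $Q^3=O(\eps^{-3})$, $|\mathcal G|\le\lceil\eps^{-\kappa}\rceil$ and $\log(6/\eps)$, which is polynomial in $1/\eps$.

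The only step needing care is the bookkeeping: making sure every suppressed constant depends only on $h$ and $\kappa$. The main place this could fail is the Lipschitz constant of $u\mapsto u^{3/2}$ on $[h^2,1/4]$ and the bound $|\gamma|\le h^{-3}$. Both are uniform in $\mu$ because they are fixed by the definition of $\mathcal P$ rather than by any particular distribution, so I do not expect a real obstacle.
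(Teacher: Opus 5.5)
Your proposal is correct and follows the paper's proof, which simply applies Corollary~\ref{cor:upper-sample} (with $k=3$) and Theorem~\ref{thm:upper-polytime} using Proposition~\ref{prop:upper-mvs}. Your extra bookkeeping makes explicit what the paper leaves implicit: the bound $|R_3|\le h^{-3}/8+1$, absorbing $\eps^{-2}\log|\mathcal G|$ into $\eps^{-5}$ via $\log\lceil\eps^{-\kappa}\rceil\le\kappa\log(1/\eps)+\log 2$, and tracking that $R_{\max}$ and $C_{\mathrm{grid}}$ depend only on $h$.
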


\begin{proof}
Apply Corollary~\ref{cor:upper-sample} and Theorem~\ref{thm:upper-polytime} using Proposition~\ref{prop:upper-mvs}.
\end{proof}

\subsection{Quantile, CVaR}\label{sec:quantile-cvar}

Fix \(\tau\in(0,1)\) and constants \(0<h<1<H<\infty\).  Let \(\mathcal M_{h,H}\) be the class of probability laws on \([0,1]\) with densities \(f\) satisfying
\[
h\le f(y)\le H
\qquad\text{for almost every }y\in[0,1].
\]
We use the prediction space
\[
\mathcal P=[0,1]^2.
\]
For \(\mu\in\mathcal M_{h,H}\), define the \(\tau\)-quantile
\[
q_\tau(\mu)
:=
\inf\{q:\mu(Y\le q)\ge\tau\}.
\]
Use the upper-tail convention
\[
\CVaR_\tau(\mu)
:=
q_\tau(\mu)
+
\frac{\E_{\mu}(Y-q_\tau(\mu))_+}{1-\tau}.
\]
Consider the scoring function
\[
L_\tau(q,y)
:=
q+\frac{(y-q)_+}{1-\tau}.
\]
Over \(\mathcal M_{h,H}\), \(L_\tau\) is strictly consistent for \(q_\tau\), and its Bayes risk is \(\CVaR_\tau\)~\citep{noarov2023scope}.  Thus
\[
(q_\tau,\CVaR_\tau)
\]
is a Bayes pair.  This is the \(k=2\) case of the sequential framework with residual functions
\[
R_1(\varnothing,q,y)=\1\{y\le q\}-\tau,
\qquad
R_2(q,r,y)=r-L_\tau(q,y).
\]

We now construct a two-dimensional local witness.  For \(\lambda=(\lambda_1,\lambda_2)\) in a small neighborhood of \(0\in\R^2\), define the density
\[
f_\lambda(y)
=
\exp(\lambda_1y+\lambda_2y^2-A(\lambda)),
\qquad
0\le y\le1,
\]
where \(A(\lambda)\) is the log normalizer.  Let \(\mu_\lambda\) be the corresponding distribution.
Since \(f_0\equiv1\), we may choose the neighborhood of \(0\) small enough that \(\mu_\lambda\in\mathcal M_{h,H}\) throughout it.

Define
\[
\Psi(\lambda)
:=
\left(
q_\tau(\mu_\lambda),
\CVaR_\tau(\mu_\lambda)
\right).
\]
At \(\lambda=0\), \(\mu_0=\Unif[0,1]\), so
\[
\Psi(0)
=
\left(
\tau,\frac{1+\tau}{2}
\right).
\]

\begin{lemma}[Local Two-Dimensional Parameterization]\label{lem:qc-diffeo}
The Jacobian \(D\Psi(0)\) is nonsingular.  Consequently, there exists a nondegenerate rectangle
\[
\mathcal R_{\mathrm{QC}}
\subset
\operatorname{range}(\Psi)
\]
around \(\bigl(\tau,(1+\tau)/2\bigr)\) and a smooth inverse map
\[
\lambda=\lambda(v),
\qquad
v=(v_1,v_2)\in\mathcal R_{\mathrm{QC}},
\]
such that
\[
q_\tau(\mu_{\lambda(v)})=v_1,
\qquad
\CVaR_\tau(\mu_{\lambda(v)})=v_2.
\]
\end{lemma}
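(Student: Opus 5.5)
The plan is to show that \(\Psi\) is continuously differentiable near \(\lambda=0\), compute \(D\Psi(0)\) explicitly, check that its determinant is nonzero, and then invoke the inverse function theorem.

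\textbf{Smoothness of \(\Psi\).} The family \(f_\lambda\) is a two-parameter exponential family on \([0,1]\) with sufficient statistic \((y,y^2)\), so \(A\) is real-analytic and all moments are smooth in \(\lambda\). Write \(F_\lambda(y)=\int_0^y f_\lambda\). The quantile \(q(\lambda)=q_\tau(\mu_\lambda)\) is the unique solution of \(F_\lambda(q)=\tau\), since \(f_\lambda>0\). Because \(\partial_q F_\lambda(q)=f_\lambda(q)\ge h>0\), the implicit function theorem makes \(q(\lambda)\) smooth, with
\[
\partial_{\lambda_i}q=-\frac{\partial_{\lambda_i}F_\lambda(q)}{f_\lambda(q)}.
\]
At \(\lambda=0\) we have \(f_0\equiv 1\), and
\[
\partial_{\lambda_i}F_\lambda(y)\big|_0=\int_0^y\bigl(t^i-\E_{\mathrm{Unif}}Y^i\bigr)\,dt .
\]
Evaluating at \(y=\tau\) gives
\[
\partial_{\lambda_1}q(0)=\frac{\tau(1-\tau)}{2},
\qquad
\partial_{\lambda_2}q(0)=\frac{\tau(1-\tau^2)}{3}.
\]

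\textbf{CVaR coordinate.} I will use the Bayes-risk representation \(\CVaR_\tau(\mu_\lambda)=\min_q\E_{\mu_\lambda}L_\tau(q,Y)\), attained at \(q(\lambda)\). By the envelope theorem, or directly because the \(q\)-derivative of \(\E_\lambda L_\tau(q,Y)\) vanishes at \(q(\lambda)\), only the explicit \(\lambda\)-dependence contributes. Differentiating under the exponential family then gives
\[
\partial_{\lambda_i}\CVaR_\tau(\mu_\lambda)\big|_0=\frac{1}{1-\tau}\operatorname{Cov}_{\Unif}\bigl((Y-\tau)_+,Y^i\bigr).
\]
Set \(s=1-\tau\). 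A routine integration with \(u=y-\tau\) gives
\[
\partial_{\lambda_1}\CVaR(0)=\frac{s(1+2\tau)}{12},
\qquad
\partial_{\lambda_2}\CVaR(0)=s\Bigl(\frac{s^2}{4}+\frac{2\tau s}{3}+\frac{\tau^2}{2}-\frac16\Bigr)=\frac{s(1+\tau)^2}{12},
\]
where the last equality substitutes \(s=1-\tau\).

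\textbf{Determinant and conclusion.} The determinant is
\[
\frac{\tau s}{2}\cdot\frac{s(1+\tau)^2}{12}-\frac{\tau(1-\tau^2)}{3}\cdot\frac{s(1+2\tau)}{12}.
\]
After factoring and simplifying, this equals \(\tau(1-\tau)^3(1+\tau)/72\), which is strictly positive for \(\tau\in(0,1)\). Hence \(D\Psi(0)\) is nonsingular.

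The inverse function theorem then gives open neighborhoods \(U\ni 0\) and \(V\ni\Psi(0)\) such that \(\Psi:U\to V\) is a diffeomorphism with smooth inverse \(\lambda(\cdot)\). We shrink \(U\) so that it stays inside the neighborhood where \(\mu_\lambda\in\mathcal M_{h,H}\). Finally, we take \(\mathcal R_{\mathrm{QC}}\) to be any nondegenerate closed rectangle centered at \(\bigl(\tau,(1+\tau)/2\bigr)\) and contained in \(V\). The identities \(q_\tau(\mu_{\lambda(v)})=v_1\) and \(\CVaR_\tau(\mu_{\lambda(v)})=v_2\) then hold by construction.

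\textbf{Main obstacle.} The step most likely to go wrong is justifying the CVaR derivative. If the envelope argument feels delicate, the fallback is to differentiate \(q+\E_\lambda(Y-q)_+/(1-\tau)\) directly. The extra term \(\partial q\cdot\bigl(1-\mu_\lambda(Y>q)/(1-\tau)\bigr)\) vanishes at \(q=q(\lambda)\), because \(\mu_\lambda(Y>q(\lambda))=1-\tau\). The remaining risk is arithmetic in the determinant, which is worth double-checking symbolically.
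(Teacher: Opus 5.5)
Your proposal is correct and follows essentially the same route as the paper: the implicit function theorem gives smoothness of the quantile, you compute \(D\Psi(0)\) explicitly (your four entries and the determinant \(\tau(1-\tau)^3(1+\tau)/72\) agree exactly with the paper's), and the inverse function theorem finishes. Your envelope/first-order-condition argument for the CVaR row just spells out the step the paper summarizes as ``direct differentiation.''
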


\begin{proof}
Since \(f_\lambda(y)\) is smooth in \((\lambda,y)\) and strictly positive, the implicit equation \(\int_0^{q_\tau(\mu_\lambda)}f_\lambda(y)\,dy=\tau\) and the implicit function theorem show that \(\lambda\mapsto q_\tau(\mu_\lambda)\) is smooth near \(0\).  The defining integral for \(\CVaR_\tau(\mu_\lambda)\) is then smooth as well, so \(\Psi\) is smooth near \(0\).

At \(\lambda=0\), direct differentiation gives
\[
D\Psi(0)
=
\begin{pmatrix}
\dfrac{\tau(1-\tau)}2
&
\dfrac{\tau(1-\tau^2)}3
\\[1.1em]
\dfrac{(1-\tau)(2\tau+1)}{12}
&
\dfrac{(1-\tau)(\tau+1)^2}{12}
\end{pmatrix}.
\]
Its determinant is
\[
\det D\Psi(0)
=
\frac{\tau(1-\tau)^3(1+\tau)}{72}
>
0.
\]
The inverse function theorem gives the claim.
\end{proof}

For \(v\in\mathcal R_{\mathrm{QC}}\), define
\[
\mu_v:=\mu_{\lambda(v)}.
\]

It remains to check that the local parameterization from Lemma~\ref{lem:qc-diffeo} has the residual and KL properties required by Assumption~\ref{ass:witness}.

\begin{proposition}[Quantile, CVaR: Witness]\label{prop:qc-witness}
For \(\mathcal R_{\mathrm{QC}}\) sufficiently small, the family
\[
\{\mu_v:v\in\mathcal R_{\mathrm{QC}}\}
\]
is contained in \(\mathcal M_{h,H}\) and satisfies Assumption~\ref{ass:witness} for the property
\[
(q_\tau,\CVaR_\tau).
\]
\end{proposition}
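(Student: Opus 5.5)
We verify the five conditions of Assumption~\ref{ass:witness} in turn, shrinking \(\mathcal R_{\mathrm{QC}}\) as needed. First, membership in \(\mathcal M_{h,H}\): \(v\mapsto\lambda(v)\) is continuous with \(\lambda(v^\circ)=0\), where \(v^\circ=(\tau,(1+\tau)/2)\). Moreover \(f_\lambda\to1\) uniformly on \([0,1]\) as \(\lambda\to0\), and \(h<1<H\). Hence a small rectangle gives \(h\le f_{\lambda(v)}\le H\), and in fact uniform bounds \(h'\le f\le H'\) with \(h<h'<1<H'<H\). Condition~(i) is Lemma~\ref{lem:qc-diffeo}. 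We also keep \(\mathcal R_{\mathrm{QC}}\subset\operatorname{int}([0,1]^2)\), which holds near \(v^\circ\) since \(\tau\in(0,1)\) and \((1+\tau)/2\in(0,1)\).

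For (ii)--(iii) at the true prefix, the first coordinate is
\[
R_1(\varnothing,q,\mu_v)=F_v(q)-\tau=F_v(q)-F_v(v_1).
\]
For \(q\in[0,1]\) this equals \(\int_{v_1}^q f_v\), so it has the sign of \(q-v_1\) and magnitude between \(h|q-v_1|\) and \(H|q-v_1|\). The second coordinate is
\[
R_2(v_1,q,\mu_v)=q-\E_{\mu_v}L_\tau(v_1,Y)=q-v_2,
\]
using the definition of \(\CVaR_\tau\) with \(q_\tau(\mu_v)=v_1\). Thus it holds with constant \(1\).

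For (iv), only \(j=2\) matters. We have
\[
|R_2(p_1,p_2,\mu_v)-R_2(v_1,p_2,\mu_v)|=\bigl|\E_{\mu_v}[L_\tau(p_1,Y)-L_\tau(v_1,Y)]\bigr|.
\]
The map \(q\mapsto L_\tau(q,y)\) is Lipschitz with constant at most \(1+1/(1-\tau)\), which gives \(C_{\mathrm{prev}}=1+(1-\tau)^{-1}\).

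For (v), the main step is the KL bound. Since all densities lie in \([h,H]\), the \(\chi^2\) bound used for the skewness example gives
\[
\KL(\mu_v\|\mu_{v'})\le\int\frac{(f_v-f_{v'})^2}{f_{v'}}\le h^{-1}\|f_{\lambda(v)}-f_{\lambda(v')}\|_\infty^2 .
\]
The map \(\lambda\mapsto f_\lambda(y)\) is smooth, uniformly in \(y\in[0,1]\) (since \(A\) is smooth), so it is uniformly Lipschitz on a compact neighborhood. The inverse \(v\mapsto\lambda(v)\) is smooth on the compact rectangle, hence Lipschitz. Composing these gives \(C_{\mathrm{KL}}\).

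The main care point is making all shrinkings consistent. I would pick one compact \(\lambda\)-neighborhood on which the density bounds and smoothness hold, then choose \(\mathcal R_{\mathrm{QC}}\) inside the image of its interior under the local diffeomorphism. All constants then depend only on \(\tau,h,H\).
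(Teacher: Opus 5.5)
Your proof is correct and follows the paper's argument essentially step by step. You use the same CDF-increment and CVaR-identity computations for (ii)--(iii), and the same Lipschitz bound on $q\mapsto L_\tau(q,y)$ for (iv); your constant $1+(1-\tau)^{-1}$ is cruder than the paper's $\max\{1,\tau/(1-\tau)\}$ but valid. For (v) you likewise pull a KL bound in $\lambda$ back through the smooth inverse, the only minor variation being that you derive $\KL\lesssim\|\lambda-\lambda'\|_2^2$ from the $\chi^2$ bound and the density floor $h$ (as the paper does in the skewness example), whereas the paper appeals directly to the local quadratic behavior of KL in a regular exponential family.
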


\begin{proof}
By Lemma~\ref{lem:qc-diffeo}, \(\Gamma(\mu_v)=v\), as required by Assumption~\ref{ass:witness}(i).

By the choice of the parameter neighborhood, \(\mu_v\in\mathcal M_{h,H}\) for every \(v\in\mathcal R_{\mathrm{QC}}\).
Let \(F_v\) be the CDF of \(\mu_v\).  Since \(F_v(v_1)=\tau\),
\[
R_1(v_{<1},q,\mu_v)
=
F_v(q)-\tau
=
F_v(q)-F_v(v_1),
\qquad
R_2(v_1,r,\mu_v)=r-v_2.
\]
Therefore,
\[
\sgn(q-v_1)\left(R_1(v_{<1},q,\mu_v)-R_1(v_{<1},v_1,\mu_v)\right)
\ge
h|q-v_1|.
\]
The second residual evaluated at the true prefix satisfies
\[
\sgn(r-v_2)\left(R_2(v_1,r,\mu_v)-R_2(v_1,v_2,\mu_v)\right)
=
|r-v_2|.
\]
Thus Assumption~\ref{ass:witness}(ii) holds.  The bound in condition~(iii) follows from
\[
\left|R_1(v_{<1},q,\mu_v)-R_1(v_{<1},v_1,\mu_v)\right|
=
|F_v(q)-F_v(v_1)|
\le
H|q-v_1|
\]
and
\[
\left|R_2(v_1,r,\mu_v)-R_2(v_1,v_2,\mu_v)\right|
=
|r-v_2|.
\]

It remains to verify Assumption~\ref{ass:witness}(iv).  The first coordinate has no preceding predictions.  For the second coordinate, fix \(y\in[0,1]\).  The map
\[
q\mapsto L_\tau(q,y)
=
q+\frac{(y-q)_+}{1-\tau}
\]
is Lipschitz with constant
\[
L_\tau^*
:=
\max\left\{1,\frac{\tau}{1-\tau}\right\}.
\]
Hence
\[
\left|
\E_{\mu_v}L_\tau(p_1,Y)
-
\E_{\mu_v}L_\tau(v_1,Y)
\right|
\le
L_\tau^*|p_1-v_1|.
\]
Since
\[
R_2(p_1,p_2,\mu_v)-R_2(v_1,p_2,\mu_v)
=
-
\left(
\E_{\mu_v}L_\tau(p_1,Y)
-
\E_{\mu_v}L_\tau(v_1,Y)
\right),
\]
this verifies Assumption~\ref{ass:witness}(iv).

Finally, the family \(\{f_\lambda\}\) is a regular two-parameter exponential family with bounded sufficient statistics \(y\) and \(y^2\).  On a compact neighborhood of \(0\),
\[
\KL(\mu_\lambda\,\|\,\mu_{\lambda'})
\le
C\|\lambda-\lambda'\|_2^2.
\]
Because \(v\mapsto\lambda(v)\) is smooth on \(\mathcal R_{\mathrm{QC}}\),
\[
\KL(\mu_v\,\|\,\mu_{v'})
\le
C'\|v-v'\|_2^2.
\]
Thus Assumption~\ref{ass:witness}(v) holds.
\end{proof}

\begin{corollary}[Quantile, CVaR: Lower Bound]
Fix \(\tau\in(0,1)\), constants \(0<h<1<H<\infty\), and \(\kappa>0\).  There exist constants \(c,C,\eps_0>0\), depending only on \(\tau,h,H\), and \(\kappa\), such that the following holds for every \(0<\eps\le\eps_0\).  Let
\[
\Gamma=(q_\tau,\CVaR_\tau)
\]
be the property over \(\mathcal M_{h,H}\).  One can construct a finite context space, a binary group family of size at most \(\eps^{-\kappa}\), and a finite collection of \(\mathcal M_{h,H}\)-compatible data distributions, such that every learner achieving \(\Gamma\)-ECE at most \(\eps\) with probability at least \(2/3\) must use
\[
n
\ge
c\frac{\eps^{-4}}{\log^4(C/\eps)}.
\]
Equivalently,
\[
\SC^{(\kappa)}_{\Gamma,\mathcal M_{h,H}}(\eps)
=
\widetilde{\Omega}(\eps^{-4}).
\]
\end{corollary}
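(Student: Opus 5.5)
This corollary follows directly from Theorem~\ref{thm:lower-bound}, applied to the witness family of Proposition~\ref{prop:qc-witness}. The plan is to check that every hypothesis of the theorem holds for $\Gamma=(q_\tau,\CVaR_\tau)$ over $\mathcal M_{h,H}$ with $k=2$. Then the exponent $k+2=4$ and the logarithmic factor $\log^{4}(1/\eps)$ carry over from the theorem unchanged.

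\textbf{Step 1: the setting is sequentially conditionally identifiable.}
\begin{itemize}
\item The residuals are $R_1(\varnothing,q,y)=\1\{y\le q\}-\tau$ and $R_2(q,r,y)=r-L_\tau(q,y)$.
\item Both are jointly measurable and bounded on $\mathcal P=[0,1]^2\times[0,1]$, so integrability is automatic.
\item Level one is identified because every law in $\mathcal M_{h,H}$ has a continuous, strictly increasing CDF. Hence $F_\mu(q)=\tau$ holds exactly at $q=q_\tau(\mu)$.
\item Level two is identified because $R_2(q_\tau(\mu),r,\mu)=r-\CVaR_\tau(\mu)$, using the Bayes-risk identity stated above.
\end{itemize}

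\textbf{Step 2: the witness rectangle is admissible.}
\begin{itemize}
\item Assumption~\ref{ass:witness} requires $\mathcal R_{\mathrm{QC}}$ to be a nondegenerate compact rectangle inside $\operatorname{int}(\mathcal P)$.
\item Its center is $(\tau,(1+\tau)/2)$, and both coordinates lie strictly inside $(0,1)$ because $\tau\in(0,1)$.
\item Shrinking the rectangle from Lemma~\ref{lem:qc-diffeo} if necessary, it can be taken closed and interior to $\mathcal P$.
\item Proposition~\ref{prop:qc-witness} then supplies conditions (i)--(v), with $\mu_v\in\mathcal M_{h,H}$ for every $v$ in the rectangle.
\end{itemize}

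\textbf{Step 3: conclude from the theorem.}
\begin{itemize}
\item Each candidate distribution $\mathsf D_\theta$ has conditional laws $\mu_{v_\theta(a)}$, all of which lie in $\mathcal M_{h,H}$. So every $\mathsf D_\theta$ is $\mathcal M_{h,H}$-compatible.
\item The theorem gives a finite context space, a binary group family of size at most $\eps^{-\kappa}$, and the bound $n\ge c\,\eps^{-4}/\log^4(C/\eps)$.
\item The constants in the theorem depend on $k$, $\kappa$, the rectangle, and the constants of Assumption~\ref{ass:witness}. Here these are determined by $\tau,h,H$ through the construction in Lemma~\ref{lem:qc-diffeo}, so $c$, $C$ and $\eps_0$ depend only on $\tau,h,H,\kappa$.
\item The equivalent minimax statement is the remark following Theorem~\ref{thm:lower-bound}, specialized to this property and law class.
\end{itemize}

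The only mildly delicate point is this constant dependence. Once $\tau,h,H$ are fixed, the exponential-family neighborhood and the inverse map $v\mapsto\lambda(v)$ are fixed choices, so this is bookkeeping rather than a real obstacle.
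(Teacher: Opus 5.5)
Your proposal is correct and matches the paper's proof, which is simply Theorem~\ref{thm:lower-bound} applied with $k=2$ to the witness family of Proposition~\ref{prop:qc-witness}. Your extra checks (identifiability, the rectangle lying in $\operatorname{int}(\mathcal P)$, $\mathcal M_{h,H}$-compatibility of each $\mathsf D_\theta$, and the dependence of the constants on $\tau,h,H,\kappa$) are the bookkeeping left implicit in that one line. One minor slip: the residuals are bounded on $\mathcal P\times\mathcal Y=[0,1]^2\times[0,1]$, not on ``$\mathcal P=[0,1]^2\times[0,1]$''.
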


\begin{proof}
Apply Theorem~\ref{thm:lower-bound} using Proposition~\ref{prop:qc-witness}.
\end{proof}

For the upper bound, we verify the required conditions over the same class \(\mathcal M_{h,H}\).

\begin{proposition}[Quantile, CVaR: Conditions for the Upper Bound]\label{prop:upper-qc}
The regularity conditions in Assumption~\ref{ass:upper-analytic} hold over \(\mathcal M_{h,H}\).  Moreover, the linear optimization oracle in Assumption~\ref{ass:upper-oracle} has a polynomial-time implementation to any additive accuracy \(\xi>0\).
\end{proposition}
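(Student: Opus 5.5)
The plan mirrors the verifications in Propositions~\ref{prop:upper-mad} and~\ref{prop:upper-mvs}. There are three regularity conditions to check and then an oracle to implement.

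\emph{Condition (i).} View \(\mathcal M_{h,H}\) as a subset of finite signed measures on \([0,1]\) with the weak topology. Convexity is immediate, since the density bounds \(h\le f\le H\) are preserved under mixtures. For compactness, the set of densities in \([h,H]\) is weak\(^*\) compact in \(L^\infty\). Moreover, the weak limit of such laws still has a density in \([h,H]\): this holds because \(h|I|\le\mu(I)\le H|I|\) for intervals \(I\) passes to limits at continuity intervals, and every interval is a continuity interval since the limit is absolutely continuous.

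Affinity of \(\mu\mapsto R_j(p_{<j},p_j,\mu)\) is clear. For continuity:
\begin{itemize}
\item \(R_2\) integrates the bounded continuous function \(L_\tau(q,\cdot)\), so it is weakly continuous.
\item \(R_1=\mu([0,q])-\tau\) involves an indicator, but weak convergence within \(\mathcal M_{h,H}\) implies CDF convergence at every point, because limits put no mass on \(\{q\}\). So it is continuous on \(\mathcal M_{h,H}\).
\end{itemize}

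\emph{Condition (ii).} This is immediate: \(|R_1|\le1\) and \(|R_2|\le 1+1+1/(1-\tau)\) on \([0,1]^2\times[0,1]\).

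\emph{Condition (iii).} I would use the Lipschitz remark. The map \(q\mapsto F_\mu(q)\) is \(H\)-Lipschitz, \(q\mapsto\E_\mu L_\tau(q,Y)\) is \(L_\tau^*\)-Lipschitz, and \(r\mapsto R_2\) is 1-Lipschitz. A uniform grid of mesh \(1/Q\) on \([0,1]^2\) then works. Since \(\Gamma(\mu)\in[0,1]^2\) always has \(R(\Gamma(\mu),\mu)=0\), rounding gives \(\|R(p,\mu)\|_\infty\le C/Q\) with \(O(Q^2)\) grid points.

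\emph{The oracle.} This is the main obstacle, because \(\mathcal M_{h,H}\) is infinite-dimensional. Given coefficients, set
\[
\phi(y)=\sum_p\Bigl[c_{p,1}\bigl(\1\{y\le q_p\}-\tau\bigr)+c_{p,2}\bigl(r_p-L_\tau(q_p,y)\bigr)\Bigr],
\]
so the task is to maximize \(\int\phi f\) subject to \(h\le f\le H\) and \(\int f=1\). This is a continuous knapsack problem. The optimal \(f\) equals \(H\) where \(\phi>c^*\) and \(h\) where \(\phi<c^*\), with threshold \(c^*\) chosen so that the total mass is 1.

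The function \(\phi\) is piecewise affine, with breakpoints and jumps only at the grid values \(q_p\), of which there are at most \(Q+1\). On each piece the superlevel set \(\{\phi>c\}\) is an interval computable in closed form. Hence the mass function \(m(c)=h+(H-h)\,|\{\phi>c\}|\) is monotone and piecewise linear with polynomially many pieces, so \(c^*\) can be found by sorting candidate values, or by bisection to accuracy \(\xi\).

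If \(\phi\) has a level plateau at \(c^*\), fill part of it at an intermediate density value; this remains admissible. The output is a piecewise-constant density with polynomially many pieces. Its objective value and residual expectations \(F(q_p)\) and \(\E L_\tau(q_p,Y)\) are explicit integrals of piecewise-affine functions, so they can be evaluated exactly in polynomial time. Bisection contributes only \(\log(1/\xi)\) steps, giving the claimed accuracy-\(\xi\) oracle.
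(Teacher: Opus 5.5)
Your proposal is correct and follows the paper's proof essentially step for step: weak-topology continuity of $R_1$ via the absence of atoms, the Lipschitz rounding argument for (iii), and a bathtub-principle oracle that thresholds the piecewise affine $\phi$ (by sorting or bisection) and fractionally fills any plateau. One small fix in your compactness aside: as phrased it assumes the limit is absolutely continuous in order to prove it. Either use Portmanteau with open and closed intervals, or simply note that the weak$^*$-compact density set maps continuously onto $\mathcal M_{h,H}$ in the weak topology.
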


\begin{proof}
Recall that
\[
R((q,r),y)=\left(\1\{y\le q\}-\tau,\ r-L_\tau(q,y)\right).
\]
The class \(\mathcal M_{h,H}\) is convex and compact in the weak topology.  For each fixed \((q,r)\), the map
\[
\mu\mapsto F_\mu(q)-\tau
\]
is affine and continuous on \(\mathcal M_{h,H}\), because every law in this class has no atom at \(q\).  The map
\[
\mu\mapsto \E_\mu[r-L_\tau(q,Y)]
\]
is affine and continuous because \(L_\tau(q,\cdot)\) is bounded and continuous.  These observations verify Assumption~\ref{ass:upper-analytic}(i).  Condition~(ii) also holds because \(q,r,y\in[0,1]\) and \(\tau\in(0,1)\).

To verify Assumption~\ref{ass:upper-analytic}(iii), let \(F_\mu\) be the CDF of \(\mu\).  For \(q,q'\in[0,1]\),
\[
\left|R_1(\varnothing,q,\mu)-R_1(\varnothing,q',\mu)\right|
=
|F_\mu(q)-F_\mu(q')|
\le H|q-q'|.
\]
Also, the map \(q\mapsto L_\tau(q,y)\) is Lipschitz uniformly in \(y\), with a constant depending only on \(\tau\).
Therefore \(R(\cdot,\mu)\) is uniformly Lipschitz in \((q,r)\).  A rectangular grid with mesh \(O(1/Q)\) satisfies the required approximation bound and has \(O(Q^2)\) points.

It remains to implement the oracle.  Given coefficients \(c_{p,j}\), write each grid point as \(p=(q_p,r_p)\).  The integrand
\[
\phi(y)=\sum_{p\in\mathcal P_Q}
\left[
 c_{p,1}(\1\{y\le q_p\}-\tau)
 +c_{p,2}\bigl(r_p-q_p-(y-q_p)_+/(1-\tau)\bigr)
\right]
\]
is affine on each open interval between consecutive distinct grid values \(q_p\), and it may have jump discontinuities at those values.  Since there are only finitely many such points, their values do not affect the oracle objective.  The oracle must maximize
\[
\int_0^1 \phi(y)f(y)\,dy
\qquad\text{subject to}\qquad
h\le f\le H,
\quad \int_0^1 f=1.
\]
Write \(f=h+(H-h)u\), where \(0\le u\le1\) and
\[
\int_0^1u(y)\,dy=\frac{1-h}{H-h}=:m_0.
\]
The objective differs by the constant \(h\int\phi\) from maximizing \(\int\phi u\) over \(0\le u\le1\) with mass \(m_0\).  For a threshold \(t\), let
\[
A(t):=\int_0^1\1\{\phi(y)>t\}\,dy,
\qquad
B(t):=\int_0^1\1\{\phi(y)\ge t\}\,dy.
\]
Choose \(t\) such that \(A(t)\le m_0\le B(t)\), and set
\[
\gamma
:=
\begin{cases}
\dfrac{m_0-A(t)}{B(t)-A(t)},&B(t)>A(t),\\[0.8em]
0,&B(t)=A(t).
\end{cases}
\]
Then
\[
u_t(y)
:=
\1\{\phi(y)>t\}
+
\gamma\1\{\phi(y)=t\}
\]
has mass \(m_0\).  It is optimal by the bathtub principle, which places the available mass where \(\phi\) is largest.  This includes the case in which \(\phi\) is constant on an interval: \(\gamma\) supplies exactly the required fraction of that level set.

Because \(\phi\) has \(O(|\mathcal P_Q|)\) affine pieces, \(A(t)\) is affine on each interval between consecutive critical values.  These critical values are the values of \(\phi\) at the endpoints of its affine pieces, together with the values on any constant pieces.  Sorting them and solving one linear equation locates an admissible \(t\) and \(\gamma\) in polynomial time.  The resulting density \(f_t=h+(H-h)u_t\) is piecewise constant on \(O(|\mathcal P_Q|)\) intervals, which gives a polynomial-size representation.  Integrating the residuals interval by interval computes the objective and all residual expectations to additive accuracy \(\xi\) in time polynomial in \(|\mathcal P_Q|\) and \(\log(1/\xi)\).  This implements Assumption~\ref{ass:upper-oracle}.
\end{proof}

\begin{corollary}[Quantile, CVaR: Upper Bound]\label{cor:upper-qc}
Fix \(\tau\in(0,1)\), constants \(0<h<1<H<\infty\), and \(\kappa>0\).  There exist constants \(C,\eps_0>0\), depending only on \(\tau,h,H\), and \(\kappa\), such that the following holds for every \(0<\eps\le\eps_0\).  Let
\[
\Gamma=(q_\tau,\CVaR_\tau)
\]
be the property over \(\mathcal M_{h,H}\).  For every \(\mathcal M_{h,H}\)-compatible data distribution \(\mathsf D\) and every finite group family satisfying \(|\mathcal G|\le\lceil\eps^{-\kappa}\rceil\), there is a randomized learner using at most \(C\eps^{-4}\) samples whose output \(\Pi_S\) satisfies
\[
\Prob\left[
\MCErr^\Gamma_{\mathsf D}(\Pi_S;\mathcal G)\le\eps
\right]
\ge
\frac23.
\]
The oracle in Proposition~\ref{prop:upper-qc} gives training time polynomial in \(1/\eps\).
\end{corollary}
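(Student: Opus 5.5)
The plan is to obtain Corollary~\ref{cor:upper-qc} exactly as the two earlier upper-bound corollaries were obtained. Proposition~\ref{prop:upper-qc} already supplies the conditions needed by the general machinery, so we apply Corollary~\ref{cor:upper-sample} for the sample complexity and Theorem~\ref{thm:upper-polytime} for the running time.

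\textbf{Sample complexity.} First, we check the hypotheses. Proposition~\ref{prop:upper-qc} gives Assumption~\ref{ass:upper-analytic} over \(\mathcal M_{h,H}\) with \(k=2\). The constants \(R_{\max}\) and \(C_{\mathrm{grid}}\) depend only on \(\tau,h,H\): the residuals are bounded by \(\max\{1,1+1/(1-\tau)\}\), and the Lipschitz constants are \(H\) and \(L_\tau^*\). Any \(\mathcal M_{h,H}\)-compatible \(\mathsf D\) meets the compatibility requirement of Theorem~\ref{thm:upper-batch}.

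Next, we take \(\rho=\eps/6\), so Corollary~\ref{cor:upper-sample} applies with \(Q=\Theta(1/\eps)\). It gives a learner whose output satisfies \(\MCErr^\Gamma_{\mathsf D}(\Pi_S;\mathcal G)\le\eps\) with probability at least \(2/3\), using
\[
n\ge C\bigl(\eps^{-4}+\eps^{-2}\log|\mathcal G|\bigr)
\]
samples. Under \(|\mathcal G|\le\lceil\eps^{-\kappa}\rceil\), we have \(\log|\mathcal G|\le(\kappa+1)\log(2/\eps)\). Hence \(\eps^{-2}\log|\mathcal G|\le\eps^{-4}\) once \(\eps\le\eps_0(\kappa)\), and the total is at most \(C\eps^{-4}\) with \(C\) depending on \(\tau,h,H,\kappa\).

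\textbf{Running time.} Proposition~\ref{prop:upper-qc} provides the oracle of Assumption~\ref{ass:upper-oracle} to arbitrary accuracy \(\xi\), in time polynomial in \(|\mathcal P_Q|\) and \(\log(1/\xi)\). Theorem~\ref{thm:upper-polytime} then gives training time polynomial in \(n\), \(Q^2\), \(|\mathcal G|\) and \(\log(6/\eps)\). Each of these is polynomial in \(1/\eps\) for fixed \(\kappa\).

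\textbf{Main obstacle.} The only delicate point is that this oracle is approximate rather than exact, unlike the two earlier examples. Two things need confirming. First, the bathtub construction's objective and residual evaluations at accuracy \(\xi\) are what Theorem~\ref{thm:upper-polytime} requires. Second, the weak ellipsoid tolerances keep the overall minimax error at most \(\rho=\eps/6\). Both are already built into the statement of Theorem~\ref{thm:upper-polytime}, so we invoke it directly.
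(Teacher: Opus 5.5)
Your proposal is correct and matches the paper's proof, which simply applies Corollary~\ref{cor:upper-sample} (with $\rho=\eps/6$) and Theorem~\ref{thm:upper-polytime} using the conditions and oracle supplied by Proposition~\ref{prop:upper-qc}. Your extra bookkeeping is accurate and makes explicit what the paper leaves implicit: absorbing $\eps^{-2}\log|\mathcal G|$ into $\eps^{-4}$ under $|\mathcal G|\le\lceil\eps^{-\kappa}\rceil$, and noting that a $\xi$-approximate oracle is exactly what Assumption~\ref{ass:upper-oracle} permits.
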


\begin{proof}
Apply Corollary~\ref{cor:upper-sample} and Theorem~\ref{thm:upper-polytime} using Proposition~\ref{prop:upper-qc}.
\end{proof}

\subsection*{Statement of AI Use}

GPT-5.6 Sol Pro was used during manuscript preparation for brainstorming, drafting, editing, and formatting assistance, including preliminary drafts of proof arguments. The LLM-generated material was not used without author review. All proof arguments were independently checked, corrected, and finalized by the authors, who take full responsibility for the correctness, originality, and presentation of the paper.

\bibliographystyle{plainnat}
\bibliography{refs-without-doi-url-isbn}

\appendix
\section{Comparison with Multi-Level Stochastic Optimization}\label{sec:optcomparison}
Here, we discuss the difference between $k$-level multicalibration problem and the well-studied $k$-level stochastic composite optimization problem. The key distinction is that the levels are \emph{sequential} in
compositional optimization but \emph{Cartesian} in multicalibration.

A \(k\)-level composition optimization problem involves solving
\[
    F(x)=f_1\circ f_2\circ\cdots\circ f_k(x),
\]
given access to stochastic evaluations of the gradients and function values of functions $f_j$. In this setup, typically, the quantity of interest is a fixed point \(x\) that determines only one chain of intermediate values,
\[
    y_k=x,
    \qquad
    y_{j-1}=f_j(y_j),
    \qquad
    F(x)=f_1(y_1).
\]
The optimization algorithm must estimate and track the quantities along
this chain, but its statistical goal is still to find a single point
\(\widehat{x}_N\) (where $N$ is the sample or iteration complexity) satisfying a first-order condition, such as
\[
    \mathbb{E}\bigl[\|\nabla F(\widehat{x}_N)\|\bigr]
    \leq \varepsilon.
\]
It is not required to certify stationarity separately for every possible
vector of intermediate values \((y_1,\ldots,y_{k-1})\).  Smoothness allows
the estimation errors arising at the different levels to be propagated
along the chain and controlled on the same stochastic time scale.  For
the stationarity criterion considered in the multi-level optimization
literature, the resulting bound has the schematic form
\[
    \mathbb{E}\bigl[\|\nabla F(\widehat{x}_N)\|^2\bigr]
    \lesssim N^{-1/2},
    \qquad\text{and hence}\qquad
    \mathbb{E}\bigl[\|\nabla F(\widehat{x}_N)\|\bigr]
    \lesssim N^{-1/4}.
\]
Thus \(N\gtrsim\varepsilon^{-4}\) suffices, independently of the number
of compositional levels in the exponent of \(\varepsilon\).  Additional
levels make the gradient estimator more involved, but they do not create
additional \(\varepsilon\)-scale locations at which the desired guarantee
must hold.  This sequential propagation of estimation error also underlies
level-independent accuracy rates in statistical and sample-average
analyses of compound objectives
\citep{dentcheva2017statistical,ermoliev2013sample}, as well as in
stochastic algorithms for nested composition problems
\citep{ruszczynski2021stochastic,chen2021solving,balasubramanian2022stochastic}.

Multicalibration has a different geometry.  The predictor outputs a
prediction vector
\[
    p=(p_1,\ldots,p_k)\in\mathcal P_1\times\cdots\times\mathcal P_k,
\]
and calibration is imposed after conditioning on the \emph{entire} prediction
vector.  Although the residuals are sequentially conditionally
identifiable, the collection of prediction values over which calibration must
be controlled is a Cartesian product.  Discretizing each coordinate at
resolution \(Q^{-1}\) therefore produces
\[
    |\mathcal P_Q|\asymp Q^k
\]
prediction values.  Correspondingly, the statistical error has the form
\[
    \operatorname{MCErr}^{\Gamma}_{D}(\Pi;\mathcal G)
    \lesssim
    \frac{1}{Q}
    +
    \sqrt{\frac{Q^k+\log|\mathcal G|}{n}}.
\]
The difference from ordinary mean estimation can be seen directly from
the second term.  Suppose, for intuition, that the \(M=Q^k\) prediction
buckets have comparable probability.  A bucket then contains about
\(n/M\) observations.  Its conditional residual is estimated with error
of order \(\sqrt{M/n}\); after multiplying by the bucket probability
\(1/M\), its contribution to the calibration error is of order
\(1/\sqrt{Mn}\).  Since the calibration error sums absolute residual
contributions over all \(M\) buckets, their total stochastic contribution
is
\[
    M\cdot\frac{1}{\sqrt{Mn}}
    =
    \sqrt{\frac{M}{n}}
    =
    \sqrt{\frac{Q^k}{n}}.
\]
To make the discretization error at most \(\varepsilon\), one takes
\(Q\asymp\varepsilon^{-1}\).  Requiring the stochastic term to be at most
\(\varepsilon\) then gives
\[
    \sqrt{\frac{Q^k}{n}}\lesssim\varepsilon
    \qquad\Longrightarrow\qquad
    n\gtrsim\frac{Q^k}{\varepsilon^2}
    \asymp
    \varepsilon^{-(k+2)}.
\]
Thus the factor \(\varepsilon^{-2}\) is the usual cost of estimating
expectations to accuracy \(\varepsilon\), while the additional factor
\(\varepsilon^{-k}\) is the number of distinguishable prediction values at
that resolution.  Each additional calibrated level adds another prediction
coordinate, multiplies the size of the joint grid by
\(\varepsilon^{-1}\), and therefore increases the exponent by one.

The lower bound confirms that this difference is intrinsic rather than an
artifact of discretization.  At resolution \(Q^{-1}\), one can encode
independent perturbations of size \(\Theta(Q^{-1})\) across
\(\Theta(Q^k)\) grid points.  The logarithm of the number of
distinguishable alternatives is then \(\Omega(Q^k)\), whereas one sample
contributes only \(O(Q^{-2})\) Kullback--Leibler information.  Fano's
inequality consequently requires
\[
    nQ^{-2}\gtrsim Q^k,
    \qquad\text{or equivalently}\qquad
    n\gtrsim Q^{k+2}.
\]
The reduction from calibration error to prediction accuracy loses a factor
of \(\log Q\), so a construction at resolution \(Q^{-1}\) applies when
\(\varepsilon\asymp 1/(Q\log Q)\).  Thus
\[
    Q\asymp \frac{1}{\varepsilon\log(1/\varepsilon)},
    \qquad
    n\gtrsim
    \frac{\varepsilon^{-(k+2)}}{\log^{k+2}(1/\varepsilon)}
    =\widetilde{\Omega}\!\left(\varepsilon^{-(k+2)}\right).
\]

In summary, stochastic
composition optimization follows one nested chain and asks for
stationarity at one output point, whereas multicalibration must resolve
and control a \(k\)-dimensional Cartesian grid of prediction values.

\section{Measurability of the Online Forecaster}\label{app:upper-measurability}

We justify that the prediction rules in Algorithm~\ref{alg:upper-online} can
be chosen jointly measurably.  For the fixed grid \(\mathcal P_Q\), define
the finite-dimensional residual image
\[
\mathcal K_Q
:=
\left\{
\bigl(R_j(p_{<j},p_j,\mu)\bigr)_{(p,j)\in\mathcal P_Q\times[k]}
:
\mu\in\mathcal M
\right\}
\subseteq
\R^{k|\mathcal P_Q|}.
\]
Assumption~\ref{ass:upper-analytic}(i) implies that \(\mathcal K_Q\) is
compact, because it is the continuous image of the compact set
\(\mathcal M\).  For \(w\in\Delta(\mathcal G\times\Sigma)\),
\(z=(z_g)_{g\in\mathcal G}\in[0,1]^{\mathcal G}\),
\(\pi\in\Delta(\mathcal P_Q)\), and
\(r=(r_{p,j})_{p,j}\in\mathcal K_Q\), set
\[
F(\pi,r;w,z)
:=
\sum_{p\in\mathcal P_Q}\pi_p
\sum_{g\in\mathcal G}\sum_{s\in\Sigma}
w(g,s)z_g
\sum_{j=1}^k s_{p,j}r_{p,j}.
\]
This function is jointly continuous on finite-dimensional compact spaces.
Consequently,
\[
\Phi(\pi;w,z)
:=
\max_{r\in\mathcal K_Q}F(\pi,r;w,z),
\qquad
V(w,z)
:=
\min_{\pi\in\Delta(\mathcal P_Q)}\Phi(\pi;w,z),
\]
are continuous by the maximum theorem.  For every \(\rho\ge0\), the
correspondence
\[
\mathcal A_\rho(w,z)
:=
\left\{
\pi\in\Delta(\mathcal P_Q):
\Phi(\pi;w,z)\le V(w,z)+\rho
\right\}
\]
has nonempty compact values and a measurable graph.  The measurable
selection theorem therefore provides a Borel map
\(a_\rho(w,z)\in\mathcal A_\rho(w,z)\).

At round \(t\), take
\[
z(x):=(g(x))_{g\in\mathcal G},
\qquad
\pi_t(\cdot\mid x):=a_\rho(w_t,z(x)).
\]
The map \(x\mapsto z(x)\) is measurable, and \(w_t\) is a measurable
function of the cumulative gains through round \(t-1\).  Hence
\((\text{history},x)\mapsto\pi_t(\cdot\mid x)\) is jointly measurable and
satisfies \eqref{eq:upper-minimax}.
\end{document}